\documentclass{article} 
\usepackage{iclr2027_conference_preprint,times}

\usepackage[utf8]{inputenc}
\usepackage[T1]{fontenc}
\usepackage{booktabs}
\usepackage{nicefrac}
\usepackage{microtype}
\usepackage[dvipsnames]{xcolor}
\usepackage{graphicx}
\usepackage{newfloat}
\usepackage{listings}
\usepackage{algorithm}
\usepackage{algorithmicx}
\usepackage{wrapfig}
\usepackage{mathtools}
\usepackage{multirow}
\usepackage{bbm}
\usepackage{amssymb}
\usepackage{amsthm}
\usepackage{algpseudocode}
\usepackage{subcaption}
\usepackage{longtable}
\usepackage{array}
\usepackage[normalem]{ulem}

\usepackage{amsmath,amsfonts,bm}

\def\eqref#1{equation~\ref{#1}}

\def\1{\bm{1}}

\DeclareMathAlphabet{\mathsfit}{\encodingdefault}{\sfdefault}{m}{sl}
\SetMathAlphabet{\mathsfit}{bold}{\encodingdefault}{\sfdefault}{bx}{n}

\usepackage{comment}
\usepackage{hyperref}
\usepackage{url}
\usepackage{enumitem}
\usepackage{placeins}
\usepackage{tcolorbox}
\tcbuselibrary{breakable,skins}
\usepackage{xcolor}
\usepackage{cleveref}

\newtheorem{definition}{Definition}
\newtheorem{lemma}{Lemma}
\newtheorem{theorem}{Theorem}

\newtheorem{corollary}{Corollary}

\title{%
  Data-Topology Compatibility Enables \\
  Personalization in Decentralized Learning}

\iclrfinalcopy

\author{%
  Veronica Xu\\
  University of Edinburgh\\
  \And
  Lingfei Wang\\
  KTH\\
  \And
  Chenlei Leng\\
  PolyU\\
  \And
  Fengxiang He\\
  University of Edinburgh\\
}

\begin{document}

\maketitle

\begin{abstract}
Decentralized learning must balance collaboration with personalization across heterogeneous agents. This paper studies when cooperation can support statistically coherent groups of personalized models. We show that data-topology compatibility enables robust orbit-wise personalization: when local objectives and communication topology have (approximately) compatible automorphisms, agents within an orbit share (approximately) equal limiting distributions, while different orbits remain distinct without explicit personalization. This creates a new intermediate regime between global consensus and independent local learning. Under exact data-topology compatibility, as a reference system, we decompose the stochastic learning dynamics into orbit-level and transverse components, derive a necessary and sufficient stability criterion for distributional synchronization, and characterize the orbit-structured stationary law. Modelling an approximate data-topology compatible system as an \(\varepsilon\)-perturbed reference system, we prove the introduced errors have uniform-in-time and stationary \(O(\varepsilon)\) bounds in 2-Wasserstein distance. We also prove sufficient conditions for lower stationary average excess risk than both independent local training and the best single shared parameter. Experiments on synthetic data, MNIST, and CIFAR-10 are in full agreement with the theory. The code is available at \url{https://github.com/Veronicaaa1/orbit-personalization}.
\end{abstract}

\section{Introduction}

Decentralized learning allows agents to train through local computation and neighbor communication without sharing raw data \citep{nedic2009distributed}. When local objectives differ, however, cooperation must accommodate competing learning needs \citep{nassif2020multitaskII, zantedeschi2020fully, le_bars2023refined}. Independent training preserves each agent’s 
local objective but forgoes the benefits of collaboration \citep{lian2017can, koloskova2020unified,even2022sample}, whereas a single shared model can compromise performance across heterogeneous tasks \citep{hsieh2020non, le_bars2023refined}. Communication can reduce stochastic fluctuations \citep{xiao2007distributed, kar2008distributed,bamieh2012coherence}, yet coupling agents with different local optima can also introduce bias \citep{chen2015diffusion, nassif2020multitaskII, koloskova2020unified}. This raises a fundamental question: 
\begin{quote}
\it\centering
Under what conditions can local updates and neighbor communication produce statistically coherent groups of personalized models that benefit from cooperation?
\end{quote}
Answering this question requires understanding how local learning tasks and communication structure jointly determine which agents develop similar behavior, which remain distinct, and when these differences improve learning. This paper proves:
\begin{quote}
\it
\centering
(Approximately) compatible {data-topology} automorphism enables {robust orbit-wise personalization} in heterogeneous decentralized learning. 
\end{quote}

In an automorphic communication topology, a permutation of agents preserves the weighted communication graph, while associated orbits collect agents that can be mapped to one another. Exact data-topology compatibility additionally requires local objective characteristics, such as curvatures, noise intensities, and minimizers, to agree within the topology's algebraic structure; approximate compatibility allows small deviations from such a reference system. 
Under suitable stability conditions, agents within an orbit develop common stationary parameter marginals, while different orbits retain distinct distributions. This agreement is distributional: stochastic noise may sustain differences between individual parameter trajectories. 
The resulting organization provides a new intermediate regime between global consensus and independent local learning, without an explicit personalization step (see relevant works in Appendix~\ref{app:additional_literature}). Moreover, agents can exhibit this statistical agreement even when they have no direct communication edges \citep{nicosia2013remote}, reflecting the role of their positions in the wider network.

We develop the theory in three steps. 
For an exactly compatible system, as reference, we decompose the dynamics into orbit-level motion and transverse disagreement within orbits. This decomposition yields a necessary and sufficient transverse-stability criterion for distributional orbit synchronization, together with convergence rates for the transverse mean and covariance. 
We then model an approximate compatible system, the general case, by applying an \(\varepsilon\) perturbation to the local curvatures, minimizers, noise intensities, and communication weights of the corresponding reference system. 
Uniform-in-time and stationary \(O(\varepsilon)\) bounds in 2-Wasserstein distance \citep{villani2009optimal} are proved, which imply approximate agreement within reference orbits and explicit thresholds for preserving separation across them. Finally, by separating coupling bias from stationary fluctuation cost, we derive sufficient conditions under which decentralized learning achieves lower stationary excess risk than both independent local training and the best single shared parameter. Both advantages persist under sufficiently small perturbations, with guaranteed gains reduced by at most \(O(\varepsilon)\).

We examine our theory through synthetic systems and neural-network experiments on a communication graph with fourteen agents and four orbits. Quadratic experiments verify the predicted stationary distributions and show that orbit separation and both risk advantages persist across all 120 tested perturbation configurations. A smooth nonquadratic example exhibits similar qualitative grouping beyond the affine setting. Experiments with multi-layer perceptron (MLP) and ResNet \citep{he2016deep} models on MNIST \citep{lecun1998gradient} and CIFAR-10 \citep{krizhevsky2009learning} reveal corresponding prediction patterns: 
agents within reference orbits make relatively similar predictions, while differences remain across orbits, despite the absence of direct communication within each orbit. Comparisons with independent training and misaligned data allocations further show that both communication and data-topology alignment contribute to this organization. 
\subsection{Related works}
\label{sec:related_work}

\paragraph{Algebraic structures of communication topology.}
The literature of decentralised learning has seen studies on graph automorphisms~\citep{MacArthur2008}, symmetry-based reductions~\citep{SanchezGarcia2020}, and graph partitions for reduced network dynamics~\citep{schaub2016graph}, showing how communication topology induces invariant subspaces and reduced collective dynamics. Related group-consensus and multiconsensus results identify synchronized groups under suitable dynamics or control protocols~\citep{Klickstein2019,Tomaselli2023}, while cluster-synchronization theory studies quotient dynamics and transverse stability of symmetry- or partition-induced clusters~\citep{pecora2014cluster,schaub2016graph}. These works provide the algebraic foundation for our orbit-level decomposition, but mainly concern structural network analysis, deterministic consensus, oscillator synchronization, or controlled multiconsensus. Our work builds on these tools to characterize how local learning
tasks and communication roles jointly organize stationary parameter
distributions, connecting orbit-based coordination to statistical
behavior in decentralized learning.

\paragraph{Stochastic and non-consensus multi-agent dynamics.}
Stochastic consensus analyses establish agreement under noisy
measurements \citep{huang2009coordination}, time-varying topologies
with communication noise \citep{li2010consensus}, and random
communication networks \citep{tahbaz2010consensus}.
Non-consensus also arises in bounded-confidence opinion models,
which can produce opinion clusters \citep{Deffuant2000} and
polarization or fragmentation \citep{Hegselmann2002}.
Other stochastic opinion models show that random interactions with
agents holding fixed, different opinions can sustain persistent
fluctuations and disagreement while the joint state converges
in distribution \citep{acemoglu2013opinion}.
Network symmetries further organize time-averaged observables
in stochastic network models, yielding common long-run statistics
within automorphism orbits \citep{Sorrentino2019}.
Our analysis identifies when within-orbit statistical agreement
and cross-orbit separation persist under approximate data-topology
compatibility, and when this orbit-wise personalization benefits
decentralized learning.

More related works are given in Appendix~\ref{app:additional_literature}.
\section{Preliminaries}
\label{preliminaries}

\paragraph{Decentralized learning.}
Let \(\mathcal G=(\mathcal V,\mathcal E)\), with
\(\mathcal V=\{1,\dots,N\}\), be an undirected communication graph
over which \(N\) agents train without a central server. Each agent \(i\) updates its local parameter \(w_i(t)\in\mathbb R^m\)
by combining local stochastic-gradient steps based on samples from its
own data distribution \(\mathcal D_i\) , and exchanges information only with adjacent agents. For any local parameter value \(u\in\mathbb R^m\), agent \(i\)'s local population risk is \(L_i(u)=\mathbb E_{\xi\sim\mathcal D_i}[\ell(u;\xi)]\), where \(\xi\) is a local sample and \(\ell(u;\xi)\) is the sample loss. 
Around a reference local minimizer \(w_i^\star\in\mathbb R^m\) of \(L_i\), we use the quadratic approximation
\(\widetilde L_i(u)=L_i(w_i^\star)+\frac12(u-w_i^\star)^\top
Q_i(u-w_i^\star)\), where \(Q_i=Q_i^\top\succeq0\) is the local curvature matrix, and hence the local 
gradient model
\(\nabla\widetilde L_i(u)=Q_i(u-w_i^\star)\). Local quadratic and stochastic-process
approximations are standard tools for studying SGD near minima
\citep{mandt2017sgd, he2019control,liu2021noise}.
A conventional equal-node-weighted centralized reference is the single-model objective \(L_{\rm glob}(u)=N^{-1}\sum_{i=1}^N L_i(u)\), which optimizes one shared parameter \(u\) against the average of all agents' local population risks. In contrast, our personalized analysis keeps node-specific parameters \(w=(w_1^\top,\ldots,w_N^\top)^\top\in\mathbb R^{Nm}\) and studies their stochastic dynamics under local objectives and graph-mediated communication. The node-wise population risk is \(R(w)=N^{-1}\sum_{i=1}^N L_i(w_i)\). For a fixed training sample \(\mathcal S\), we define the sample-dependent generalization gap \(G_{\mathcal S}(w)=R(w)-\widehat R_{\mathcal S}(w)\), where \(\widehat R_{\mathcal S}\) is the empirical counterpart of \(R\).

\paragraph{Optimization dynamics.}
Let
\(\eta>0\) be the learning-rate scale and
\(H\in\mathbb R^{m\times m}\) the parameter-space coupling operator.
For each node \(i\), let \(\sigma_i\in\mathbb R^{m\times m}\) be the
noise-intensity matrix and \(B_i(t)\) an \(m\)-dimensional standard
Brownian motion, independent across nodes. The communication matrix
\(C=[C_{ij}]\in\mathbb R^{N\times N}\) satisfies
\(C_{ij}=C_{ji}\ge0\), \(C_{ii}=0\), and \(C_{ij}=0\) whenever
\(\{i,j\}\notin\mathcal E\). The node-wise affine SDE is
\[
dw_i(t)=
\left[
-\eta Q_i(w_i(t)-w_i^\star)
+\sum_{j=1}^N C_{ij}H(w_j(t)-w_i(t))
\right]dt+\sigma_i\,dB_i(t).
\]
Define \(z_i(t):=w_i(t)-w_i^\star\) and
\(b_i^\star:=\sum_{j=1}^N C_{ij}H(w_j^\star-w_i^\star)\).
Stack \(w_i^\star\), \(B_i(t)\), and \(b_i^\star\) in node order to
form \(w^\star\), \(B(t)\), and \(b^\star\), respectively, and write
\(z(t)=w(t)-w^\star\). Define the \(m\times m\) blocks of
\(K,\Sigma\in\mathbb R^{Nm\times Nm}\) by
\(K_{ii}:=-\eta Q_i-(\sum_{j=1}^N C_{ij})H\),
\(K_{ij}:=C_{ij}H\) for \(i\ne j\),
\(\Sigma_{ii}:=\sigma_i\), and \(\Sigma_{ij}:=0\) for \(i\ne j\).
With calculation detailed in
Appendix~\ref{app:continuous_time_dsgd}, the centered dynamics is
\begin{equation}
dz(t)=\bigl(Kz(t)+b^\star\bigr)dt+\Sigma\,dB(t).
\label{eq:stacked_centered_sde}
\end{equation}

\paragraph{Graph automorphisms and \(\Gamma\)-orbits.}
Fix a subgroup \(\Gamma\le\mathrm{Aut}(\mathcal G)\), where
\(\mathrm{Aut}(\mathcal G)\) is the group of adjacency-preserving
node permutations. For \(\pi\in\Gamma\), let
\(P_\pi\in\mathbb R^{N\times N}\) be defined by
\(P_\pi e_j=e_{\pi(j)}\), where \(e_j\) is the \(j\)-th standard
basis vector. The \(\Gamma\)-orbit of node \(i\) is
\(\mathcal O(i):=\{\pi(i):\pi\in\Gamma\}\).
These orbits partition \(\mathcal V\) as
\(\mathcal V=\bigsqcup_{k=1}^q\mathcal O_k\), with
\(n_k:=|\mathcal O_k|\); we assume \(2\le q<N\), so that there
are multiple orbits and at least one orbit contains more than one agent. See Appendix~\ref{app:def:graph-automorphisms} for further details
on graph automorphisms.

\section{Main results}

We study decentralized stochastic learning under approximate
data-topology compatibility relative to a fixed exactly compatible
reference.
Our results quantify distributional stability and approximate
agreement of stationary marginals within reference orbits, and
give conditions for preserving cross-orbit separation.
We also establish sufficient conditions for lower stationary
average excess risk than both independent local training and
the best single shared parameter.

\subsection{Data-topology compatibility}
\label{subsec:orbit_decomposition}

We define approximately compatible systems as perturbations of a fixed
exactly compatible reference, using the dynamics and orbit partition of
Section~\ref{preliminaries}.
Both processes are expressed in the original \(w\)-coordinates.

\begin{definition}[Exact data-topology compatibility]
\label{ass:orbit_identical_main}
A reference collection has the form
\[
\theta_0
=
\left(
\{(\bar Q^{(k)},\bar\sigma^{(k)},\bar w^{\star(k)})\}_{k=1}^q,
C_0
\right),
\]
where \(\bar Q^{(k)}=(\bar Q^{(k)})^\top\succeq0\) and
\(C_0\) is an admissible communication matrix on \(\mathcal G\),
as specified in Section~\ref{preliminaries}. It assigns
\((\bar Q^{(k)},\bar\sigma^{(k)},\bar w^{\star(k)})\)
to every node in \(\mathcal O_k\).
The collection is \emph{exactly data-topology compatible} with
respect to \(\Gamma\) if its communication weights are invariant
under every \(\pi\in\Gamma\):
\(
P_\pi C_0=C_0P_\pi,
\)
equivalently, \((C_0)_{\pi(i),\pi(j)}=(C_0)_{ij}\) for all
\(i,j\).
\end{definition}

\begin{definition}[Exactly compatible reference dynamics]
\label{def:reference_dynamics}
Fix an exactly data-topology compatible collection \(\theta_0\)
as in Definition~\ref{ass:orbit_identical_main}. Its
\emph{reference dynamics} \(w_0(t)\) satisfies
\[
dw_0(t)=\bigl(K_0w_0(t)+a_0\bigr)dt+\Sigma_0\,dB(t).
\]
Here \(K_0\) and \(\Sigma_0\) are obtained from the block definitions
preceding \eqref{eq:stacked_centered_sde}, using \(C_0\) and the
reference local quantities. The node blocks of \(a_0\) are
\((a_0)_i=\eta\bar Q^{(k)}\bar w^{\star(k)}\) for
\(i\in\mathcal O_k\). We keep this reference fixed throughout
the perturbation analysis.
\end{definition}

The index \(0\) denotes the fixed exactly compatible reference system,
corresponding to \(\varepsilon=0\) in
Definition~\ref{ass:approx_orbit_compatibility}.
Its orbit decomposition and limiting distributions provide the analytical basis
for quantifying within-orbit agreement and cross-orbit separation
in approximately compatible systems.

\begin{definition}[Approximate data-topology compatibility]
\label{ass:approx_orbit_compatibility}
Relative to the fixed exactly compatible reference
\(\theta_0\) in Definition~\ref{def:reference_dynamics}, for
\(\varepsilon\ge0\), let \(
\theta_\varepsilon
=
\left(
\{(Q_i,\sigma_i,w_i^\star)\}_{i=1}^N,
C_\varepsilon
\right)
\)
be a perturbed collection. It is \emph{approximately
data-topology compatible} with \(\theta_0\) at level
\(\varepsilon\) if
\begin{equation}
\|C_\varepsilon-C_0\|_{\mathrm F}
+
\max_{1\le k\le q}\max_{i\in\mathcal O_k}
\left(
\|Q_i-\bar Q^{(k)}\|
+
\|\sigma_i-\bar\sigma^{(k)}\|_{\mathrm F}
+
\|w_i^\star-\bar w^{\star(k)}\|
\right)
\le\varepsilon.
\label{eq:approx_data_topology_defect}
\end{equation}
Uniformly over the perturbation class, all reference and perturbed
curvature matrices, noise-intensity matrices, and minimizers have
operator, Frobenius, and Euclidean norms, respectively, bounded by
\(M_{\rm par}<\infty\).
Both systems satisfy the structural and communication conditions of
Section~\ref{preliminaries}, on the same graph and with the same fixed
\(\eta\) and \(H\). No additional invariance under \(\Gamma\) is
imposed on \(C_\varepsilon\).
\end{definition}

\begin{definition}[Perturbed dynamics]
\label{def:perturbed_dynamics}
For a perturbed collection \(\theta_\varepsilon\) as in
Definition~\ref{ass:approx_orbit_compatibility}, take
\(K_\varepsilon,\Sigma_\varepsilon\) from the block formulas in
Section~\ref{preliminaries} and set
\((a_\varepsilon)_i=\eta Q_iw_i^\star\). Its
\emph{perturbed dynamics} \(w_\varepsilon(t)\) satisfies
\[
dw_\varepsilon(t)
=
\bigl(K_\varepsilon w_\varepsilon(t)+a_\varepsilon\bigr)dt
+\Sigma_\varepsilon\,dB(t).
\]
\end{definition}

\subsection{Orbit synchronization and separation}
\label{subsec:approx_orbit_compatibility}

The main theorem controls departures from the exactly compatible
reference and gives conditions for approximate within-orbit
distributional synchronization and persistent cross-orbit differences at stationarity.
The corollary strengthens the pairwise conclusions to separation
of the full reference orbit partition under an additional margin
condition.

\begin{theorem}[Orbit-wise personalization under perturbations]
\label{thm:robust_orbit_auto-personalization}
Under the conditions of Definition~\ref{ass:approx_orbit_compatibility},
suppose \(K_0\) is Hurwitz, i.e., its eigenvalues have
strictly negative real parts. Then there exists
\(C_{\rm coef}<\infty\) such that
\begin{equation}
\|K_\varepsilon-K_0\|
+
\|a_\varepsilon-a_0\|
+
\|\Sigma_\varepsilon-\Sigma_0\|_{\mathrm F}
\le C_{\rm coef}\varepsilon.
\label{eq:coefficient_perturbation_bound}
\end{equation}
Consequently, there exists \(\varepsilon_0>0\) such that
\(K_\varepsilon\) is Hurwitz for every
\(0\le\varepsilon\le\varepsilon_0\). For these values of
\(\varepsilon\), the following conclusions hold under any
additional conditions stated in the respective parts:
\begin{enumerate}[label=(\roman*), labelindent=0pt, leftmargin=*, align=left,topsep=0pt,itemsep=-1ex]
\item \emph{Uniform-in-time distributional stability.}
Suppose \(w_\varepsilon(0)\) and \(w_0(0)\) have a common law \(\nu\)
on \(\mathbb R^{Nm}\) with finite second moment and are independent
of the driving Brownian motion.
Then there exists \(C_{\rm dyn}(\nu)<\infty\) such that
\[
\sup_{t\ge0}
W_2\!\left(
\mathcal L(w_\varepsilon(t)),
\mathcal L(w_0(t))
\right)
\le C_{\rm dyn}(\nu)\varepsilon.
\]
\item \emph{Stationary-distribution stability.}
For \(\delta\in\{0,\varepsilon\}\), let
\(\varrho_\delta=\mathcal N(m_\delta,\Omega_\delta)\) denote the
unique stationary distribution of the corresponding system, where
\[
m_\delta=-K_\delta^{-1}a_\delta,
\qquad
K_\delta\Omega_\delta+\Omega_\delta K_\delta^\top
+\Sigma_\delta\Sigma_\delta^\top=0.
\]
The covariance \(\Omega_\delta\succeq0\) may be singular.
There exists \(C_{\rm stat}<\infty\) such that
\[
W_2(\varrho_\varepsilon,\varrho_0)
+
\|m_\varepsilon-m_0\|
+
\|\Omega_\varepsilon-\Omega_0\|
\le C_{\rm stat}\varepsilon.
\]

\item \emph{Approximate agreement within reference orbits.}
Let \(W^\delta\sim\varrho_\delta\) have node blocks \(W_i^\delta\),
and define \(\mu_k^0:=(m_0)_i\) and
\(\Xi_k^0:=\Omega_0^{(i,i)}\) for \(i\in\mathcal O_k\).
These definitions are independent of \(i\) by
Theorem~\ref{thm:symmetry_breaking_main}.
After enlarging \(C_{\rm stat}\) if necessary, for every
\(i\in\mathcal O_k\),
\begin{equation}
\|(m_\varepsilon)_i-\mu_k^0\|
+\|\Omega_\varepsilon^{(i,i)}-\Xi_k^0\|
\le C_{\rm stat}\varepsilon.
\label{eq:block_parameter_perturbation_bound}
\end{equation}
Moreover, for every distinct \(i,j\in\mathcal O_k\),
\[
W_2(W_i^\varepsilon,W_j^\varepsilon)\le C_{\rm stat}\varepsilon,
\qquad\text{and}\qquad
W_2\!\left(W_i^\varepsilon-W_j^\varepsilon,W_i^0-W_j^0\right)
\le C_{\rm stat}\varepsilon.
\]
\item \emph{Persistence of cross-orbit separation at stationarity.}
For every pair \(1\le k<\ell\le q\), define
\[
\Delta_{k\ell}^0
:=
\|\mu_k^0-\mu_\ell^0\|
+
\|\Xi_k^0-\Xi_\ell^0\|.
\]
If \(2C_{\rm stat}\varepsilon<\Delta_{k\ell}^0\), the stationary
marginal distributions of \(W_i^\varepsilon\) and
\(W_j^\varepsilon\) are distinct for every
\(i\in\mathcal O_k\) and \(j\in\mathcal O_\ell\).
\end{enumerate}
All constants are uniform over the stated perturbation class and may
depend on \(\theta_0,N,m,\eta,H,M_{\rm par}\) and the reference stability
bounds. Only \(C_{\rm dyn}(\nu)\) also depends on the initial second moment;
one common \(C_{\rm stat}\) is used in (ii)--(iv).
\end{theorem}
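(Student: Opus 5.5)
The proof reduces everything to linear perturbation theory for Ornstein--Uhlenbeck processes, using synchronous coupling with the same Brownian motion and the same initial condition.

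\emph{Coefficient bound.} We first prove \eqref{eq:coefficient_perturbation_bound} block by block. For the diagonal blocks,
\[
(K_\varepsilon-K_0)_{ii}=-\eta(Q_i-\bar Q^{(k)})-\Bigl(\sum_j (C_\varepsilon-C_0)_{ij}\Bigr)H,
\]
and for $i\ne j$ the off-diagonal blocks are $(C_\varepsilon-C_0)_{ij}H$. Summing blocks in Frobenius norm gives $\|K_\varepsilon-K_0\|\le c(\eta,H,N)\varepsilon$. For the drift offsets, write
\[
Q_iw_i^\star-\bar Q^{(k)}\bar w^{\star(k)}=(Q_i-\bar Q^{(k)})w_i^\star+\bar Q^{(k)}(w_i^\star-\bar w^{\star(k)}),
\]
which is bounded by $2M_{\rm par}\varepsilon$. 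The estimate for $\Sigma$ is immediate.

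\emph{Hurwitz property and uniform decay.} Since $K_0$ is Hurwitz, the Lyapunov equation $K_0^\top P+PK_0=-I$ has a solution $P\succ0$. The same $P$ gives $K_\varepsilon^\top P+PK_\varepsilon\preceq -\tfrac12 I$ whenever $2\|P\|C_{\rm coef}\varepsilon\le\tfrac12$; this fixes $\varepsilon_0$. It yields the uniform estimate
\[
\|e^{K_\varepsilon t}\|\le \kappa e^{-\lambda t},\qquad \kappa=\sqrt{\mathrm{cond}(P)},\quad \lambda=1/(4\|P\|),
\]
valid simultaneously for $\delta\in\{0,\varepsilon\}$ with $\varepsilon\le\varepsilon_0$. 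This is the step I regard as the crux, because every later constant depends only on $\kappa$ and $\lambda$. It is also why $\varepsilon_0$ is chosen through $P$ rather than through eigenvalue continuity, which would not give constants uniform over the perturbation class.

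\emph{(i) and (ii).} We couple the two systems and set $D=w_\varepsilon-w_0$. Then
\[
dD=(K_\varepsilon D+(K_\varepsilon-K_0)w_0+a_\varepsilon-a_0)\,dt+(\Sigma_\varepsilon-\Sigma_0)\,dB,\qquad D(0)=0.
\]
By variation of constants and the It\^o isometry,
\[
\mathbb E\|D(t)\|^2\le 2\Bigl(\int_0^t\kappa e^{-\lambda(t-s)}C\varepsilon\,(1+\|w_0(s)\|_{L^2})\,ds\Bigr)^2+2\int_0^t\kappa^2e^{-2\lambda(t-s)}C^2\varepsilon^2\,ds.
\]
Here $\sup_s\mathbb E\|w_0(s)\|^2<\infty$ depends only on the second moment of $\nu$ and on $\kappa,\lambda,\|a_0\|,\|\Sigma_0\|$, via the mean and covariance ODEs. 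This gives (i), since $W_2$ is bounded by the coupling cost.

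For (ii), first $m_\varepsilon-m_0=-K_\varepsilon^{-1}(a_\varepsilon-a_0)-(K_\varepsilon^{-1}-K_0^{-1})a_0$. Next use $\|K_\delta^{-1}\|\le\kappa/\lambda$ from $K^{-1}=-\int_0^\infty e^{Kt}\,dt$, together with the resolvent identity. For the covariance, $\Omega_\delta=\int_0^\infty e^{K_\delta t}\Sigma_\delta\Sigma_\delta^\top e^{K_\delta^\top t}\,dt$. Differencing it with Duhamel's formula $e^{K_\varepsilon t}-e^{K_0 t}=\int_0^t e^{K_\varepsilon(t-s)}(K_\varepsilon-K_0)e^{K_0 s}\,ds$ gives an $O(\varepsilon)$ bound. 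Finally, $W_2(\varrho_\varepsilon,\varrho_0)$ is controlled by running the stationary coupling: start both systems from a common stationary initialization, or apply (i) with $\nu=\varrho_0$ and let $t\to\infty$, using convergence to $\varrho_\varepsilon$ and lower semicontinuity of $W_2$. This avoids the non-Lipschitz behavior of $\Omega\mapsto\Omega^{1/2}$ when $\Omega$ is singular.

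\emph{(iii) and (iv).} Block restriction is $1$-Lipschitz, so \eqref{eq:block_parameter_perturbation_bound} follows from (ii) together with orbit-constancy of $m_0$ and of $\Omega_0^{(i,i)}$ from Theorem~\ref{thm:symmetry_breaking_main}.

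For the $W_2$ claims, projecting the optimal coupling of $(\varrho_\varepsilon,\varrho_0)$ onto the blocks $i$, and onto the linear map $W\mapsto W_i-W_j$ (Lipschitz constant $\sqrt2$), gives
\[
W_2(W_i^\varepsilon,W_i^0)\le C\varepsilon,\qquad W_2\bigl(W_i^\varepsilon-W_j^\varepsilon,\,W_i^0-W_j^0\bigr)\le C\varepsilon.
\]
Since $W_i^0$ and $W_j^0$ have the same law for $i,j$ in one orbit, the triangle inequality gives $W_2(W_i^\varepsilon,W_j^\varepsilon)\le 2C\varepsilon$. We then enlarge $C_{\rm stat}$ to absorb these constants.

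For (iv), suppose the marginals coincide for some $i\in\mathcal O_k$ and $j\in\mathcal O_\ell$. Then their means and covariances coincide, and the triangle inequality with \eqref{eq:block_parameter_perturbation_bound} gives $\Delta^0_{k\ell}\le 2C_{\rm stat}\varepsilon$, which contradicts the hypothesis $2C_{\rm stat}\varepsilon<\Delta^0_{k\ell}$.
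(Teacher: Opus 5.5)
Your proposal is correct. It shares the paper's overall architecture: a block-wise coefficient bound, a uniform exponential bound on $e^{K_\delta t}$, synchronous coupling for (i), moment perturbation plus a coupling for the stationary $W_2$, and then projection and triangle inequalities for (iii)--(iv). Several key steps, however, use different tools.

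For uniform decay, you robustify a single reference Lyapunov matrix $P$. The paper instead expands $e^{K_\varepsilon t}$ by Duhamel around $e^{K_0t}$ and applies Gr\"onwall to get $\|e^{K_\varepsilon t}\|\le M_Ke^{-(\alpha_K-M_K\|K_\varepsilon-K_0\|)t}$. Both give constants that depend only on the reference: your pair $(\operatorname{cond}(P),\|P\|)$ versus the semigroup pair $(M_K,\alpha_K)$.

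For the covariance, the paper subtracts the two Lyapunov equations. This writes $\Omega_\varepsilon-\Omega_0$ as the Lyapunov integral of the $O(\varepsilon)$ residual $(K_\varepsilon-K_0)\Omega_0+\Omega_0(K_\varepsilon-K_0)^\top+\Sigma_\varepsilon\Sigma_\varepsilon^\top-\Sigma_0\Sigma_0^\top$, which is a bit shorter than differencing semigroups inside the integral.

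For the stationary $W_2$, the paper builds both stationary laws from one two-sided Brownian motion, giving an explicit stationary coupling. Your route (apply (i) with $\nu=\varrho_0$, let $t\to\infty$, and use lower semicontinuity of $W_2$) is more economical. It stays uniform because $C_{\rm dyn}(\varrho_0)$ depends on the initial law only through $\|m_0\|^2+\operatorname{tr}\Omega_0$, a reference quantity. State this explicitly, since the theorem requires $C_{\rm stat}$ to be independent of any initial law.

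For orbit-constancy, you cite Theorem~\ref{thm:symmetry_breaking_main} after identifying $m_0=w^\star+z_{\mathrm{eq}}$ and $\Omega_0=\Omega$. The paper re-derives it from $\Gamma$-equivariance of $(K_0,a_0,\Sigma_0)$ and uniqueness of the mean and Lyapunov equations, to stay self-contained. Either is fine.

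The only thing you leave implicit is existence, uniqueness, and attractivity of $\varrho_\varepsilon$ for the perturbed, non-equivariant system, which your limiting argument relies on. It follows from the standard Hurwitz Ornstein--Uhlenbeck argument (variation of constants plus Slutsky), as the paper spells out.
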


At stationarity, approximate agreement of marginal distributions
within reference orbits can coexist with persistent cross-orbit
differences when the reference gaps are sufficiently large relative
to the perturbation level.
This does not require individual parameter trajectories to coincide.
Section~\ref{sec:reference_theory} develops the exactly compatible
reference structure underlying these conclusions.
It separates orbit-level motion from within-orbit disagreement,
characterizes distributional synchronization under transverse
stability and stationary orbit marginals under full stability,
and explains how this reference structure combines with
perturbation bounds to yield the approximate agreement and
separation guarantees above.
The full proof of Theorem~\ref{thm:robust_orbit_auto-personalization}
is given in Appendix~\ref{app:proof_approximation_error}.

To extend the pairwise separation guarantees to the full reference
orbit partition, we compare the smallest cross-orbit distance
with the largest within-orbit distance between stationary marginals.
The following corollary gives a sufficient perturbation condition
under which a single distance threshold separates all within-orbit
pairs from all cross-orbit pairs.

For Gaussian distributions, define
\(d_{\rm par}(\mathcal N(\mu,\Xi),\mathcal N(\mu',\Xi'))
:=\|\mu-\mu'\|+\|\Xi-\Xi'\|\).
Write \(\nu_i^\varepsilon:=\mathcal L(W_i^\varepsilon)\), and set
\[
D_{\rm in}(\varepsilon)
:=\max_{1\le k\le q}\max_{i,j\in\mathcal O_k}
d_{\rm par}(\nu_i^\varepsilon,\nu_j^\varepsilon),
\qquad
D_{\rm out}(\varepsilon)
:=\min_{1\le k<\ell\le q}
\min_{i\in\mathcal O_k,\,j\in\mathcal O_\ell}
d_{\rm par}(\nu_i^\varepsilon,\nu_j^\varepsilon).
\]

\begin{corollary}[Orbit separation margin]
\label{cor:orbit_separation_margin}
Suppose the conditions of
Definition~\ref{ass:approx_orbit_compatibility} hold and
\(K_0\) is Hurwitz. Assume further that
\(\underline\Delta_0:=\min_{1\le k<\ell\le q}\Delta_{k\ell}^0>0\).
Let \(\varepsilon_0\) and \(C_{\rm stat}\) be as in
Theorem~\ref{thm:robust_orbit_auto-personalization}.
For every \(0\le\varepsilon\le\varepsilon_0\),
\[
D_{\rm in}(\varepsilon)\le2C_{\rm stat}\varepsilon,
\qquad
D_{\rm out}(\varepsilon)
\ge\underline\Delta_0-2C_{\rm stat}\varepsilon.
\]
Consequently, \(\mathcal M_{\rm orb}(\varepsilon):=D_{\rm out}(\varepsilon)-D_{\rm in}(\varepsilon)\ge\underline\Delta_0-4C_{\rm stat}\varepsilon\).
In particular, if
\(4C_{\rm stat}\varepsilon<\underline\Delta_0\), the threshold
\(\underline\Delta_0/2\) separates all within-orbit pairs from all
cross-orbit pairs in \(d_{\rm par}\).
\end{corollary}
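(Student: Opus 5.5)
The corollary follows from the block-parameter bound \eqref{eq:block_parameter_perturbation_bound} in Theorem~\ref{thm:robust_orbit_auto-personalization}(iii) and the triangle inequality for \(d_{\rm par}\). Fix \(0\le\varepsilon\le\varepsilon_0\). For each node \(i\in\mathcal O_k\), the stationary marginal \(\nu_i^\varepsilon\) is Gaussian, \(\nu_i^\varepsilon=\mathcal N((m_\varepsilon)_i,\Omega_\varepsilon^{(i,i)})\), because it is a coordinate projection of \(\varrho_\varepsilon\). Define the reference orbit law \(\bar\nu_k^0:=\mathcal N(\mu_k^0,\Xi_k^0)\). It is well defined by Theorem~\ref{thm:symmetry_breaking_main}, as used in part (iii). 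Then \eqref{eq:block_parameter_perturbation_bound} says exactly
\[
d_{\rm par}(\nu_i^\varepsilon,\bar\nu_k^0)\le C_{\rm stat}\varepsilon\qquad\text{for all } i\in\mathcal O_k .
\]
Since \(d_{\rm par}\) is the sum of a norm distance on means and a norm distance on covariances, it is a pseudometric on Gaussian parameters, so the triangle inequality applies.

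\emph{Within-orbit bound.} Take \(i,j\in\mathcal O_k\). Passing through \(\bar\nu_k^0\),
\[
d_{\rm par}(\nu_i^\varepsilon,\nu_j^\varepsilon)\le d_{\rm par}(\nu_i^\varepsilon,\bar\nu_k^0)+d_{\rm par}(\bar\nu_k^0,\nu_j^\varepsilon)\le2C_{\rm stat}\varepsilon.
\]
The case \(i=j\) is trivial. Maximizing over \(k\) and over \(i,j\in\mathcal O_k\) gives \(D_{\rm in}(\varepsilon)\le2C_{\rm stat}\varepsilon\).

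\emph{Cross-orbit bound.} Take \(i\in\mathcal O_k\) and \(j\in\mathcal O_\ell\) with \(k<\ell\). By construction \(d_{\rm par}(\bar\nu_k^0,\bar\nu_\ell^0)=\Delta_{k\ell}^0\ge\underline\Delta_0\). The reverse triangle inequality gives
\[
d_{\rm par}(\nu_i^\varepsilon,\nu_j^\varepsilon)\ge\Delta_{k\ell}^0-d_{\rm par}(\nu_i^\varepsilon,\bar\nu_k^0)-d_{\rm par}(\nu_j^\varepsilon,\bar\nu_\ell^0)\ge\underline\Delta_0-2C_{\rm stat}\varepsilon.
\]
Taking the minimum over all cross-orbit pairs gives \(D_{\rm out}(\varepsilon)\ge\underline\Delta_0-2C_{\rm stat}\varepsilon\). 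Subtracting the two bounds yields \(\mathcal M_{\rm orb}(\varepsilon)\ge\underline\Delta_0-4C_{\rm stat}\varepsilon\).

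\emph{Threshold statement.} Suppose \(4C_{\rm stat}\varepsilon<\underline\Delta_0\), so \(2C_{\rm stat}\varepsilon<\underline\Delta_0/2\). Then every within-orbit pair satisfies
\[
d_{\rm par}\le2C_{\rm stat}\varepsilon<\underline\Delta_0/2 .
\]
Every cross-orbit pair satisfies
\[
d_{\rm par}\ge\underline\Delta_0-2C_{\rm stat}\varepsilon>\underline\Delta_0/2 .
\]
So the threshold \(\underline\Delta_0/2\) separates the two classes.

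The only step needing care is checking that \(d_{\rm par}\) obeys the triangle inequality and that the \(C_{\rm stat}\) here is the enlarged common constant from part (iii). Both are immediate, so no real obstacle is expected.
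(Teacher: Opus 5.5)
Your proof is correct and follows the paper's argument. Like the paper, you compare each stationary node marginal with its common reference orbit marginal via the block-parameter bound in part~(iii). You then use the triangle inequality for \(D_{\rm in}\) and the reverse triangle inequality for \(D_{\rm out}\), and combine the two bounds into the margin and threshold statements.
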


\subsection{Expected excess risk}
\label{subsec:expected-excess-risk}

We compare stationary average excess risk with two benchmarks:
independent local training and the best single shared parameter.
The following result gives sufficient conditions for advantages over
both benchmarks to persist under small compatibility perturbations.

For \(W^\varepsilon\sim\varrho_\varepsilon\), define the average
stationary excess risk by
\(\mathcal E_\varepsilon
:=N^{-1}\sum_{i=1}^N
\mathbb E[\widetilde L_i(W_i^\varepsilon)
-\widetilde L_i(w_i^\star)]\).
For quadratic losses with Hessians \(Q_i\), this risk satisfies
\begin{equation}
\mathcal E_\varepsilon
=
\frac{1}{2N}\sum_{i=1}^N
\left[
\bigl((m_\varepsilon)_i-w_i^\star\bigr)^\top
Q_i\bigl((m_\varepsilon)_i-w_i^\star\bigr)
+\operatorname{tr}\!\left(Q_i\Omega_\varepsilon^{(i,i)}\right)
\right].
\label{eq:dt-risk-definition}
\end{equation}
The terms measure the cost of displacement of the stationary
mean from the local optimum and the cost of stationary fluctuations,
respectively.
We suppress \(\varepsilon\) in the notation for the losses,
\(Q_i\), and \(w_i^\star\).

Let \(\mathcal E_{\rm ind}^\varepsilon\) denote the stationary
average excess risk of independent local training with the same
losses, learning rate, and diffusion.
Let \(\mathcal H_{\rm one}^\varepsilon\) denote the minimum average
excess risk attained by a single shared parameter for these losses.

\begin{theorem}[Learning advantages under approximate compatibility]
\label{thm:dt-learning-robustness}
Under the conditions of
Definition~\ref{ass:approx_orbit_compatibility}, suppose that
\(\bar Q^{(k)}\succ0\) for every reference orbit and
\(H+H^\top\succeq0\).
Let \(\mathsf Q_0=\operatorname{blkdiag}(Q_i^0)\), where
\(Q_i^0=\bar Q^{(k)}\) for \(i\in\mathcal O_k\).
Define \(d_0=\lambda_{\min}(\Sigma_0\Sigma_0^\top)\) and
\(\beta_0=\lambda_{\max}(-(K_0+K_0^\top)/2)\), and let
\(b_0^\star\) be the centered forcing in
\eqref{eq:stacked_centered_sde} evaluated at the reference system.
Set
\begin{equation}
g_0 := \frac{d_0\operatorname{tr}(H)\sum_{i,j=1}^N(C_0)_{ij}}
{4N\eta\beta_0}
-\frac{(b_0^\star)^\top\mathsf Q_0^{-1}b_0^\star}{2N\eta^2}.
\label{eq:dt-coefficient-gain}
\end{equation}

If \(g_0>\max\{0,\mathcal E_{\rm ind}^0-\mathcal H_{\rm one}^0\}\),
then there exist \(C_{\rm risk}>0\) and \(\varepsilon_0>0\),
with \(\varepsilon_0\) no larger than the radius in
Theorem~\ref{thm:robust_orbit_auto-personalization},
such that, for every \(0\le\varepsilon\le\varepsilon_0\),
the following bounds hold:
\begin{enumerate}[label=(\roman*), labelindent=0pt, leftmargin=*, align=left, topsep=0pt, itemsep=0pt]
\item \emph{Communication gain.}\quad $\mathcal E_{\rm ind}^\varepsilon-\mathcal E_\varepsilon \ge g_0-C_{\rm risk}\varepsilon>0.$
\item \emph{Personalization gain.}\quad $\mathcal H_{\rm one}^\varepsilon-\mathcal E_\varepsilon \ge g_0+\mathcal H_{\rm one}^0-\mathcal E_{\rm ind}^0-C_{\rm risk}\varepsilon>0.$
\end{enumerate}
\end{theorem}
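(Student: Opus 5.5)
The plan is to prove a reference-level inequality \(\mathcal E_{\rm ind}^0-\mathcal E_0\ge g_0\) by exact computation at \(\varepsilon=0\). Both gains then follow from \(O(\varepsilon)\) continuity of the three risk functionals \(\mathcal E_\varepsilon\), \(\mathcal E_{\rm ind}^\varepsilon\), \(\mathcal H_{\rm one}^\varepsilon\), using Theorem~\ref{thm:robust_orbit_auto-personalization}(ii).

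\emph{Step 1: stability of the reference.} Write \(K_0=-(\eta\mathsf Q_0+A_0)\) with \(A_0:=L_0\otimes H\), where \(L_0\) is the weighted Laplacian of \(C_0\). Since \(L_0\succeq0\) and \(H+H^\top\succeq0\), the symmetric part \(L_0\otimes\frac{H+H^\top}{2}\) of \(A_0\) is \(\succeq0\). Together with \(\mathsf Q_0\succ0\) this gives \(\beta_0\ge\eta\lambda_{\min}(\mathsf Q_0)>0\) and \(K_0+K_0^\top\prec0\). Hence \(K_0\) is Hurwitz and Theorem~\ref{thm:robust_orbit_auto-personalization} applies.

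\emph{Step 2: the reference inequality.} Split \(\mathcal E_0\) via \eqref{eq:dt-risk-definition} into a bias term and a fluctuation term.

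\emph{Fluctuation term.} Take the trace of \(\mathsf Q_0\) against the Lyapunov equation \(\eta\mathsf Q_0\Omega_0+\Omega_0\eta\mathsf Q_0+A_0\Omega_0+\Omega_0A_0^\top=\Sigma_0\Sigma_0^\top\). Because \(\Omega_0\) is symmetric, this gives
\[
\tfrac{1}{2N}\operatorname{tr}(\mathsf Q_0\Omega_0)=\tfrac{\operatorname{tr}(\Sigma_0\Sigma_0^\top)}{4N\eta}-\tfrac{1}{2N\eta}\operatorname{tr}(A_0^{s}\Omega_0),
\]
where \(A_0^s\) is the symmetric part of \(A_0\). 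The first term on the right is exactly \(\mathcal E_{\rm ind}^0\). Indeed, an independent agent solving \(dz_i=-\eta\bar Q z_i\,dt+\sigma_i\,dB_i\) has zero mean bias and \(\tfrac12\operatorname{tr}(Q_i\Omega_i)=\operatorname{tr}(\sigma_i\sigma_i^\top)/(4\eta)\).

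Next I need a lower bound on \(\Omega_0\). From \(\Omega_0=\int_0^\infty e^{K_0t}\Sigma_0\Sigma_0^\top e^{K_0^\top t}dt\succeq d_0\int_0^\infty e^{K_0t}e^{K_0^\top t}dt\) and \(\frac{d}{dt}\|e^{K_0^\top t}x\|^2\ge-2\beta_0\|e^{K_0^\top t}x\|^2\), I get \(\Omega_0\succeq\frac{d_0}{2\beta_0}I\). Since \(A_0^s\succeq0\), it follows that
\[
\operatorname{tr}(A_0^s\Omega_0)\ge\tfrac{d_0}{2\beta_0}\operatorname{tr}(L_0)\operatorname{tr}(H)=\tfrac{d_0}{2\beta_0}\operatorname{tr}(H)\textstyle\sum_{i,j}(C_0)_{ij}.
\]
This produces the first term of \(g_0\).

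\emph{Bias term.} The stationary centered mean \(\bar z=-K_0^{-1}b_0^\star\) satisfies \((\eta\mathsf Q_0+A_0)\bar z=b_0^\star\). Pairing with \(\bar z\) and dropping \(\bar z^\top A_0\bar z\ge0\) gives \(\eta\,\bar z^\top\mathsf Q_0\bar z\le\bar z^\top b_0^\star\). Cauchy–Schwarz in the \(\mathsf Q_0\)-inner product bounds the right side by \(\sqrt{\bar z^\top\mathsf Q_0\bar z}\sqrt{(b_0^\star)^\top\mathsf Q_0^{-1}b_0^\star}\), so \(\frac{1}{2N}\bar z^\top\mathsf Q_0\bar z\le\frac{(b_0^\star)^\top\mathsf Q_0^{-1}b_0^\star}{2N\eta^2}\). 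Combining the two terms gives \(\mathcal E_{\rm ind}^0-\mathcal E_0\ge g_0\).

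\emph{Step 3: continuity.} I will show \(|\mathcal E_\varepsilon-\mathcal E_0|\), \(|\mathcal E_{\rm ind}^\varepsilon-\mathcal E_{\rm ind}^0|\) and \(|\mathcal H_{\rm one}^\varepsilon-\mathcal H_{\rm one}^0|\) are each at most \(C\varepsilon\).
\begin{itemize}
\item For \(\mathcal E_\varepsilon\): apply \eqref{eq:dt-risk-definition} with the bounds on \(m_\varepsilon,\Omega_\varepsilon\) from Theorem~\ref{thm:robust_orbit_auto-personalization}(ii), the bounds on \(Q_i,w_i^\star\) from \eqref{eq:approx_data_topology_defect}, and the uniform bounds \(M_{\rm par}\); all quantities involved are bounded.
\item For \(\mathcal E_{\rm ind}^\varepsilon\): for small \(\varepsilon\) one has \(Q_i\succeq\frac12\lambda_{\min}(\bar Q^{(k)})I\). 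Then \(\mathcal E_{\rm ind}^\varepsilon=\operatorname{tr}(\Sigma_\varepsilon\Sigma_\varepsilon^\top)/(4N\eta)\), which is Lipschitz in \(\Sigma_\varepsilon\).
\item For \(\mathcal H_{\rm one}^\varepsilon\): the minimizer \(u^\star=(\sum_i Q_i)^{-1}\sum_iQ_iw_i^\star\) and its value are Lipschitz once \(\sum_iQ_i\) is uniformly invertible.
\end{itemize}
Shrinking \(\varepsilon_0\) ensures these invertibility conditions and that \(g_0-C_{\rm risk}\varepsilon\) and the right-hand side in (ii) stay positive; the hypothesis on \(g_0\) makes this possible.

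\emph{Step 4: conclusion.} For (i), \(\mathcal E_{\rm ind}^\varepsilon-\mathcal E_\varepsilon\ge(\mathcal E_{\rm ind}^0-\mathcal E_0)-2C\varepsilon\ge g_0-C_{\rm risk}\varepsilon\). For (ii), write \(\mathcal H_{\rm one}^\varepsilon-\mathcal E_\varepsilon=(\mathcal H_{\rm one}^\varepsilon-\mathcal E_{\rm ind}^\varepsilon)+(\mathcal E_{\rm ind}^\varepsilon-\mathcal E_\varepsilon)\) and bound each bracket by its reference value minus \(O(\varepsilon)\).

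The main obstacle I expect is Step 2's fluctuation bound. It requires the trace identity to hold with a non-symmetric \(K_0\), the uniform lower bound \(\Omega_0\succeq\frac{d_0}{2\beta_0}I\), and a correct sign for \(\operatorname{tr}(A_0^s\Omega_0)\). I will verify these first; if \(d_0=0\), the first term of \(g_0\) vanishes and the argument still goes through.
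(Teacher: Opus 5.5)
Your proposal is correct and follows essentially the same route as the paper: tracing the reference Lyapunov equation gives the exact gain identity (fluctuation reduction $\operatorname{tr}(A_0^s\Omega_0)/(2N\eta)$ minus coupling bias), weighted Cauchy--Schwarz bounds the bias, $\Omega_0\succeq\frac{d_0}{2\beta_0}I$ produces the first term of $g_0$, and $O(\varepsilon)$ bounds on $\mathcal E_\varepsilon$, $\mathcal E_{\rm ind}^\varepsilon$, $\mathcal H_{\rm one}^\varepsilon$ transfer both gains. The differences are cosmetic. You obtain the covariance lower bound from a differential inequality for $\|e^{K_0^\top t}x\|^2$, whereas the paper uses $\int_0^\infty e^{K_0t}(K_0+K_0^\top)e^{K_0^\top t}\,dt=-I$; you also verify Hurwitz stability of $K_0$ explicitly, which the paper uses only implicitly; and ``trace of $\mathsf Q_0$ against'' should read simply ``trace of'' the Lyapunov equation, which is what your displayed identity actually uses.
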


The theorem gives sufficient conditions under which reduced
stationary fluctuations outweigh coupling bias and the stationary
average excess risk is lower than both benchmarks.
The \(O(\varepsilon)\) terms quantify the possible loss in the
guaranteed reference gains under compatibility perturbations.
Appendix~\ref{app:dt-learning-proofs} gives the benchmark formulas,
risk identities, and proof; Appendix~\ref{app:four_orbit_risk}
verifies both advantages in the four-orbit configuration.

\section{Development of theory}
\label{sec:reference_theory}

This section analyzes the exactly compatible reference system
underlying our main results, corresponding to the \(\varepsilon=0\)
case of approximate data-topology compatibility.
We develop its orbit decomposition, characterize distributional
orbit synchronization, and describe stationary orbit marginals
and conditions for cross-orbit separation. The final subsection connects these results to approximate compatibility. Throughout this section, the reference subscript \(0\) is suppressed.

\subsection{Orbit decomposition of reference dynamics}
\label{subsec:exact_reference_decomposition}

Under Definition~\ref{ass:orbit_identical_main},
\eqref{eq:stacked_centered_sde} is a \(\Gamma\)-equivariant affine SDE.
To separate orbit-level motion from transverse within-orbit disagreement,
define \(T_\parallel\in\mathbb R^{q\times N}\) by
\((T_\parallel)_{ki}:=n_k^{-1/2}\mathbf 1\{i\in\mathcal O_k\}\), and
choose \(T_\perp\in\mathbb R^{(N-q)\times N}\) whose rows form an
orthonormal basis of the orthogonal complement of the row space of
\(T_\parallel\). Set \(T:=[T_\parallel^\top,T_\perp^\top]^\top\), and let
\(\mathcal M_\Gamma:=\operatorname{range}(T_\parallel^\top\otimes I_m)\)
be the orbit-consensus subspace, whose elements are constant within each orbit.

Define the drift blocks
\(K_\parallel:=(T_\parallel\otimes I_m)K
(T_\parallel^\top\otimes I_m)\) and
\(K_\perp:=(T_\perp\otimes I_m)K
(T_\perp^\top\otimes I_m)\), and the diffusion blocks
\(\Sigma_\parallel:=(T_\parallel\otimes I_m)\Sigma
(T_\parallel^\top\otimes I_m)\) and
\(\Sigma_\perp:=(T_\perp\otimes I_m)\Sigma
(T_\perp^\top\otimes I_m)\).

\begin{theorem}[Decomposition of reference dynamics]
\label{thm:orbit_decomposition_main}
Under Definition~\ref{ass:orbit_identical_main}, define
\(\tilde z(t):=(T\otimes I_m)z(t)
=(\tilde z_{\parallel}(t)^\top,\tilde z_{\perp}(t)^\top)^\top\)
and
\(\tilde B(t):=(T\otimes I_m)B(t)
=(\tilde B_{\parallel}(t)^\top,\tilde B_{\perp}(t)^\top)^\top\). The transformed process \(\tilde B\) is a standard Brownian motion
with independent coordinate blocks \(\tilde B_\parallel\) and
\(\tilde B_\perp\).
The centered affine dynamics decomposes as
\begin{equation}
\begin{aligned}
d\tilde z_{\parallel}(t)
&=
\bigl[
K_{\parallel}\tilde z_{\parallel}(t)
+(T_\parallel\otimes I_m)b^\star
\bigr]dt
+\Sigma_{\parallel}\,d\tilde B_{\parallel}(t),\\
d\tilde z_{\perp}(t)
&=
K_{\perp}\tilde z_{\perp}(t)dt
+\Sigma_{\perp}\,d\tilde B_{\perp}(t).
\end{aligned}
\label{eq:para_orth_systems}
\end{equation}

Since the reference minimizers are identical within each orbit,
centering the parameters as \(z(t)=w(t)-w^\star\) gives
\(w_i(t)-w_j(t)=z_i(t)-z_j(t)\) for any \(i,j\) in the same orbit.
\end{theorem}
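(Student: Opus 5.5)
The plan is to apply the orthogonal change of coordinates \(T\otimes I_m\) to \eqref{eq:stacked_centered_sde} and show that, under exact compatibility, the transformed drift is block diagonal. The cross blocks \((T_\parallel\otimes I_m)K(T_\perp^\top\otimes I_m)\) and \((T_\perp\otimes I_m)K(T_\parallel^\top\otimes I_m)\) should vanish, and likewise for \(\Sigma\). The transverse forcing \((T_\perp\otimes I_m)b^\star\) should also vanish.

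First, \(T\) is orthogonal. The rows of \(T_\parallel\) are normalized orbit indicators with disjoint supports, hence orthonormal, and \(T_\perp\) completes them to an orthonormal basis. So \(T\otimes I_m\) is orthogonal, and \(\tilde B=(T\otimes I_m)B\) is a continuous Gaussian process with independent increments and covariance \(t\,(T\otimes I_m)(T\otimes I_m)^\top=tI\). Hence \(\tilde B\) is a standard Brownian motion, and its coordinate blocks \(\tilde B_\parallel,\tilde B_\perp\) are independent. Multiplying \eqref{eq:stacked_centered_sde} by \(T\otimes I_m\) and inserting \(I=(T^\top\otimes I_m)(T\otimes I_m)\) gives
\[
d\tilde z=\bigl[(T\otimes I_m)K(T^\top\otimes I_m)\tilde z+(T\otimes I_m)b^\star\bigr]dt+(T\otimes I_m)\Sigma(T^\top\otimes I_m)\,d\tilde B .
\]
This is exact, so it remains only to identify the block structure.

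The key step is to show that \(\mathcal M_\Gamma\) is invariant under \(K\) and under \(\Sigma\). For \(v=\sum_k e_{\mathcal O_k}\otimes u_k\), block \(i\in\mathcal O_k\) of \(Kv\) is
\[
-\eta\bar Q^{(k)}u_k+\sum_j C_{ij}H(u_{k(j)}-u_k).
\]
The first term depends only on \(k\). The sum equals \(\sum_\ell c_{i\ell}H(u_\ell-u_k)\), where \(c_{i\ell}:=\sum_{j\in\mathcal O_\ell}C_{ij}\). For \(i'=\pi(i)\), the invariance \(C_{\pi(i)\pi(j)}=C_{ij}\), together with the fact that \(\pi\) permutes \(\mathcal O_\ell\), gives \(c_{i'\ell}=c_{i\ell}\). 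So \(Kv\in\mathcal M_\Gamma\).

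Since \(C\) is symmetric, \(H\) enters blockwise, and the diagonal blocks are orbit-constant, we have \(K^\top=\) the same construction with \(H^\top\) and \(Q_i^\top=Q_i\). The identical argument therefore shows \(\mathcal M_\Gamma\) is \(K^\top\)-invariant. It follows that \(\mathcal M_\Gamma^\perp\) is \(K\)-invariant, so both cross blocks vanish. For \(\Sigma=\operatorname{blkdiag}(\sigma_i)\), with \(\sigma_i=\bar\sigma^{(k)}\) on \(\mathcal O_k\), both \(\Sigma\) and \(\Sigma^\top\) visibly preserve \(\mathcal M_\Gamma\), so the same conclusion holds for \(\Sigma\).

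For the forcing, \(b^\star_i=\sum_\ell c_{i\ell}H(\bar w^{\star(\ell)}-\bar w^{\star(k)})\) is constant on orbits by the same orbit-sum invariance. Hence \(b^\star\in\mathcal M_\Gamma\) and \((T_\perp\otimes I_m)b^\star=0\). Reading off the diagonal blocks gives exactly \eqref{eq:para_orth_systems}, with \(K_\parallel,K_\perp,\Sigma_\parallel,\Sigma_\perp\) as defined. The final remark is immediate: \(w_i^\star=w_j^\star=\bar w^{\star(k)}\) within \(\mathcal O_k\), so \(z_i-z_j=w_i-w_j\).

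The main obstacle is the orbit-row-sum invariance \(c_{\pi(i)\ell}=c_{i\ell}\). It requires using that each \(\pi\in\Gamma\) maps every orbit onto itself, and that one must pass to \(K^\top\) rather than assume \(K\) is symmetric, since \(H\) need not be symmetric.
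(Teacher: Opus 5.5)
Your proposal is correct, but it reaches the vanishing cross-blocks by a different mechanism than the paper. The paper first proves the equivariance $\widetilde P_\pi K=K\widetilde P_\pi$, $\widetilde P_\pi\Sigma=\Sigma\widetilde P_\pi$, $\widetilde P_\pi b^\star=b^\star$ (Lemma~\ref{lem:commutation}). It then averages over $\Gamma$ to obtain the orthogonal projector $\widetilde\Pi_\Gamma=(T_\parallel^\top T_\parallel)\otimes I_m$, which commutes with $K$ and $\Sigma$. Commuting with a symmetric projector makes both its range $\mathcal M_\Gamma$ and its kernel $\mathcal M_\Gamma^\perp$ invariant in a single step, so $K^\top$ never needs to be examined.

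You instead verify $K\mathcal M_\Gamma\subseteq\mathcal M_\Gamma$ directly through the orbit-to-orbit row sums $c_{i\ell}$. You then get invariance of the complement by duality, observing that $K^\top$ has the same block form with $H^\top$ in place of $H$; this step uses $C=C^\top$, which is assumed. The identity $c_{\pi(i)\ell}=c_{i\ell}$ does appear in the paper, but only afterwards, to interpret the parallel equation as quotient dynamics.

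Your route is more elementary. It shows that the only property of the orbit partition actually used is the equitable-partition property: constant weighted row sums into each cell, together with cell-constant $Q_i,\sigma_i,w_i^\star$. So the decomposition holds verbatim for any such partition of a symmetric $C$, not only for automorphism orbits.

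The paper's route is more structural. Once equivariance is established, the projector argument needs nothing further about $K$, and it does not rely on the symmetry of $C$, since $P_\pi C=CP_\pi$ already gives $P_\pi C^\top=C^\top P_\pi$. The same equivariance lemma is also reused later to derive the orbit structure of the stationary law.
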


The quotient subsystem governs orbit-average motion, while the
transverse subsystem governs within-orbit disagreement. This separates
orbit-level personalization from within-orbit model disagreement and allows
the two types of dynamics to be analyzed independently. Full details are provided in
Appendix~\ref{app:proof_decomposition}.

\subsection{Convergence criterion and rates for the reference system}
\label{subsec:orbit_synchronization}

The decomposition in Theorem~\ref{thm:orbit_decomposition_main} shows
that parameter differences between agents in the same orbit depend
only on the transverse coordinate \(\tilde z_\perp(t)\).
We therefore study within-orbit synchronization through the transverse
subsystem in \eqref{eq:para_orth_systems}.
We formulate this synchronization in distribution and characterize
both when it occurs and how rapidly the transverse mean and covariance
approach stationarity.
The resulting limiting difference laws provide the reference for
the comparison of within-orbit parameter differences in part~(iii)
of Theorem~\ref{thm:robust_orbit_auto-personalization}.

\begin{definition}[Distributional orbit synchronization]
\label{def:orbit_synchronization}
The dynamics achieves distributional orbit synchronization if, for every
deterministic initial state \(z(0)=z_0\in\mathbb R^{Nm}\), every
\(\Gamma\)-orbit \(\mathcal O_k\), and every distinct
\(i,j\in\mathcal O_k\), the difference \(w_i(t)-w_j(t)\) converges in
distribution to a possibly degenerate centered Gaussian law as
\(t\to\infty\).
\end{definition}

\begin{theorem}[Criterion for distributional orbit synchronization]
\label{thm:orbit_synchronization_main}
Under the conditions of Theorem~\ref{thm:orbit_decomposition_main}, the
dynamics achieves distributional orbit synchronization if and only if
\(K_\perp\) is Hurwitz. When these equivalent conditions hold, the convergence
extends to every square-integrable initial condition \(z(0)\) independent of
the driving Brownian motion, and
\(\tilde z_\perp(t)\Rightarrow\mathcal N(0,\Omega_\perp)\) as
\(t\to\infty\), where \(\Omega_\perp\succeq0\), possibly singular, is the
unique solution of
\(
K_\perp\Omega_\perp+\Omega_\perp K_\perp^\top
+\Sigma_\perp\Sigma_\perp^\top=0.
\label{eq:transverse_stationary_covariance}
\)

For \(i,j\in\mathcal O_k\), define
\(R_{ij}:=((e_i-e_j)^\top T_\perp^\top)\otimes I_m\) and
\(\Omega_{ij}:=R_{ij}\Omega_\perp R_{ij}^\top\). Then
\(w_i(t)-w_j(t)=R_{ij}\tilde z_\perp(t)
\Rightarrow\mathcal N(0,\Omega_{ij})\) as \(t\to\infty\), where
\(\Omega_{ij}\) is independent of the initial condition.
\end{theorem}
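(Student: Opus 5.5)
The plan is to work entirely with the transverse subsystem in \eqref{eq:para_orth_systems}, which by Theorem~\ref{thm:orbit_decomposition_main} is an autonomous Ornstein--Uhlenbeck process with no forcing term. Variation of constants gives
\[
\tilde z_\perp(t)=e^{K_\perp t}\tilde z_\perp(0)+\int_0^t e^{K_\perp(t-s)}\Sigma_\perp\,d\tilde B_\perp(s).
\]
For a deterministic start \(x_0:=\tilde z_\perp(0)\), this is Gaussian with mean \(e^{K_\perp t}x_0\) and covariance
\[
S(t):=\int_0^t e^{K_\perp s}\Sigma_\perp\Sigma_\perp^\top e^{K_\perp^\top s}\,ds .
\]

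Before the main argument I will record the identity \(w_i-w_j=R_{ij}\tilde z_\perp\) for \(i,j\in\mathcal O_k\).
\begin{itemize}
\item By the last statement of Theorem~\ref{thm:orbit_decomposition_main}, \(w_i-w_j=((e_i-e_j)^\top\otimes I_m)z\).
\item Since \(i,j\) lie in the same orbit, \(T_\parallel(e_i-e_j)=0\), so \(e_i-e_j\) lies in the row space of \(T_\perp\).
\item Hence \((e_i-e_j)^\top=(e_i-e_j)^\top T_\perp^\top T_\perp\), and substituting gives \(w_i-w_j=R_{ij}\tilde z_\perp\).
\end{itemize}
I also need an injectivity fact. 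The vectors \(e_i-e_j\), over same-orbit pairs, span the orthogonal complement of the row space of \(T_\parallel\), which is the row space of \(T_\perp\). Therefore the linear map \(x\mapsto(R_{ij}x)_{i,j\ \text{same orbit}}\) is injective on \(\mathbb R^{(N-q)m}\).

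\emph{Sufficiency.} Assume \(K_\perp\) is Hurwitz.
\begin{itemize}
\item The mean \(e^{K_\perp t}x_0\) tends to \(0\) exponentially.
\item \(S(t)\) converges to \(\Omega_\perp=\int_0^\infty e^{K_\perp s}\Sigma_\perp\Sigma_\perp^\top e^{K_\perp^\top s}ds\). Differentiating shows that this integral solves the Lyapunov equation.
\item Uniqueness of the solution is standard: the operator \(X\mapsto K_\perp X+XK_\perp^\top\) has eigenvalues \(\lambda_a+\lambda_b\neq0\).
\item Gaussian characteristic functions then converge, so \(\tilde z_\perp(t)\Rightarrow\mathcal N(0,\Omega_\perp)\).
\end{itemize}
For a square-integrable initial condition independent of \(B\), I condition on \(z(0)\). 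The conditional characteristic function converges pointwise and is bounded by \(1\), so dominated convergence gives the same limit. Applying the continuous linear map \(R_{ij}\) then yields \(w_i(t)-w_j(t)\Rightarrow\mathcal N(0,\Omega_{ij})\), and \(\Omega_{ij}\) does not depend on the initial condition.

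\emph{Necessity.} This is the step needing the most care. Suppose \(K_\perp\) has an eigenvalue \(\lambda\) with \(\operatorname{Re}\lambda\ge0\) and complex eigenvector \(v\neq0\).
\begin{itemize}
\item Since \(|e^{K_\perp t}v|=e^{\operatorname{Re}\lambda\, t}|v|\ge|v|\) and \(e^{K_\perp t}v=e^{K_\perp t}\operatorname{Re}v+i\,e^{K_\perp t}\operatorname{Im}v\), at least one real choice \(x_0\in\{\operatorname{Re}v,\operatorname{Im}v\}\) has \(e^{K_\perp t}x_0\not\to0\).
\item Choose the deterministic \(z_0\) with \(\tilde z_\perp(0)=x_0\); this is possible because \(T\otimes I_m\) is orthogonal.
\item By injectivity, some same-orbit pair \((i,j)\) has mean \(\mathbb E[w_i(t)-w_j(t)]=R_{ij}e^{K_\perp t}x_0\not\to0\).
\end{itemize}
The law of \(w_i(t)-w_j(t)\) is Gaussian \(\mathcal N(\mu_t,S_t)\). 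If it converged weakly to a centered Gaussian, tightness would bound \(\mu_t\), and convergence of characteristic functions along subsequences would force \(\mu_t\to0\). This contradicts the choice of \(x_0\), so \(K_\perp\) must be Hurwitz.

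The main obstacle is this last deduction, that weak convergence of Gaussians forces convergence of their means. I would handle it by taking moduli and logarithms of the characteristic functions \(\exp(iu^\top\mu_t-\tfrac12u^\top S_tu)\). This shows first that \(S_t\) converges, and then that \(e^{iu^\top\mu_t}\to1\) for all \(u\). Combined with tightness of \(\{\mu_t\}\), this gives \(\mu_t\to0\).
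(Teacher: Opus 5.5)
Your proposal is correct and follows essentially the paper's route: the identity \(w_i-w_j=R_{ij}\tilde z_\perp\), variation of constants with the integral formula for \(\Omega_\perp\) for sufficiency, and, for necessity, a deterministic initial state whose transverse mean \(e^{K_\perp t}x_0\) does not decay, which some same-orbit contrast detects because the contrasts span the transverse space. The differences are only in secondary steps. You prove uniqueness of the Lyapunov solution from the spectrum \(\lambda_a+\lambda_b\neq0\) of the Sylvester operator rather than from the paper's identity \(e^{K_\perp t}De^{K_\perp^\top t}=D\). You handle random initial conditions by conditioning and dominated convergence rather than by \(L^2\) convergence plus Slutsky. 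You also justify, via characteristic functions and tightness, that weak convergence of Gaussians to a centered limit forces the means to zero, which the paper only asserts.
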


\begin{theorem}[Convergence rates of transverse mean and covariance]
\label{thm:transverse_moment_rates_main}
Suppose the equivalent conditions of
Theorem~\ref{thm:orbit_synchronization_main} hold. For any
square-integrable initial condition independent of the driving Brownian
motion, let
\(
\alpha_{\perp}
:=
-\max\{\operatorname{Re}\lambda:
\lambda\in\operatorname{spec}(K_{\perp})\}>0
\),
define
\(
m_{\perp}(t):=\mathbb E[\tilde z_{\perp}(t)]
\),
and let
\(
\operatorname{Cov}_{\perp}(t)
:=
\mathbb E\!\left[
(\tilde z_{\perp}(t)-m_{\perp}(t))
(\tilde z_{\perp}(t)-m_{\perp}(t))^\top
\right].
\)
For every \(\beta\in(0,\alpha_\perp)\), there exists \(C_\beta\ge1\)
such that, for all \(t\ge0\),
\[
\|m_\perp(t)\|
\le
C_\beta e^{-\beta t}\|m_\perp(0)\|,
\qquad
\|\operatorname{Cov}_{\perp}(t)-\Omega_\perp\|
\le
C_\beta^2e^{-2\beta t}
\|\operatorname{Cov}_{\perp}(0)-\Omega_\perp\|.
\]
\end{theorem}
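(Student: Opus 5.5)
The plan is to treat the transverse subsystem in \eqref{eq:para_orth_systems} as an autonomous Ornstein--Uhlenbeck process and write its first two moments explicitly. By Theorem~\ref{thm:orbit_decomposition_main}, \(\tilde z_\perp\) solves \(d\tilde z_\perp=K_\perp\tilde z_\perp\,dt+\Sigma_\perp d\tilde B_\perp\), with no forcing term. The initial condition is square integrable and independent of \(\tilde B_\perp\). Taking expectations in the variation-of-constants formula
\(\tilde z_\perp(t)=e^{K_\perp t}\tilde z_\perp(0)+\int_0^t e^{K_\perp(t-s)}\Sigma_\perp d\tilde B_\perp(s)\)
therefore gives \(m_\perp(t)=e^{K_\perp t}m_\perp(0)\).

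For the covariance, I will use independence of \(\tilde z_\perp(0)\) from the stochastic integral together with the Itô isometry. This gives
\(\operatorname{Cov}_\perp(t)=e^{K_\perp t}\operatorname{Cov}_\perp(0)e^{K_\perp^\top t}+\int_0^t e^{K_\perp s}\Sigma_\perp\Sigma_\perp^\top e^{K_\perp^\top s}ds\).
Equivalently, \(\operatorname{Cov}_\perp\) solves the Lyapunov ODE \(\dot P=K_\perp P+PK_\perp^\top+\Sigma_\perp\Sigma_\perp^\top\). By Theorem~\ref{thm:orbit_synchronization_main}, \(\Omega_\perp=\int_0^\infty e^{K_\perp s}\Sigma_\perp\Sigma_\perp^\top e^{K_\perp^\top s}ds\) is a stationary point of this ODE. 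So the difference \(D(t):=\operatorname{Cov}_\perp(t)-\Omega_\perp\) solves the homogeneous equation \(\dot D=K_\perp D+DK_\perp^\top\), which yields the closed form \(D(t)=e^{K_\perp t}D(0)e^{K_\perp^\top t}\). One can also check this directly by splitting \(\Omega_\perp\) into the integrals over \([0,t]\) and \([t,\infty)\), the latter equal to \(e^{K_\perp t}\Omega_\perp e^{K_\perp^\top t}\).

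Both estimates then reduce to one matrix-exponential bound: for each \(\beta\in(0,\alpha_\perp)\) there is \(C_\beta\ge1\) with \(\|e^{K_\perp t}\|\le C_\beta e^{-\beta t}\) for all \(t\ge0\). The same constant works for \(\|e^{K_\perp^\top t}\|\), since the operator norm is transpose invariant. This is standard, and I would prove it via the Jordan form. Alternatively, the spectrum of \(K_\perp+\beta I\) lies in the open left half plane, so \(\sup_{t\ge0}\|e^{(K_\perp+\beta I)t}\|\) is finite; polynomial factors \(t^k e^{-(\alpha_\perp-\beta)t}\) are bounded. Define \(C_\beta\) as this supremum; it is at least \(1\) by evaluating at \(t=0\). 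Submultiplicativity then gives \(\|m_\perp(t)\|\le C_\beta e^{-\beta t}\|m_\perp(0)\|\) and \(\|D(t)\|\le C_\beta^2e^{-2\beta t}\|D(0)\|\).

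The only delicate point is justifying the covariance identity when \(\tilde z_\perp(0)\) is random rather than deterministic. The cross terms vanish because the stochastic integral has zero mean conditionally on the independent initial value, and square integrability makes every expectation finite. The strict inequality \(\beta<\alpha_\perp\) is needed because \(K_\perp\) may be non-normal or defective, in which case the rate \(\alpha_\perp\) itself is not attained with a uniform constant.
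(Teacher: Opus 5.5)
Your proposal is correct and follows essentially the same route as the paper. You write $m_\perp(t)=e^{K_\perp t}m_\perp(0)$, observe that $\operatorname{Cov}_\perp(t)-\Omega_\perp$ solves the homogeneous Lyapunov equation and so equals $e^{K_\perp t}(\operatorname{Cov}_\perp(0)-\Omega_\perp)e^{K_\perp^\top t}$, and bound $e^{K_\perp t}$ and $e^{K_\perp^\top t}$ with one semigroup constant $C_\beta$. One minor point in your closing remark: a non-normal but diagonalizable $K_\perp$ still attains the rate $\alpha_\perp$, with constant equal to the eigenvector condition number; only a nontrivial Jordan block for an eigenvalue with real part $-\alpha_\perp$ forces $\beta<\alpha_\perp$, and this does not affect your argument.
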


Under transverse stability, within-orbit differences have
initialization-independent limiting distributions, and their
means and covariances converge exponentially.
Noise may sustain residual fluctuations, so synchronization in
distribution does not require identical parameter trajectories.
Proofs of Theorems~\ref{thm:orbit_synchronization_main}
and~\ref{thm:transverse_moment_rates_main} are given in
Appendices~\ref{app:proof_orbit_synchronization}
and~\ref{app:proof_transverse_moment_rates}, respectively.

\subsection{Stationary distributions of the reference system}
\label{subsec:stationary_law}

To pass from within-orbit differences to stationary node marginals,
we impose stability of the full reference drift.
Under this stronger condition, each orbit has a common stationary
marginal, while differences in local objectives, stochastic-gradient
noise, and communication roles may lead to distinct marginals
across orbits.
These common marginals provide the reference for the approximate
within-orbit agreement in part~(iii) of
Theorem~\ref{thm:robust_orbit_auto-personalization},
while their pairwise gaps enter the separation condition in
part~(iv).

\begin{definition}[Distributional orbit separation]
\label{def:orbit_separation}
Suppose that every node parameter \(w_i(t)\) converges in distribution,
with common Gaussian limiting marginal \(\mathcal N(\mu_k,\Xi_k)\)
for all agents in each \(\Gamma\)-orbit \(\mathcal O_k\).
The limiting regime exhibits
distributional orbit separation if and only if there exist two distinct
orbits \(\mathcal O_k\) and \(\mathcal O_\ell\) whose Gaussian marginal
parameters differ, namely
\(
(\mu_k,\Xi_k)\neq(\mu_\ell,\Xi_\ell).
\)
\end{definition}

\begin{theorem}[Stationary distribution and orbit separation]
\label{thm:symmetry_breaking_main}
Under the conditions of
Theorem~\ref{thm:orbit_decomposition_main}, suppose that \(K\) is Hurwitz.
Define \(z_{\mathrm{eq}}:=-K^{-1}b^\star\in\mathcal M_\Gamma\), and let
\(\Omega\succeq0\), possibly singular, be the unique solution of the Lyapunov
equation \(K\Omega+\Omega K^\top+\Sigma\Sigma^\top=0\). For every
square-integrable initial condition \(z(0)\) independent of the driving
Brownian motion,
\[
w(t)\Rightarrow\mathcal N(w^\star+z_{\mathrm{eq}},\Omega).
\]
This limiting Gaussian distribution is the unique stationary distribution of
the \(w\)-process. For each orbit \(\mathcal O_k\), every
\(i\in\mathcal O_k\) has the same limiting marginal distributions,
\[
w_i(t)\Rightarrow\mathcal N(\mu_k,\Xi_k),
\qquad
\mu_k:=(w^\star+z_{\mathrm{eq}})_i,
\qquad
\Xi_k:=\Omega^{(i,i)}.
\]
Here \((w^\star+z_{\mathrm{eq}})_i\) denotes the \(i\)-th
\(m\)-dimensional vector block, while \(\Omega^{(i,j)}\) denotes the
\((i,j)\)-th \(m\times m\) block of \(\Omega\). These definitions are
independent of the choice of \(i\in\mathcal O_k\). The limiting regime
exhibits distributional orbit separation if and only if there exist
\(k\neq\ell\) such that
\(\mu_k\neq\mu_\ell\) or \(\Xi_k\neq\Xi_\ell\).
\end{theorem}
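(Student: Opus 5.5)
The proof has three parts: the standard Ornstein--Uhlenbeck facts for the limit law, $\Gamma$-equivariance to place $z_{\rm eq}$ in $\mathcal M_\Gamma$ and to make the node marginals constant on orbits, and a trivial reformulation of the separation criterion.

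\emph{Limit law.} Variation of constants for \eqref{eq:stacked_centered_sde} gives
\[
z(t)=e^{Kt}z(0)+\int_0^t e^{K(t-s)}b^\star\,ds+\int_0^t e^{K(t-s)}\Sigma\,dB(s).
\]
Since $K$ is Hurwitz, $\|e^{Kt}\|\le Ce^{-\beta t}$ for some $\beta>0$. The first term therefore tends to $0$ in $L^2$, because $z(0)$ is square-integrable. The deterministic integral equals $K^{-1}(e^{Kt}-I)b^\star\to -K^{-1}b^\star=z_{\rm eq}$. The stochastic integral is centered Gaussian with covariance
\[
\int_0^t e^{Ks}\Sigma\Sigma^\top e^{K^\top s}\,ds,
\]
which increases to $\Omega:=\int_0^\infty e^{Ks}\Sigma\Sigma^\top e^{K^\top s}\,ds$. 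Differentiating $e^{Ks}\Sigma\Sigma^\top e^{K^\top s}$ and integrating shows that $\Omega$ solves the Lyapunov equation. The solution is unique because the map $X\mapsto KX+XK^\top$ has eigenvalues $\lambda_i+\lambda_j\ne0$. The stochastic integral is independent of $z(0)$, and the other terms converge in probability, so Slutsky gives $w(t)\Rightarrow\mathcal N(w^\star+z_{\rm eq},\Omega)$.

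For uniqueness of the stationary law, start the process from any stationary law $\rho$, which may not have a finite second moment. Its law at time $t$ is still $\rho$, while the same decomposition shows weak convergence to the Gaussian. Here $e^{Kt}z(0)\to0$ almost surely, so no moment assumption is needed. Hence $\rho$ equals the Gaussian.

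\emph{Equivariance.} Write $\mathsf P_\pi:=P_\pi\otimes I_m$. Exact compatibility gives $P_\pi C=CP_\pi$, so the row sums satisfy $\sum_j C_{\pi(i)j}=\sum_j C_{ij}$. The node parameters $Q_i$ and $\sigma_i$ are constant on orbits. Checking the block formulas preceding \eqref{eq:stacked_centered_sde} then yields $\mathsf P_\pi K=K\mathsf P_\pi$ and $\mathsf P_\pi\Sigma=\Sigma\mathsf P_\pi$. Likewise $w^\star$ is orbit-constant, so $\mathsf P_\pi w^\star=w^\star$, and
\[
(b^\star)_{\pi(i)}=\sum_j C_{\pi(i)\pi(j)}H(w^\star_{\pi(j)}-w^\star_{\pi(i)})=(b^\star)_i,
\]
that is, $\mathsf P_\pi b^\star=b^\star$.

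\emph{Invariance of the limit.} It follows that
\[
\mathsf P_\pi z_{\rm eq}=-\mathsf P_\pi K^{-1}b^\star=-K^{-1}\mathsf P_\pi b^\star=z_{\rm eq}.
\]
A vector fixed by every $\mathsf P_\pi$ is constant on $\Gamma$-orbits, so $z_{\rm eq}\in\mathcal M_\Gamma$. Similarly, $\mathsf P_\pi\Omega\mathsf P_\pi^\top$ solves the same Lyapunov equation, because $\mathsf P_\pi$ is orthogonal and commutes with $K$ and $\Sigma$. By uniqueness $\mathsf P_\pi\Omega\mathsf P_\pi^\top=\Omega$, which in blocks reads $\Omega^{(\pi(i),\pi(j))}=\Omega^{(i,j)}$.

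\emph{Orbit marginals and separation.} Taking $j=i$ gives $\Omega^{(\pi(i),\pi(i))}=\Omega^{(i,i)}$, and the mean blocks $(w^\star+z_{\rm eq})_i$ are orbit-constant. So $\mu_k$ and $\Xi_k$ do not depend on the choice of $i\in\mathcal O_k$. Weak convergence of the joint law passes to each coordinate block by the continuous mapping theorem, giving $w_i(t)\Rightarrow\mathcal N(\mu_k,\Xi_k)$. The separation statement is then exactly Definition~\ref{def:orbit_separation} applied to these marginals.

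The main obstacle is careful bookkeeping rather than any deep step. The commutation $\mathsf P_\pi K=K\mathsf P_\pi$ relies on the diagonal blocks $K_{ii}$ being orbit-invariant, which is why both the row-sum invariance and the orbit-constancy of $Q_i$ are needed. The invariance $\mathsf P_\pi b^\star=b^\star$ needs $w^\star$ orbit-constant together with $C$ invariant; in fact $b^\star$ need not vanish, since different orbits carry different minimizers.
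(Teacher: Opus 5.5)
Your proof is correct in substance and follows the paper's route almost step for step. The shared steps are:
\begin{itemize}
\item variation of constants with Slutsky for the limit law;
\item the integral formula for $\Omega$;
\item $\Gamma$-equivariance of $K$, $\Sigma$, $b^\star$ combined with uniqueness, giving $\mathsf P_\pi z_{\rm eq}=z_{\rm eq}$ and $\mathsf P_\pi\Omega\mathsf P_\pi^\top=\Omega$;
\item block projection for the marginals;
\item almost-sure decay of $e^{Kt}z(0)$ to exclude other stationary laws.
\end{itemize}
The differences are cosmetic: you prove Lyapunov uniqueness via the Kronecker-sum spectrum $\lambda_i+\lambda_j$, and you integrate the forcing explicitly instead of recentring at $z_{\rm eq}$.

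One step the theorem asserts is missing. Your uniqueness argument shows that any stationary law must coincide with $\mathcal N(w^\star+z_{\rm eq},\Omega)$. You never show that this Gaussian is itself stationary, so existence is not established.

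The paper closes this directly, and you can do the same. Take $z(0)\sim\mathcal N(z_{\rm eq},\Omega)$ independent of $B$. Then $z(t)$ is Gaussian with mean $e^{Kt}z_{\rm eq}+K^{-1}(e^{Kt}-I)b^\star=z_{\rm eq}$. Its covariance is $e^{Kt}\Omega e^{K^\top t}+\int_0^t e^{Ks}\Sigma\Sigma^\top e^{K^\top s}\,ds=\Omega$, where the last identity follows by splitting the integral defining $\Omega$ at $t$. Alternatively, invoke the Feller property: a weak limit of $\mathcal L(w(t))$ from a fixed initial law is invariant.
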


Distinct stationary orbit marginals represent differences in long-run
training behavior across groups, providing a distributional form of
personalization.
The proof is given in Appendix~\ref{app:proof_stationary_law}.

\subsection{From exact compatibility to approximate compatibility}
\label{subsec:reference_to_approximate}

The preceding analysis provides the reference distributions for
orbit-wise comparisons with approximately compatible systems.
Under full reference stability and sufficiently small compatibility
defects, Theorem~\ref{thm:robust_orbit_auto-personalization}
combines coefficient perturbation bounds with coupling and moment
estimates to control the stationary distribution in \(W_2\) and its
mean and covariance at order \(O(\varepsilon)\).

These full-system bounds control deviations of node marginals and within-orbit difference distributions from their reference counterparts. Equality of reference marginals within each orbit then yields approximate within-orbit marginal agreement. Pairwise cross-orbit separation persists when the corresponding reference gaps exceed the perturbation bounds. Corollary~\ref{cor:orbit_separation_margin} compares the minimum reference gap with the perturbation level to give a sufficient condition for a single threshold in \(d_{\rm par}\) to separate all within-orbit pairs from all cross-orbit pairs.  For quadratic losses, the moment bounds also enter the risk comparison. Together with perturbation estimates for local loss coefficients
and matched benchmarks, these moment bounds yield lower bounds
on both risk gains with an \(O(\varepsilon)\) reduction from
their reference guarantees, under the additional conditions of
Theorem~\ref{thm:dt-learning-robustness}.

\section{Experiments}
We conduct experiments to verify our theory.

Over the experiments, we use a four-orbit blown-up path graph with \(N=14\) agents,  as illustrated in Figure~\ref{fig:common_topology} in Appendix~\ref{app:four_orbit_quadratic}. The agents are partitioned into
\(\mathcal O_1=\{1,2\}\),
\(\mathcal O_2=\{3,4,5\}\),
\(\mathcal O_3=\{6,7,8,9\}\), and
\(\mathcal O_4=\{10,11,12,13,14\}\).
Adjacent orbits form complete bipartite blocks, with no within-orbit edges, and the quotient graph is
\(\mathcal O_1-\mathcal O_2-\mathcal O_3-\mathcal O_4\).
All perturbed systems are evaluated using this fixed reference partition.
Full details are given in
Appendices~\ref{app:four_orbit_quadratic},
\ref{app:synthetic_experiments}, and~\ref{app:real_world_experiments}.

\subsection{Experiments on synthetic data}

\paragraph{Quadratic case.}
We consider scalar quadratic losses
\(\mathcal L_i(w)=\tfrac12 Q_i(w-r_i)^2\), with
\(\eta=H=Q_i=\sigma_i=1\), \(C_0=0.1A\), and orbit-wise
minimizers \((1,0.4,-0.2,-1)\), where \(A\) is the reference
adjacency matrix.
Figure~\ref{fig:exact_stationary_marginals} illustrates the
reference orbit structure: stationary marginals coincide
within each orbit and have distinct means across orbits,
despite overlapping spreads.
Analytical verification and simulation details are provided
in Appendix~\ref{app:four_orbit_quadratic}.

\paragraph{Robustness under joint perturbations.}
We jointly perturb communication weights, local curvatures,
noise amplitudes, and minimizers along \(20\) fixed directions
at six defect levels between \(0.001\) and \(0.2\), always
evaluating the fixed reference orbit partition.
Figure~\ref{fig:orbit_separation_margin} reports the stationary
margin
\(\mathcal M_{\rm orb}=D_{\rm out}-D_{\rm in}\)
under the parameter metric of
Corollary~\ref{cor:orbit_separation_margin}.
The margin remains positive in all \(120\) evaluated
configurations: every cross-orbit distance exceeds every
within-orbit distance.
This illustrates persistence of orbit separation in the sampled
family, rather than a worst-case guarantee over all admissible
directions.
The perturbation construction, analytical bounds, and
\(O(\varepsilon)\) scaling diagnostics are provided in
Appendix~\ref{app:four_orbit_robustness}.

\begingroup
\setlength{\intextsep}{6pt}
\begin{figure}[b]
    \centering
    \captionsetup{skip=4pt}
    \captionsetup[subfigure]{
        font=small,
        labelfont=bf,
        justification=centering,
        singlelinecheck=false,
        skip=3pt
    }

    \begin{subfigure}[b]{0.34\textwidth}
        \centering
        \includegraphics[width=\linewidth]
            {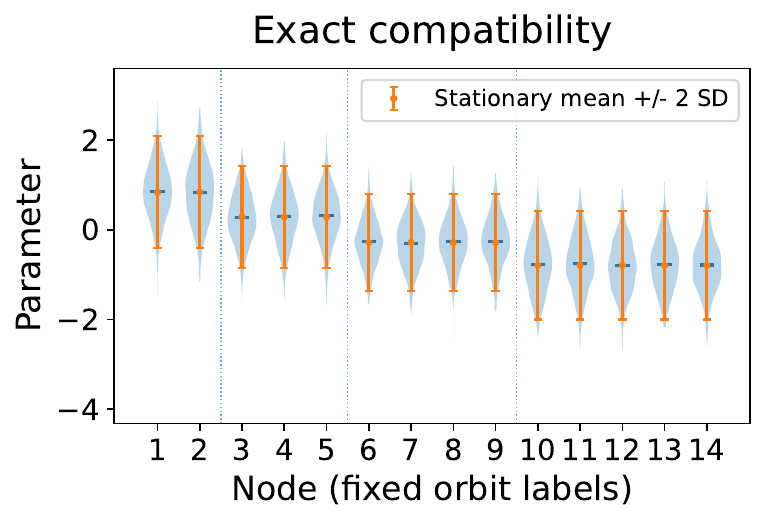}
        \caption{Stationary orbit marginals}
        \label{fig:exact_stationary_marginals}
    \end{subfigure}\hfill
    \begin{subfigure}[b]{0.3\textwidth}
        \centering
        \includegraphics[width=\linewidth]
            {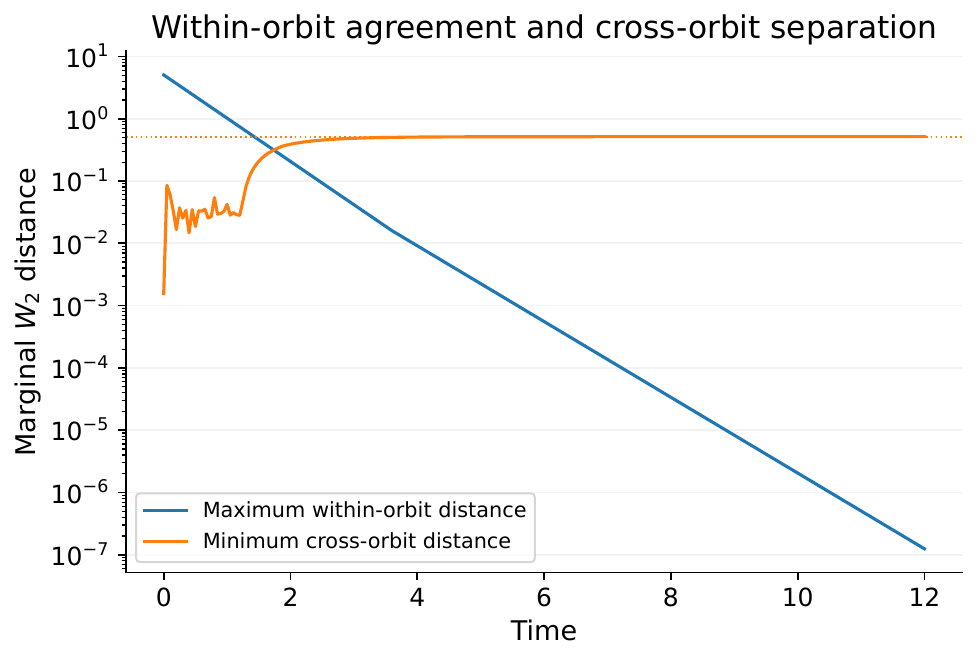}
        \caption{Reference marginal convergence}
        \label{fig:orbit-marginal-w2}
    \end{subfigure}\hfill
    \begin{subfigure}[b]{0.34\textwidth}
        \centering
        \includegraphics[width=\linewidth]
            {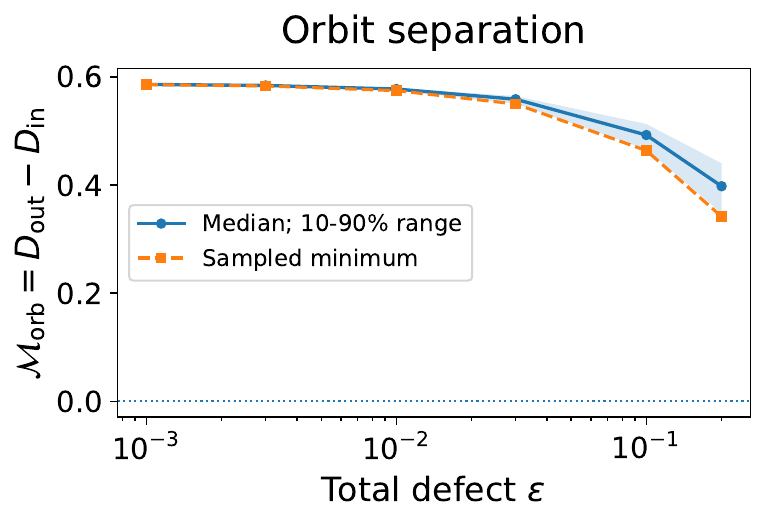}
        \caption{Robust orbit separation}
        \label{fig:orbit_separation_margin}
    \end{subfigure}

    \caption{
Orbit structure and robustness.
(a) Reference marginals with stationary means and
\(\pm2\) standard deviations.
(b) Marginal $W_2$ under exact compatibility:
within-orbit maximum and cross-orbit minimum.
(c) Separation margins under joint perturbations:
median (solid), sampled minimum (dashed), and
10th--90th percentiles (shaded).
}
    \label{fig:numerical_main}
    \vspace{-10pt}
\end{figure}
\endgroup

\paragraph{Learning advantages.}
The reference stationary excess risk is \(0.1802\), compared
with \(0.2500\) for matched independent training and \(0.2556\)
for the best single shared parameter.
Both advantages persist throughout the sampled perturbation
family.
Verification of the sufficient condition in
Theorem~\ref{thm:dt-learning-robustness} and the corresponding
risk comparisons are provided in
Appendices~\ref{app:four_orbit_risk}
and~\ref{app:four_orbit_risk_robustness}.

\paragraph{Nonquadratic case.}
A smooth, strongly convex nonquadratic example with the same
local minimizers and curvatures at those minimizers exhibits
similar finite-time marginal grouping.
This provides a qualitative illustration beyond the affine
setting; the construction, simulations, and scope of the
comparison are detailed in
Appendix~\ref{app:four_orbit_nonquadratic}.
\subsection{Experiments on real-world data}
\label{subsec:real_world_data}

\paragraph{Experimental design.}
We study orbit-aligned prediction patterns on MNIST \citep{lecun1998gradient} and CIFAR-10 \citep{krizhevsky2009learning} using MLP-LN \citep{ba2016layer} and ResNet20-GN \citep{he2016deep, wu2018group}.
The Main condition uses the graph in Figure~\ref{fig:common_topology}, with $2000$
distinct training images per agent and matching class-frequency
profiles within each reference orbit. Nodes in the same reference orbit share a class-count profile but
use distinct samples, while different orbits emphasize different classes. Data construction, training, validation-based tuning, and the
no-communication, misaligned-data, and fully symmetric control
conditions are detailed in Appendix~\ref{app:real_world_experiments}.

\paragraph{Prediction-space evaluation.}
Figure~\ref{fig:realworld_pairwise} presents final-checkpoint pairwise
squared distances between class-centered logits on a common test set.
The distance and associated dispersion measures are defined in
Appendix~\ref{app:real_world_experiments},
Eqs.~\eqref{eq:realworld_pairwise_distance}
and~\eqref{eq:realworld_dispersion}.
Agents are ordered by the reference orbits, so darker diagonal blocks
indicate relatively similar predictions within those orbits, while
lighter off-diagonal regions indicate differences across orbits.

\begin{figure}[t]
  \vspace{-6pt}
  \centering
  \captionsetup{font=small,skip=3pt}
  \captionsetup[subfigure]{font=small,labelfont=bf,skip=2pt}

  \begin{subfigure}[b]{0.24\linewidth}
    \centering
    \includegraphics[width=\linewidth]{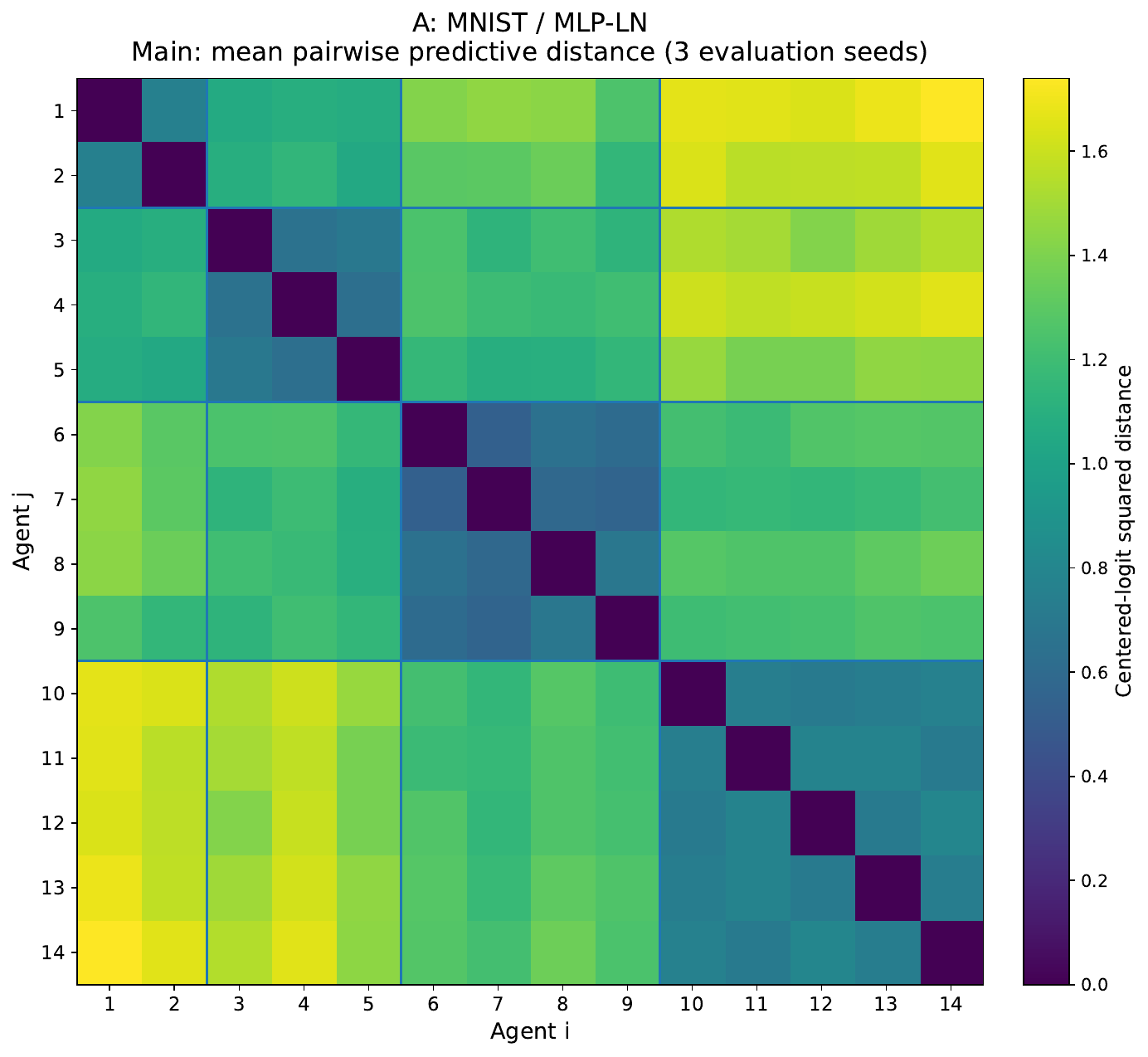}
    \caption{MNIST / MLP-LN}
    \label{fig:realworld_mnist_mlp}
  \end{subfigure}\hfill
  \begin{subfigure}[b]{0.24\linewidth}
    \centering
    \includegraphics[width=\linewidth]{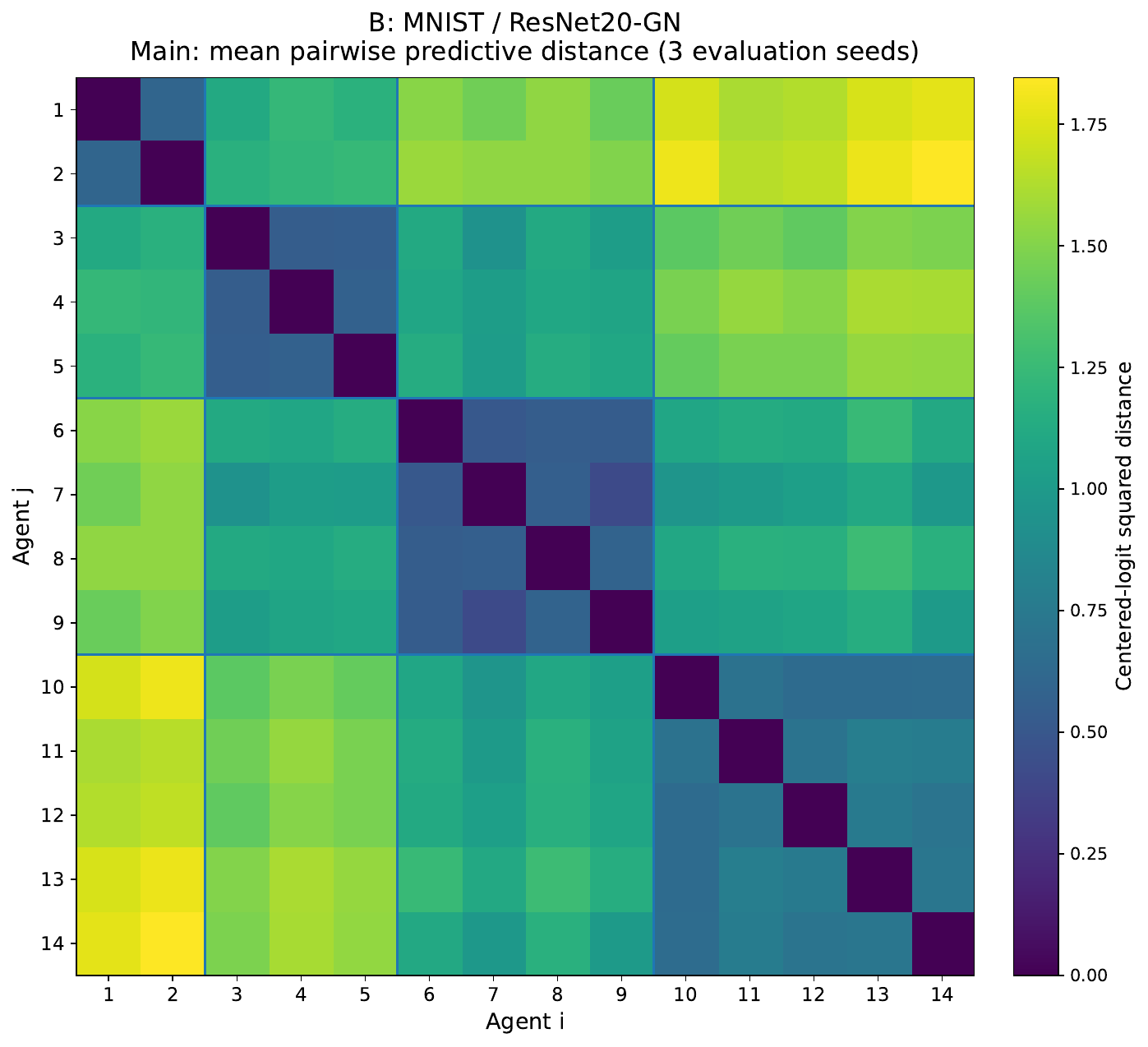}
    \caption{MNIST / ResNet}
    \label{fig:realworld_mnist_resnet}
  \end{subfigure}\hfill
  \begin{subfigure}[b]{0.24\linewidth}
    \centering
    \includegraphics[width=\linewidth]{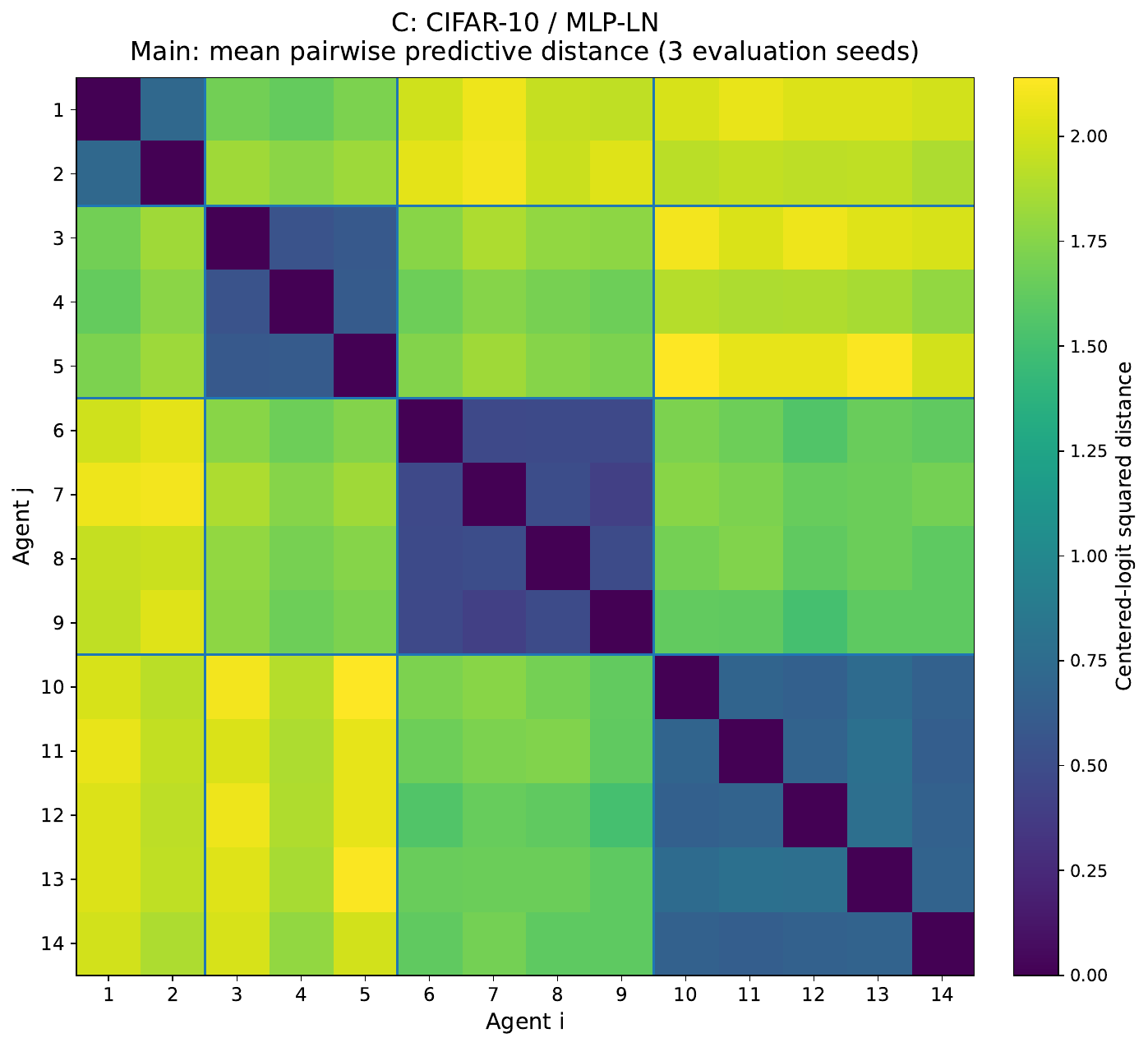}
    \caption{CIFAR-10 / MLP}
    \label{fig:realworld_cifar_mlp}
  \end{subfigure}\hfill
  \begin{subfigure}[b]{0.24\linewidth}
    \centering
    \includegraphics[width=\linewidth]{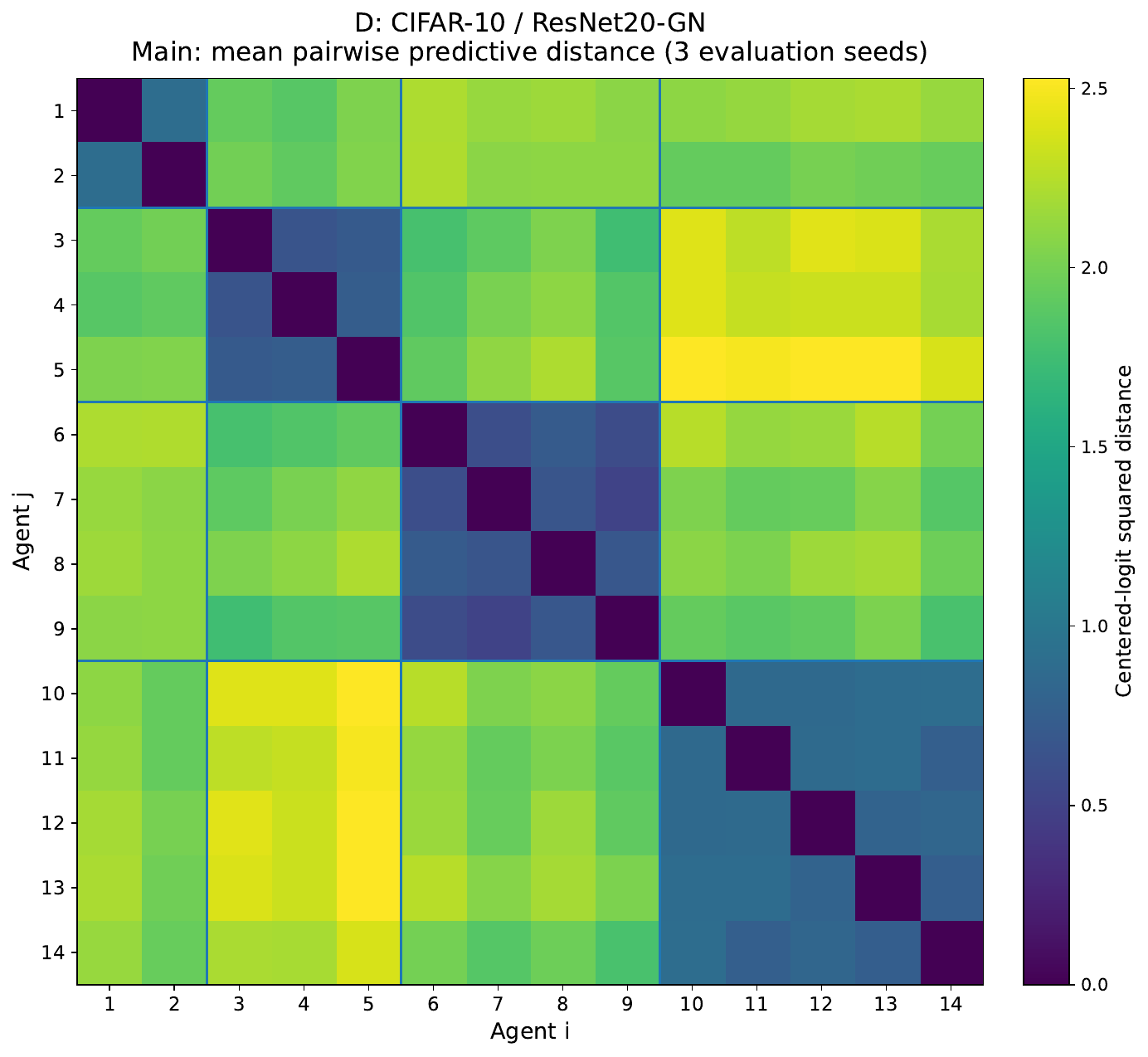}
    \caption{CIFAR-10 / ResNet}
    \label{fig:realworld_cifar_resnet}
  \end{subfigure}

  \caption{Final-checkpoint pairwise centered-logit squared distances
  in the Main condition, averaged over seeds.
  Lines mark reference-orbit boundaries; darker colors indicate
  closer predictions. Color scales vary across panels.}
    \label{fig:realworld_pairwise}
  \vspace{-8pt}
\end{figure}

\paragraph{Results.}
Figure~\ref{fig:realworld_pairwise} shows orbit-aligned regions
of relatively low predictive distance alongside visible
cross-orbit differences, despite the absence of within-orbit
communication edges.
Over the terminal evaluation window, between-orbit dispersion
exceeds within-orbit dispersion on average for all three
training seeds in each completed Main configuration.
These finite-time prediction-space observations are qualitatively
consistent with the within-orbit agreement and cross-orbit
separation characterized in
Theorem~\ref{thm:robust_orbit_auto-personalization}, parts~(iii)--(iv),
which concern stationary parameter laws.
Numerical summaries and control results are reported
under \emph{Additional results} in
Appendix~\ref{app:real_world_experiments}.

\section{Conclusion}

This work shows how alignment between local learning tasks and symmetries of the communication graph can organize decentralized learners into statistically coherent groups corresponding to reference orbits, without an explicit clustering step. Using an exactly compatible stochastic system as an analytical
reference, we characterize distributional orbit synchronization and
stationary marginals. For approximately compatible systems, we establish uniform-in-time and
stationary \(O(\varepsilon)\) perturbation bounds in 2-Wasserstein distance.
The stationary bounds yield approximate agreement within reference orbits
and explicit thresholds for preserving cross-orbit separation. By separating coupling bias from stationary fluctuation cost, we derive
sufficient conditions for lower stationary average excess risk than both
independent local training and the best single shared parameter.
Both advantages persist under sufficiently small perturbations, with
guaranteed gains reduced by at most \(O(\varepsilon)\). Synthetic experiments support these distributional and risk predictions, while experiments on MNIST and CIFAR-10 show qualitatively consistent orbit-aligned prediction patterns. 


\subsection*{AI use statement}


We have not used generative AI tools for any tasks requiring disclosure.
Additionally, we used these tools to polish the language of the manuscript and improve its grammar, clarity, and readability. We have reviewed all AI-assisted work. All AI-assisted translations and language edits were reviewed by the authors for accuracy and consistency with the intended technical meaning. We take responsibility for the final content of this work,
including text, claims or artifacts produced with the aid of generative AI.




\bibliography{iclr2027_conference}
\bibliographystyle{iclr2027_conference}

\appendix
\newpage
\section{Notation}
\label{app:notation}

\begingroup
\small
\setlength{\tabcolsep}{4pt}
\renewcommand{\arraystretch}{1.05}

\begin{longtable}{
  >{\raggedright\arraybackslash}p{0.25\linewidth}
  p{0.70\linewidth}
}
\caption{Main notation used throughout the paper.}
\label{tab:notation}\\
\toprule
\textbf{Symbol} & \textbf{Definition} \\
\midrule
\endfirsthead

\toprule
\textbf{Symbol} & \textbf{Graph automorphisms.}{Definition} \\
\midrule
\endhead

\bottomrule
\endlastfoot

\(\mathcal G=(\mathcal V,\mathcal E)\)
& Communication graph with node set \(\mathcal V\) and edge set \(\mathcal E\). \\

\(N,m\)
& Number of agents and local parameter dimension. \\

\(\Gamma\le\mathrm{Aut}(\mathcal G)\)
& Automorphism subgroup defining the graph symmetry. \\

\(\pi,P_\pi\)
& Graph automorphism and permutation matrix, with \(P_\pi e_j=e_{\pi(j)}\). \\

\(\mathcal O(i),\mathcal O_k,q,n_k\)
& Orbit of node \(i\), \(k\)-th orbit, number of orbits, and size \(n_k=|\mathcal O_k|\). \\

\(\mathcal D_i,\ell,L_i\)
& Local data distribution, sample loss, and local population risk. \\

\(w_i(t),w(t)\)
& Local model parameter and stacked parameter vector. \\

\(w_i^\star,w^\star\)
& Reference local minimizer and stacked reference minimizer. \\

\(\widetilde L_i,Q_i\)
& Local quadratic surrogate and curvature matrix of agent \(i\). \\

\(L_{\rm glob}(u)\)
& Equal-node-weighted single-model objective \(N^{-1}\sum_iL_i(u)\). \\

\(R(w),\widehat R_{\mathcal S}(w)\)
& Node-wise population and empirical risks. \\

\(\mathcal S,G_{\mathcal S}(w)\)
& Fixed training sample and generalization gap \(G_{\mathcal S}(w)=R(w)-\widehat R_{\mathcal S}(w)\). \\

\(\eta,H\)
& Learning-rate scale and parameter-space communication coupling operator. \\

\(C=[C_{ij}]\)
& Symmetric, nonnegative, graph-supported communication matrix; \(C_{ii}=0\). \\

\(\sigma_i,\Sigma\)
& Node noise intensity and stacked diffusion matrix \(\Sigma=\operatorname{blkdiag}(\sigma_1,\ldots,\sigma_N)\). \\

\(B_i(t),B(t)\)
& Independent node Brownian motions and stacked Brownian motion. \\

\(z_i(t),z(t)\)
& Centered parameter \(z_i(t)=w_i(t)-w_i^\star\) and stacked centered state. \\

\(b_i^\star,b^\star\)
& Communication bias \(b_i^\star=\sum_jC_{ij}H(w_j^\star-w_i^\star)\) and its stacked vector. \\

\(K\)
& Drift matrix of the centered SDE \(dz=(Kz+b^\star)dt+\Sigma\,dB\). \\

\(Q^{(k)},\sigma^{(k)},w^{\star(k)}\)
& Common curvature, noise intensity, and reference minimizer on orbit \(\mathcal O_k\). \\

\(T_\parallel,T_\perp,T\)
& Normalized orbit-average and transverse bases; full orthogonal matrix. \\

\(\mathcal M_\Gamma,\mathcal M_\Gamma^\perp\)
& Orbit-consensus subspace \(\operatorname{range}(T_\parallel^\top\otimes I_m)\) and its orthogonal complement. \\

\(\tilde z_\parallel(t),\tilde z_\perp(t)\)
& Parallel and transverse coordinates of \((T\otimes I_m)z(t)\). \\

\(K_\parallel,K_\perp\)
& Parallel and transverse drift matrices. \\

\(\Sigma_\parallel,\Sigma_\perp\)
& Parallel and transverse diffusion matrices. \\

\(\tilde B_\parallel(t),\tilde B_\perp(t)\)
& Independent parallel and transverse Brownian components. \\

\(\|\cdot\|,\|\cdot\|_{\mathrm F}\)
& Euclidean vector or operator matrix norm, and Frobenius matrix norm. \\

\(W_2,\Rightarrow\)
& \(2\)-Wasserstein distance between distributions, and convergence in distribution. \\

\(\theta_0,\theta_\varepsilon\)
& Exactly compatible reference and perturbed collections, including \(C_0,C_\varepsilon\). \\

\(\bar Q^{(k)},\bar\sigma^{(k)},\bar w^{\star(k)}\)
& Orbit-level quantities defining the exactly compatible reference collection. \\

\(\varepsilon,\varepsilon_0,M_{\rm par}\)
& Compatibility defect, perturbation radius, and uniform parameter bound. \\

\(K_\delta,a_\delta,\Sigma_\delta\)
& Drift, affine term, and diffusion in original coordinates, \(\delta\in\{0,\varepsilon\}\). \\

\(w_0(t),w_\varepsilon(t),\nu\)
& Reference and perturbed processes, and their common initial distribution. \\

\(\operatorname{spec}(A)\), \(A\) Hurwitz
& Spectrum of \(A\); \(A\) is Hurwitz when all eigenvalues have negative real parts. \\

\(\varrho_\delta,m_\delta,\Omega_\delta\)
& Stationary Gaussian distribution, mean, and covariance for \(\delta\in\{0,\varepsilon\}\). \\

\(W^\delta,W_i^\delta\)
& Random vector with distribution \(\varrho_\delta\) and its \(i\)-th node block. \\

\(\mu_k^0,\Xi_k^0\)
& Reference stationary marginal mean and covariance of orbit \(\mathcal O_k\). \\

\(\Delta_{k\ell}^0\)
& Reference gap \(\|\mu_k^0-\mu_\ell^0\|+\|\Xi_k^0-\Xi_\ell^0\|\) for a specified pair of distinct orbits. \\

\(C_{\rm coef}\), \(C_{\rm dyn}(\nu)\), \(C_{\rm stat}\)
& Constants in the coefficient, transient, and stationary perturbation bounds. \\

\(\Omega_\perp\)
& Transverse stationary covariance matrix. \\

\(R_{ij},\Omega_{ij}\)
& Same-orbit contrast map and limiting contrast covariance. \\

\(m_\perp(t)\), \(\operatorname{Cov}_\perp(t)\)
& Transverse mean and centered covariance matrix. \\

\(\alpha_\perp,\beta,C_\beta\)
& Transverse spectral margin, rate \(0<\beta<\alpha_\perp\), and semigroup constant. \\

\(z_{\mathrm{eq}},\Omega\)
& Exact-reference equilibrium \(-K^{-1}b^\star\) and full stationary covariance. \\

\(\mu_k,\Xi_k\)
& Exact-reference stationary marginal mean and covariance on \(\mathcal O_k\). \\

\(d_{\rm par},\nu_i^\varepsilon\)
& Gaussian parameter distance \(\|\mu-\mu^\prime\|+\|\Xi-\Xi^\prime\|\); stationary law \(\mathcal L(W_i^\varepsilon)\). \\

\(D_{\rm in}(\varepsilon),D_{\rm out}(\varepsilon)\)
& Largest within-orbit and smallest cross-orbit marginal parameter distances. \\

\(\underline\Delta_0,\mathcal M_{\rm orb}(\varepsilon)\)
& Minimal reference gap \(\min_{k<\ell}\Delta_{k\ell}^0\) and separation margin \(D_{\rm out}(\varepsilon)-D_{\rm in}(\varepsilon)\). \\

\(c_i,F_i,D_C,L_C\)
& Weighted degree, local drift, degree matrix, and weighted Laplacian. \\

\(U_\Gamma,U_\Gamma^\perp\)
& Orbit-constant node subspace and its orthogonal complement. \\

\(\widetilde P_\pi,\widetilde\Pi_\Gamma\)
& Lifted permutation and group-averaged orthogonal projector onto \(\mathcal M_\Gamma\). \\

\(T_{\mathrm{para}},T_{\mathrm{orth}}\)
& Case-study notation for \(T_\parallel,T_\perp\) and corresponding matrix blocks. \\

\(\mathrm{Gen}(t)\), \(\mathrm{Gen}_{\mathrm{para}}(t)\), \(L_{\mathrm{gen,orth}}\)
& Expected full and orbit-level generalization gaps; transverse Hessian bound. \\

\end{longtable}
\endgroup

\section{Additional literature review}
\label{app:additional_literature}

Decentralized stochastic optimization analyses study how
communication topology and data heterogeneity affect
convergence~\citep{yuan2016convergence,lian2017can,le_bars2023refined}. \citet{takezawa2026improved} shows that the full spectrum of the mixing matrix, rather than the spectral gap alone, governs the noise-driven part of the rate once the independence of stochastic gradient noise across nodes is exploited, which explains why sparse topologies perform well under near-homogeneous data. Topology-aware analyses further relate spectral properties of
the communication graph to algorithmic stability and
generalization in decentralized SGD~\citep{zhu2022topology}, while recent stability bounds quantifying how additional gossip steps shrink the generalization gap to centralized training \citep{li2026unveiling}.  Our analysis examines how compatibility between local tasks
and communication structure organizes stationary parameter
distributions across agents.

For agents with different local optima, diffusion LMS analyses
quantify the bias caused by task mismatch and identify
conditions under which cooperation improves on independent
adaptation~\citep{chen2015diffusion}.
Graph-regularized multitask learning further characterizes
how network topology and regularization strength affect
steady-state mean-square
performance~\citep{nassif2020multitaskII}.
We connect these performance questions to orbit-wise
parameter distributions and give sufficient conditions for
stationary excess-risk advantages over both independent
local training and the best single shared parameter,
with an \(O(\varepsilon)\) reduction in the guaranteed gains
under small joint perturbations of local learning quantities
and communication weights.
Complementary approaches accommodate heterogeneous tasks
by modeling task relationships in federated multi-task
learning~\citep{smith2017federated}, or by estimating client
groups and learning group-specific
models~\citep{ghosh2020efficient}. 

In server-based federated learning, local models can be coupled to a shared model \citep{dinh2020personalized,li2021ditto}, or clients can be assigned to groups that learn separate models \citep{ghosh2020efficient}. 
In decentralized settings, personalized models can be learned
over a given collaboration
graph~\citep{vanhaesebrouck2017decentralized}, jointly with
the collaboration graph~\citep{zantedeschi2020fully},
through soft clustering that accounts for different mixtures
of data clusters across clients~\citep{lin2025fedspd}, or through dynamically formed coalitions in which clients cooperate only with peers whose gradients and models are sufficiently coherent \citep{yang2026decentralized}. 
A related line shows that, under strong heterogeneity, sparse communication keeps diverging local models mergeable, so that inter-node parameter variance can act as a constructive rather than harmful component of decentralized training \citep{zhu2026surprising}.
In our analysis, the reference groups are orbits of a fixed
graph-automorphism subgroup, and the learning dynamics uses
local updates and neighbor coupling without a separate
clustering or graph-learning step.
Under approximate data-topology compatibility and reference
stability, we establish approximate agreement of stationary
marginals within reference orbits and persistence of
cross-orbit separation whenever the reference gaps exceed
the perturbation bounds.


\section{Proofs}

\subsection{Graph automorphisms}

\label{app:def:graph-automorphisms}
\paragraph{Graph automorphisms.}
Let $\mathcal G=(\mathcal V,\mathcal E)$ be a simple undirected graph with
binary adjacency matrix $A_G\in\{0,1\}^{N\times N}$. A graph automorphism is a
bijection $\pi:\mathcal V\to\mathcal V$ such that
\[
\{i,j\}\in\mathcal E
\quad\Longleftrightarrow\quad
\{\pi(i),\pi(j)\}\in\mathcal E,
\qquad \forall i,j\in\mathcal V,\ i\neq j .
\]
Equivalently,
\(
\forall i,j\in\mathcal V, (A_G)_{ij}=(A_G)_{\pi(i),\pi(j)}.
\)
The set of all graph automorphisms forms a group under composition, denoted
$\mathrm{Aut}(\mathcal G)$.

\begin{lemma}[Automorphism matrices commute with the adjacency matrix]
\label{lem:automorphism-adjacency-commutation}
Let $\mathcal G=(\mathcal V,\mathcal E)$ be a simple undirected graph with
binary adjacency matrix $A_G\in\{0,1\}^{N\times N}$. Let
$\pi\in\mathrm{Aut}(\mathcal G)$, and let $P_\pi$ be the permutation matrix
associated with $\pi$, using the convention
\[
(P_\pi)_{ij}=1
\quad\Longleftrightarrow\quad
i=\pi(j).
\]
Then
\(
P_\pi A_G=A_G P_\pi .
\)
\end{lemma}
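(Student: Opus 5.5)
The plan is to compare the two matrix products entrywise, using the convention \((P_\pi)_{ij}=1\iff i=\pi(j)\), equivalently \(P_\pi e_j=e_{\pi(j)}\). Because each row and each column of \(P_\pi\) contains exactly one nonzero entry, each side of the claimed identity reduces to a single adjacency entry. Since \(\pi\) is a bijection, for fixed \(i\) the unique \(k\) with \((P_\pi)_{ik}=1\) is \(k=\pi^{-1}(i)\). This gives
\[
(P_\pi A_G)_{ij}=\sum_k (P_\pi)_{ik}(A_G)_{kj}=(A_G)_{\pi^{-1}(i),\,j}.
\]
Likewise, for fixed \(j\) the only \(k\) with \((P_\pi)_{kj}=1\) is \(k=\pi(j)\). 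This gives
\[
(A_G P_\pi)_{ij}=\sum_k (A_G)_{ik}(P_\pi)_{kj}=(A_G)_{i,\,\pi(j)}.
\]

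It then suffices to show \((A_G)_{\pi^{-1}(i),j}=(A_G)_{i,\pi(j)}\) for all \(i,j\). Set \(u=\pi^{-1}(i)\), which ranges over all of \(\mathcal V\) as \(i\) does. The required identity becomes \((A_G)_{u,j}=(A_G)_{\pi(u),\pi(j)}\), and this is exactly the matrix form of the automorphism property stated just before the lemma.

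One should check that this matrix form is valid on the diagonal. There both sides vanish, because the graph is simple and \(\pi(u)=\pi(j)\iff u=j\). Off the diagonal, it follows from the defining equivalence \(\{u,j\}\in\mathcal E\iff\{\pi(u),\pi(j)\}\in\mathcal E\) together with binarity of \(A_G\).

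The only real obstacle is keeping the index convention straight, so that \(\pi^{-1}\) appears on the left product and \(\pi\) on the right. Mixing these up would give the false-looking identity \((A_G)_{\pi(i),j}=(A_G)_{i,\pi(j)}\) in place of the correct one. An equivalent cross-check is to verify \(P_\pi^\top A_G P_\pi=A_G\) via \(P_\pi^\top e_{\pi(j)}=e_j\), and then multiply on the left by \(P_\pi\), using orthogonality \(P_\pi P_\pi^\top=I\).
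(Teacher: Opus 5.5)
Your proposal is correct and follows the paper's proof essentially verbatim: the same entrywise reduction to $(A_G)_{\pi^{-1}(i),j}$ and $(A_G)_{i,\pi(j)}$, then the automorphism identity with $u=\pi^{-1}(i)$. Your explicit check of the diagonal entries, where simplicity of the graph gives zeros on both sides, is a small extra that the paper's argument leaves implicit.
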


\begin{proof}
By the convention for $P_\pi$, we have $P_\pi e_j=e_{\pi(j)}$. Hence, for any
$i,j\in\mathcal V$,
\[
(P_\pi A_G)_{ij}
=
\sum_{\ell=1}^N (P_\pi)_{i\ell}(A_G)_{\ell j}
=
(A_G)_{\pi^{-1}(i),j},
\]
because the only nonzero term in the sum occurs when $\ell=\pi^{-1}(i)$.
Similarly,
\[
(A_G P_\pi)_{ij}
=
\sum_{\ell=1}^N (A_G)_{i\ell}(P_\pi)_{\ell j}
=
(A_G)_{i,\pi(j)}.
\]
Since $\pi$ is a graph automorphism, it preserves adjacency and non-adjacency,
so
\[
(A_G)_{ab}=(A_G)_{\pi(a),\pi(b)}
\qquad
\forall a,b\in\mathcal V .
\]
Taking $a=\pi^{-1}(i)$ and $b=j$ gives
\[
(A_G)_{\pi^{-1}(i),j}
=
(A_G)_{i,\pi(j)}.
\]
Therefore
\[
(P_\pi A_G)_{ij}=(A_G P_\pi)_{ij}
\qquad
\forall i,j\in\mathcal V .
\]
Thus $P_\pi A_G=A_G P_\pi$.
\end{proof}

\subsection{Expanded form of the node-wise dynamics}
\label{app:continuous_time_dsgd}

This appendix gives the centering and stacking calculation for the dynamics in Section~\ref{preliminaries}. In original parameter coordinates, we write
\begin{equation}
dw_i(t)
=
\left[
-\eta Q_i\bigl(w_i(t)-w_i^\star\bigr)
+
\sum_{j=1}^N C_{ij}H\bigl(w_j(t)-w_i(t)\bigr)
\right]dt
+
\sigma_i\,dB_i(t).
\label{eq:app_original_coordinate_dynamics}
\end{equation}
Here the first term is induced by the local quadratic gradient model \(\nabla\widetilde L_i(w)=Q_i(w-w_i^\star)\), and the second term is the neighbor-communication term.

Let \(z_i(t):=w_i(t)-w_i^\star\). Since \(w_i^\star\) is fixed, \(dz_i(t)=dw_i(t)\), and \(w_j(t)-w_i(t)=z_j(t)-z_i(t)+w_j^\star-w_i^\star\). Substituting this identity into \eqref{eq:app_original_coordinate_dynamics} gives
\begin{equation}
dz_i(t)
=
\left[
-\eta Q_i z_i(t)
+
\sum_{j=1}^N C_{ij}H\bigl(z_j(t)-z_i(t)\bigr)
+
\sum_{j=1}^N C_{ij}H\bigl(w_j^\star-w_i^\star\bigr)
\right]dt
+
\sigma_i\,dB_i(t).
\label{eq:app_expanded_centered_dynamics}
\end{equation}

Accordingly, let \(c_i:=\sum_{j=1}^N C_{ij}\),
\(F_i:=-\eta Q_i-c_iH\), and
\(b_i^\star:=\sum_{j=1}^N C_{ij}H(w_j^\star-w_i^\star)\).
The compact centered node-wise equation is
\begin{equation}
dz_i(t)=
\left(
F_i z_i(t)+\sum_{j=1}^N C_{ij}H z_j(t)+b_i^\star
\right)dt+\sigma_i\,dB_i(t).
\label{eq:node_affine_network}
\end{equation}
Here \(F_i z_i(t)\) contains the local quadratic drift and the
self-contribution from communication,
\(\sum_j C_{ij}H z_j(t)\) collects incoming neighbor states, while
\(b_i^\star\) represents the communication bias induced by differences
among reference local minimizers. The scalar \(C_{ij}\) specifies the
graph-level communication strength, while \(H\) maps parameter differences
into update directions.

Stacking the node equations gives
\eqref{eq:stacked_centered_sde}, with \(K_{ii}=F_i\),
\(K_{ij}=C_{ij}H\) for \(i\ne j\),
\(\Sigma_{ii}=\sigma_i\), and \(\Sigma_{ij}=0\) for \(i\ne j\).
The identity \(F_i=-\eta Q_i-(\sum_j C_{ij})H\) gives the diagonal
blocks used in the main text directly. These centering and stacking identities do not require the exact
compatibility specified in Definition~\ref{ass:orbit_identical_main}.

\subsection{Proof of Theorem~\ref{thm:orbit_decomposition_main}}
\label{app:proof_decomposition}

We first record the equivariance of the affine coefficients induced by
Definition~\ref{ass:orbit_identical_main}. This algebraic step is what makes
the orbit-aligned decomposition exact.

\begin{lemma}[Equivariance of the affine coefficients]
\label{lem:commutation}
Under Definition~\ref{ass:orbit_identical_main}, let
\(\widetilde P_\pi:=P_\pi\otimes I_m\). Then, for every \(\pi\in\Gamma\),
\[
\widetilde P_\pi K=K\widetilde P_\pi,\qquad
\widetilde P_\pi\Sigma=\Sigma\widetilde P_\pi,\qquad
\widetilde P_\pi b^\star=b^\star .
\]
\end{lemma}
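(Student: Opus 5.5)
The plan is a direct blockwise computation, working in \(m\times m\) blocks. Since \(\widetilde P_\pi=P_\pi\otimes I_m\) is a block permutation, for any \(Nm\times Nm\) block matrix \(X\) we have
\[
(\widetilde P_\pi X\widetilde P_\pi^\top)_{ij}=X_{\pi^{-1}(i),\pi^{-1}(j)}.
\]
Because \(\widetilde P_\pi\) is orthogonal, commutation \(\widetilde P_\pi X=X\widetilde P_\pi\) is equivalent to invariance, \(X_{\pi(i),\pi(j)}=X_{ij}\) for all \(i,j\). Likewise, \(\widetilde P_\pi v=v\) for a stacked vector means \(v_{\pi(i)}=v_i\). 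So all three claims reduce to checking block invariance under relabelling by \(\pi\).

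\textbf{The two basic facts.} Every \(\pi\in\Gamma\) maps each orbit \(\mathcal O_k\) onto itself, so \(i\) and \(\pi(i)\) lie in the same orbit. By Definition~\ref{ass:orbit_identical_main}, the orbit-level data are therefore preserved:
\[
Q_{\pi(i)}=Q_i,\qquad \sigma_{\pi(i)}=\sigma_i,\qquad w^\star_{\pi(i)}=w^\star_i.
\]
Definition~\ref{ass:orbit_identical_main} also gives invariance of the communication weights, \(C_{\pi(i),\pi(j)}=C_{ij}\).

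\textbf{The matrix \(K\).} There are two kinds of blocks.
\begin{itemize}
\item Off-diagonal: \(K_{\pi(i)\pi(j)}=C_{\pi(i)\pi(j)}H=C_{ij}H=K_{ij}\) for \(i\ne j\). This uses that \(\pi\) is a bijection, so \(\pi(i)\ne\pi(j)\).
\item Diagonal: reindexing the sum by \(j\mapsto\pi(j)\) gives \(c_{\pi(i)}=\sum_j C_{\pi(i)\pi(j)}=c_i\). Hence \(K_{\pi(i)\pi(i)}=-\eta Q_{\pi(i)}-c_{\pi(i)}H=K_{ii}\).
\end{itemize}

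\textbf{The matrix \(\Sigma\).} It is block diagonal with blocks \(\sigma_i\), and \(\sigma_{\pi(i)}=\sigma_i\). This gives \(\Sigma_{\pi(i)\pi(j)}=\Sigma_{ij}\) directly.

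\textbf{The vector \(b^\star\).} Again reindex the sum by \(j\mapsto\pi(j)\):
\[
b^\star_{\pi(i)}=\sum_j C_{\pi(i)\pi(j)}H\bigl(w^\star_{\pi(j)}-w^\star_{\pi(i)}\bigr)=\sum_j C_{ij}H\bigl(w^\star_j-w^\star_i\bigr)=b^\star_i.
\]
Since \(\widetilde P_\pi\) moves block \(j\) to position \(\pi(j)\), this block invariance yields \(\widetilde P_\pi b^\star=b^\star\).

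\textbf{Main obstacle.} There is no real difficulty. The only point needing care is the index convention \(P_\pi e_j=e_{\pi(j)}\), which determines the direction of the relabelling and whether \(\pi\) or \(\pi^{-1}\) appears; this is handled exactly as in Lemma~\ref{lem:automorphism-adjacency-commutation}. Since invariance under all of \(\Gamma\) is equivalent to invariance under all inverses, the convention does not affect the conclusion.
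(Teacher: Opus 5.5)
Your proof is correct. It rests on the same two facts as the paper's: orbit-wise constancy of $(Q_i,\sigma_i,w_i^\star)$ together with $C_{\pi(i),\pi(j)}=C_{ij}$, and the reindexing argument giving $c_{\pi(i)}=c_i$. The difference is only in packaging: the paper works at the matrix level, writing $K=\sum_k J^{(k)}\otimes F^{(k)}+C\otimes H$ with orbit-selector matrices $J^{(k)}$ and $b^\star=-(L_C\otimes H)w^\star$, then applying the mixed-product rule, whereas you verify the same invariance block by block.
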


\begin{proof}
Fix \(\pi\in\Gamma\). Because \(Q_i\), \(\sigma_i\), and \(w_i^\star\) are constant within
each orbit and \(P_\pi C=CP_\pi\), the block matrices defining \(K\) and
\(\Sigma\) are invariant under the lifted permutation
\(\widetilde P_\pi=P_\pi\otimes I_m\). 

Let \(c_i:=\sum_{j=1}^N C_{ij}\),
\(D_C:=\operatorname{diag}(c_1,\ldots,c_N)\), and \(L_C:=D_C-C\).
Because \(\pi\) is bijective and
\(C_{\pi(i),\pi(j)}=C_{ij}\), reindexing the weighted-degree sum gives
\[
c_{\pi(i)}
=\sum_{j=1}^N C_{\pi(i),j}
=\sum_{\ell=1}^N C_{\pi(i),\pi(\ell)}
=\sum_{\ell=1}^N C_{i\ell}
=c_i.
\]
Hence \(D_C\) commutes with \(P_\pi\): both \(P_\pi D_C\) and
\(D_CP_\pi\) map \(e_i\) to \(c_i e_{\pi(i)}\). Together with
\(P_\pi C=CP_\pi\), this implies \(P_\pi L_C=L_CP_\pi\).

Since \(Q_i\) and \(c_i\) are constant within each orbit, so is
\(F_i=-\eta Q_i-c_iH\); denote its common value on \(\mathcal O_k\) by
\(F^{(k)}\). For each \(k=1,\ldots,q\), define the diagonal orbit-selector
matrix
\[
J^{(k)}
:=
\operatorname{diag}\!\left(
\mathbf 1\{1\in\mathcal O_k\},
\ldots,
\mathbf 1\{N\in\mathcal O_k\}
\right)
\in\mathbb R^{N\times N}.
\]
Since every \(\pi\in\Gamma\) maps each orbit \(\mathcal O_k\) onto itself,
\(P_\pi J^{(k)}=J^{(k)}P_\pi\). Using \(C_{ii}=0\), the block definitions
of \(K\) and \(\Sigma\) can therefore be written as
\[
K
=
\sum_{k=1}^q J^{(k)}\otimes F^{(k)}
+
C\otimes H,
\qquad
\Sigma
=
\sum_{k=1}^q J^{(k)}\otimes\sigma^{(k)}.
\]
The mixed-product rule then gives
\[
\begin{aligned}
\widetilde P_\pi K
&=\sum_{k=1}^q(P_\pi J^{(k)})\otimes F^{(k)}
  +(P_\pi C)\otimes H\\
&=\sum_{k=1}^q(J^{(k)}P_\pi)\otimes F^{(k)}
  +(CP_\pi)\otimes H
=K\widetilde P_\pi .
\end{aligned}
\]
Applying the same argument to the orbit-wise constant diffusion blocks
\(\sigma^{(k)}\) gives
\(\widetilde P_\pi\Sigma=\Sigma\widetilde P_\pi\).

The \(i\)-th block of \(-(L_C\otimes H)w^\star\) is
\(-c_iHw_i^\star+\sum_jC_{ij}Hw_j^\star
=\sum_jC_{ij}H(w_j^\star-w_i^\star)=b_i^\star\).
Thus, \(b^\star=-(L_C\otimes H)w^\star\). Moreover, the orbit-wise
equality of the reference minimizers implies
\(\widetilde P_\pi w^\star=w^\star\). Consequently,
\[
\widetilde P_\pi b^\star
=-\widetilde P_\pi(L_C\otimes H)w^\star
=-(L_C\otimes H)\widetilde P_\pi w^\star
=b^\star.
\]
Therefore,
\(\widetilde P_\pi K=K\widetilde P_\pi\),
\(\widetilde P_\pi\Sigma=\Sigma\widetilde P_\pi\), and
\(\widetilde P_\pi b^\star=b^\star\) for every \(\pi\in\Gamma\).
\end{proof}

\begin{proof}[Proof of Theorem~\ref{thm:orbit_decomposition_main}]
\phantomsection
\label{proof:orbit_decomposition_main}

By Lemma~\ref{lem:commutation}, for every \(\pi\in\Gamma\),
\(\widetilde P_\pi K=K\widetilde P_\pi\),
\(\widetilde P_\pi\Sigma=\Sigma\widetilde P_\pi\), and
\(\widetilde P_\pi b^\star=b^\star\), where
\(\widetilde P_\pi=P_\pi\otimes I_m\).

The rows of \(T_\parallel\) are orthonormal because the orbits are disjoint;
the chosen rows of \(T_\perp\) complete them to an orthonormal basis, so
\(T\) is orthogonal. Its parallel rows compute normalized orbit averages,
while its transverse rows capture deviations from those averages.
A vector is fixed by all node relabelings in \(\Gamma\) exactly when its
node blocks are constant within each orbit. Hence
\[
\mathcal M_\Gamma
=\{x\in\mathbb R^{Nm}:\widetilde P_\pi x=x
\text{ for every }\pi\in\Gamma\}
=\operatorname{range}(T_\parallel^\top\otimes I_m).
\]

Define the group-averaged operator
\[
\widetilde\Pi_\Gamma
:=
\frac{1}{|\Gamma|}
\sum_{\pi\in\Gamma}\widetilde P_\pi .
\]
We first verify that \(\widetilde\Pi_\Gamma\) is the orthogonal projector
onto \(\mathcal M_\Gamma\). Since
\(\widetilde P_\pi^\top=\widetilde P_{\pi^{-1}}\) and inversion permutes
the elements of \(\Gamma\),
\[
\widetilde\Pi_\Gamma^\top
=
\frac{1}{|\Gamma|}
\sum_{\pi\in\Gamma}\widetilde P_{\pi^{-1}}
=
\widetilde\Pi_\Gamma.
\]
Moreover, every element of \(\Gamma\) occurs exactly \(|\Gamma|\) times
among the products \(\rho\pi\), so
\[
\widetilde\Pi_\Gamma^2
=
\frac{1}{|\Gamma|^2}
\sum_{\rho,\pi\in\Gamma}\widetilde P_{\rho\pi}
=
\frac{1}{|\Gamma|}
\sum_{\tau\in\Gamma}\widetilde P_\tau
=
\widetilde\Pi_\Gamma.
\]
For any \(\rho\in\Gamma\), group closure gives
\[
\widetilde P_\rho\widetilde\Pi_\Gamma
=
\frac{1}{|\Gamma|}
\sum_{\pi\in\Gamma}\widetilde P_{\rho\pi}
=
\widetilde\Pi_\Gamma.
\]
Thus every vector in
\(\operatorname{range}(\widetilde\Pi_\Gamma)\) is fixed by all
\(\widetilde P_\rho\). Conversely, if
\(x\in\mathcal M_\Gamma\), then
\(\widetilde P_\pi x=x\) for every \(\pi\), and hence
\(\widetilde\Pi_\Gamma x=x\). Therefore,
\[
\operatorname{range}(\widetilde\Pi_\Gamma)=\mathcal M_\Gamma,
\qquad
\ker(\widetilde\Pi_\Gamma)=\mathcal M_\Gamma^\perp.
\]

Because the rows of \(T_\parallel\) form an orthonormal basis of the
orbit-constant node subspace and the rows of \(T_\perp\) span its
orthogonal complement,
\[
\widetilde\Pi_\Gamma
=
(T_\parallel^\top T_\parallel)\otimes I_m,
\qquad
I_{Nm}-\widetilde\Pi_\Gamma
=
(T_\perp^\top T_\perp)\otimes I_m.
\]

Averaging the coefficient identities in
Lemma~\ref{lem:commutation} over \(\Gamma\) gives
\[
K\widetilde\Pi_\Gamma
=
\widetilde\Pi_\Gamma K,
\qquad
\Sigma\widetilde\Pi_\Gamma
=
\widetilde\Pi_\Gamma\Sigma,
\qquad
\widetilde\Pi_\Gamma b^\star=b^\star.
\]
The first identity shows that both
\(\mathcal M_\Gamma\) and \(\mathcal M_\Gamma^\perp\) are invariant under
\(K\). Indeed, if
\(\widetilde\Pi_\Gamma x=x\), then
\(\widetilde\Pi_\Gamma Kx=Kx\); if
\(\widetilde\Pi_\Gamma x=0\), then
\(\widetilde\Pi_\Gamma Kx=0\). The same argument applies to \(\Sigma\).
Consequently, all parallel-transverse cross-blocks vanish:
\[
\begin{aligned}
(T_\parallel\otimes I_m)K(T_\perp^\top\otimes I_m)&=0,
&
(T_\perp\otimes I_m)K(T_\parallel^\top\otimes I_m)&=0,\\
(T_\parallel\otimes I_m)\Sigma(T_\perp^\top\otimes I_m)&=0,
&
(T_\perp\otimes I_m)\Sigma(T_\parallel^\top\otimes I_m)&=0.
\end{aligned}
\]
Moreover, \(\widetilde\Pi_\Gamma b^\star=b^\star\) implies
\(b^\star\in\mathcal M_\Gamma\), and therefore
\[
(T_\perp\otimes I_m)b^\star=0.
\]

Now define
\(\tilde z(t):=(T\otimes I_m)z(t)\) and
\(\tilde B(t):=(T\otimes I_m)B(t)\). Since \(T\otimes I_m\) is
orthogonal, it preserves the Gaussian independent increments and covariance
of \(B\). In particular, for \(0\le s<t\),
\[
\operatorname{Cov}\!\left(
\tilde B(t)-\tilde B(s)
\right)
=
(t-s)(T\otimes I_m)(T^\top\otimes I_m)
=
(t-s)I_{Nm}.
\]
Hence \(\tilde B\) is a standard \(Nm\)-dimensional Brownian motion. Its
parallel and transverse coordinate blocks have zero cross-covariance; since
they are jointly Gaussian, \(\tilde B_\parallel\) and
\(\tilde B_\perp\) are independent standard Brownian motions.

Multiplying \eqref{eq:stacked_centered_sde} by \(T\otimes I_m\), and using
\(z=(T^\top\otimes I_m)\tilde z\) and
\(dB=(T^\top\otimes I_m)d\tilde B\), gives
\[
\begin{aligned}
d\tilde z(t)
={}&
\Bigl[
(T\otimes I_m)K(T^\top\otimes I_m)\tilde z(t)
+(T\otimes I_m)b^\star
\Bigr]dt\\
&+
(T\otimes I_m)\Sigma(T^\top\otimes I_m)
\,d\tilde B(t).
\end{aligned}
\]
Using the vanishing cross-blocks, the definitions of
\(K_\parallel,K_\perp,\Sigma_\parallel,\Sigma_\perp\), and the identity
\((T_\perp\otimes I_m)b^\star=0\), this equation separates into
\[
\begin{aligned}
d\tilde z_\parallel(t)
&=
\bigl[
K_\parallel\tilde z_\parallel(t)
+(T_\parallel\otimes I_m)b^\star
\bigr]dt
+\Sigma_\parallel\,d\tilde B_\parallel(t),\\
d\tilde z_\perp(t)
&=
K_\perp\tilde z_\perp(t)dt
+\Sigma_\perp\,d\tilde B_\perp(t).
\end{aligned}
\]
This proves the exact dynamical decomposition.

It remains to justify its orbit-level interpretation. From the definition of
\(T_\parallel\), the \(k\)-th block of the parallel coordinate is
\[
\bigl(\tilde z_\parallel(t)\bigr)_k
=
\frac{1}{\sqrt{n_k}}
\sum_{i\in\mathcal O_k}z_i(t),
\]
which is the normalized orbit aggregate, equivalently
\(\sqrt{n_k}\) times the arithmetic orbit average.

The communication weights also induce a well-defined quotient coupling.
Indeed, let \(i,i'\in\mathcal O_k\). Then \(i'=\pi(i)\) for some
\(\pi\in\Gamma\). Because every \(\Gamma\)-orbit is invariant under
\(\pi\), we have \(\pi(\mathcal O_\ell)=\mathcal O_\ell\), and therefore
\[
\sum_{j\in\mathcal O_\ell}C_{i'j}
=
\sum_{r\in\mathcal O_\ell}C_{\pi(i),\pi(r)}
=
\sum_{r\in\mathcal O_\ell}C_{ir}.
\]
Thus the total communication weight from a node in \(\mathcal O_k\) to
\(\mathcal O_\ell\) does not depend on the selected node in
\(\mathcal O_k\). Consequently, \(C\) maps orbit-constant vectors to
orbit-constant vectors, and
\(T_\parallel C T_\parallel^\top\) is its quotient representation in the
normalized orbit basis. Together with the orbit-wise constancy of
\(Q_i\), \(\sigma_i\), and \(w_i^\star\), this shows that the parallel
equation is precisely the affine orbit-level quotient dynamics.

Finally, for \(i,j\) in the same orbit,
\((e_i-e_j)^\top T_\parallel^\top=0\). Since \(T\) is orthogonal,
\[
z_i(t)-z_j(t)
=
\left[
(e_i-e_j)^\top T_\perp^\top\otimes I_m
\right]\tilde z_\perp(t),
\]
so the transverse coordinate contains all within-orbit disagreement modes.
Definition~\ref{ass:orbit_identical_main} also gives
\(w_i^\star=w_j^\star\). Hence
\[
w_i(t)-w_j(t)
=
\bigl(z_i(t)+w_i^\star\bigr)
-
\bigl(z_j(t)+w_j^\star\bigr)
=
z_i(t)-z_j(t).
\]
This proves all claims.
\end{proof}

\subsection{Proof of Theorem~\ref{thm:robust_orbit_auto-personalization}}
\label{app:proof_approximation_error}

The proof first bounds the coefficient perturbation and establishes a common
stability margin. We then prove the four conclusions of
Theorem~\ref{thm:robust_orbit_auto-personalization} in their stated order.
The reference orbit structure is derived directly from equivariance and
uniqueness of the stationary law, without using the later reference proofs.

\begin{proof}
We first establish the coefficient and stability bounds used throughout the
proof. Let \(M_{\rm par}<\infty\) be a common uniform bound on the reference
and perturbed local quantities, as guaranteed by
Definition~\ref{ass:approx_orbit_compatibility}. For
\(i\in\mathcal O_k\), the drift perturbation has blocks
\[
\begin{aligned}
(K_\varepsilon-K_0)_{ii}
&=-\eta(Q_i-\bar Q^{(k)})
-\left\{\sum_{j=1}^N(C_\varepsilon-C_0)_{ij}\right\}H,\\
(K_\varepsilon-K_0)_{ij}
&=(C_\varepsilon-C_0)_{ij}H\qquad(i\ne j),
\end{aligned}
\]
while
\[
(a_\varepsilon-a_0)_i
=\eta\!\left[
(Q_i-\bar Q^{(k)})w_i^\star
+\bar Q^{(k)}(w_i^\star-\bar w^{\star(k)})
\right].
\]
The off-diagonal communication perturbation is
\((C_\varepsilon-C_0)\otimes H\), and the weighted-degree correction
satisfies
\[
\max_i\left|\sum_{j=1}^N(C_\varepsilon-C_0)_{ij}\right|
\le\sqrt N\,\|C_\varepsilon-C_0\|_{\mathrm F}.
\]
Consequently,
\[
\begin{aligned}
\|K_\varepsilon-K_0\|
&\le\{\eta+(1+\sqrt N)\|H\|\}\varepsilon,\\
\|a_\varepsilon-a_0\|
&\le2\eta M_{\rm par}\sqrt N\,\varepsilon,\qquad
\|\Sigma_\varepsilon-\Sigma_0\|_{\mathrm F}
\le\sqrt N\,\varepsilon.
\end{aligned}
\]
Thus, for some \(C_{\rm coef}<\infty\) independent of \(\varepsilon\),
\[
\|K_\varepsilon-K_0\|
+\|a_\varepsilon-a_0\|
+\|\Sigma_\varepsilon-\Sigma_0\|_{\mathrm F}
\le C_{\rm coef}\varepsilon.
\]

This coefficient bound does not require stability. For the remaining claims,
assume that the reference drift matrix
\(K_0\) is Hurwitz. Then there exist \(M_K\ge1\) and \(\alpha_K>0\) such
that \(\|e^{K_0t}\|\le M_Ke^{-\alpha_Kt}\) for all \(t\ge0\).
Duhamel's formula followed by Gr\"onwall's inequality gives
\[
\|e^{K_\varepsilon t}\|
\le
M_K
\exp\!\left\{
-\bigl(\alpha_K-M_K\|K_\varepsilon-K_0\|\bigr)t
\right\}.
\]
Choose \(\varepsilon_0>0\) such that
\(M_KC_{\rm coef}\varepsilon_0\le\alpha_K/2\), and set
\(\gamma:=\alpha_K/2\). Then, for every
\(0\le\varepsilon\le\varepsilon_0\),
\[
\|e^{K_\delta t}\|
\le M_Ke^{-\gamma t},
\qquad
t\ge0,\quad \delta\in\{0,\varepsilon\}.
\]
In particular, \(K_\varepsilon\) is Hurwitz.

\textbf{Part (i): Uniform-in-time distributional stability.}
Let \(X\sim\nu\) have finite second moment and be independent of the driving
Brownian motion. Couple the two systems synchronously by setting
\(w_\varepsilon(0)=w_0(0)=X\) and using the same Brownian motion. The mild
representation of the reference process and the uniform semigroup bound give
\[
\sup_{t\ge0}
\bigl(\mathbb E\|w_0(t)\|^2\bigr)^{1/2}
\le
M_K\bigl(\mathbb E\|X\|^2\bigr)^{1/2}
+\frac{M_K}{\gamma}\|a_0\|
+\frac{M_K}{\sqrt{2\gamma}}\|\Sigma_0\|_{\mathrm F}
=:M_\nu<\infty.
\]

Define \(\zeta(t):=w_\varepsilon(t)-w_0(t)\). Since \(\zeta(0)=0\),
subtracting the two affine SDEs and applying variation of constants yields
\[
\begin{aligned}
\zeta(t)
={}&
\int_0^t e^{K_\varepsilon(t-s)}
\left[
(K_\varepsilon-K_0)w_0(s)+(a_\varepsilon-a_0)
\right]ds\\
&+
\int_0^t e^{K_\varepsilon(t-s)}
(\Sigma_\varepsilon-\Sigma_0)\,dB(s).
\end{aligned}
\]
Minkowski's inequality, It\^o's isometry, and the preceding coefficient and
semigroup bounds imply
\[
\sup_{t\ge0}
\bigl(\mathbb E\|\zeta(t)\|^2\bigr)^{1/2}
\le
\left[
\frac{M_KC_{\rm coef}}{\gamma}(M_\nu+1)
+\frac{M_KC_{\rm coef}}{\sqrt{2\gamma}}
\right]\varepsilon
=:C_{\rm dyn}(\nu)\varepsilon.
\]
Because the synchronous construction is an admissible coupling,
\[
\sup_{t\ge0}
W_2\!\left(w_\varepsilon(t),w_0(t)\right)
\le C_{\rm dyn}(\nu)\varepsilon.
\]
This proves part~(i).

\textbf{Part (ii): Stationary-distribution stability.}
For \(\delta\in\{0,\varepsilon\}\), define
\[
m_\delta:=-K_\delta^{-1}a_\delta,\qquad
\Omega_\delta:=
\int_0^\infty
e^{K_\delta s}\Sigma_\delta\Sigma_\delta^\top
e^{K_\delta^\top s}\,ds.
\]
The uniform exponential bound makes the integral finite. Moreover,
\(\Omega_\delta\succeq0\), possibly singular, and it is the unique solution
of
\[
K_\delta\Omega_\delta+\Omega_\delta K_\delta^\top
+\Sigma_\delta\Sigma_\delta^\top=0.
\]
Indeed, differentiation under the integral verifies the equation, while
Hurwitz stability gives uniqueness.

Splitting the integral defining \(\Omega_\delta\) at any \(t\ge0\) and using
the variation-of-constants representation show that
\(\varrho_\delta=\mathcal N(m_\delta,\Omega_\delta)\) is stationary. For any
initial random vector \(Y\), independent of the future Brownian increments,
\[
w_\delta(t)
=
m_\delta
+
e^{K_\delta t}(Y-m_\delta)
+
\int_0^t e^{K_\delta(t-s)}\Sigma_\delta\,dB(s).
\]
The initial-condition term converges to zero almost surely, and the
stochastic convolution converges in distribution to
\(\mathcal N(0,\Omega_\delta)\). Hence every stationary distribution must
equal \(\varrho_\delta\), proving uniqueness.

Subtracting the two stationary mean equations gives
\[
m_\varepsilon-m_0
=
-K_\varepsilon^{-1}
\left[
(K_\varepsilon-K_0)m_0+(a_\varepsilon-a_0)
\right],
\qquad
\|K_\varepsilon^{-1}\|
\le\int_0^\infty M_Ke^{-\gamma t}\,dt
=\frac{M_K}{\gamma}.
\]
It follows that
\(\|m_\varepsilon-m_0\|\le C_{\rm mean}\varepsilon\) for some
\(C_{\rm mean}<\infty\) independent of \(\varepsilon\).

Next, let \(D_\delta:=\Sigma_\delta\Sigma_\delta^\top\). Uniform boundedness
and the coefficient estimate give
\[
\|D_\varepsilon-D_0\|
\le
\bigl(\|\Sigma_\varepsilon\|+\|\Sigma_0\|\bigr)
\|\Sigma_\varepsilon-\Sigma_0\|
\le C_D\varepsilon.
\]
Define
\[
\mathfrak E_\varepsilon
:=
(K_\varepsilon-K_0)\Omega_0
+\Omega_0(K_\varepsilon-K_0)^\top
+(D_\varepsilon-D_0).
\]
Then \(\|\mathfrak E_\varepsilon\|\le C_E\varepsilon\), and subtracting the
two Lyapunov equations yields
\[
\Omega_\varepsilon-\Omega_0
=
\int_0^\infty
e^{K_\varepsilon t}
\mathfrak E_\varepsilon
e^{K_\varepsilon^\top t}\,dt.
\]
Consequently,
\[
\|\Omega_\varepsilon-\Omega_0\|
\le
\frac{M_K^2}{2\gamma}\|\mathfrak E_\varepsilon\|
\le C_{\rm cov}\varepsilon
\]
for some \(C_{\rm cov}<\infty\).

It remains to control the stationary Wasserstein distance. Extend \(B\) to a
two-sided Brownian motion and construct
\[
\widehat W^\delta
:=
m_\delta+
\int_{-\infty}^0 e^{-K_\delta s}\Sigma_\delta\,dB(s),
\qquad
\widehat W^\delta\sim\varrho_\delta.
\]
For \(u\ge0\), Duhamel's formula and the uniform semigroup estimate give
\[
\|e^{K_\varepsilon u}-e^{K_0u}\|
\le
M_K^2\|K_\varepsilon-K_0\|u e^{-\gamma u},
\qquad
\|e^{K_\varepsilon u}\Sigma_\varepsilon
-e^{K_0u}\Sigma_0\|_{\mathrm F}
\le
C_G\varepsilon(1+u)e^{-\gamma u}
\]
for some \(C_G<\infty\). Using the same two-sided Brownian motion in both
stationary representations and applying It\^o's isometry therefore gives
\[
\begin{aligned}
W_2(\varrho_\varepsilon,\varrho_0)
&\le
\bigl(
\mathbb E\|\widehat W^\varepsilon-\widehat W^0\|^2
\bigr)^{1/2}\\
&\le
\|m_\varepsilon-m_0\|
+
C_G\varepsilon
\left[
\int_0^\infty(1+u)^2e^{-2\gamma u}\,du
\right]^{1/2}
\le C_W\varepsilon.
\end{aligned}
\]
Increasing \(C_{\rm stat}\) if necessary gives
\[
W_2(\varrho_\varepsilon,\varrho_0)
+\|m_\varepsilon-m_0\|
+\|\Omega_\varepsilon-\Omega_0\|
\le C_{\rm stat}\varepsilon.
\]
This proves part~(ii).

\textbf{Part (iii): Approximate agreement within reference orbits.}
Let \(W^\delta\sim\varrho_\delta\), with node blocks \(W_i^\delta\).
Lemma~\ref{lem:commutation} applied to the reference, together with the
orbit-wise constancy of the blocks \(\eta\bar Q^{(k)}\bar w^{\star(k)}\)
of \(a_0\), implies, for every \(\pi\in\Gamma\),
\[
\widetilde P_\pi K_0=K_0\widetilde P_\pi,\qquad
\widetilde P_\pi a_0=a_0,\qquad
\widetilde P_\pi\Sigma_0=\Sigma_0\widetilde P_\pi.
\]
The transformed mean and covariance therefore satisfy the same stationary
equations as \(m_0\) and \(\Omega_0\). By uniqueness,
\[
\widetilde P_\pi m_0=m_0,\qquad
\widetilde P_\pi\Omega_0\widetilde P_\pi^\top=\Omega_0.
\]
Thus, for every \(i\in\mathcal O_k\), the quantities
\(\mu_k^0:=(m_0)_i\) and \(\Xi_k^0:=\Omega_0^{(i,i)}\) are independent of
the selected representative, and the reference marginal distributions
coincide within each orbit.

Since coordinate projection is \(1\)-Lipschitz, for distinct
\(i,j\in\mathcal O_k\),
\[
W_2(W_i^\varepsilon,W_j^\varepsilon)
\le
W_2(W_i^\varepsilon,W_i^0)
+
W_2(W_j^0,W_j^\varepsilon)
\le
2W_2(\varrho_\varepsilon,\varrho_0)
\le
2C_W\varepsilon.
\]
Moreover, the map
\(x\mapsto((e_i-e_j)^\top\otimes I_m)x\) is
\(\sqrt2\)-Lipschitz, so
\[
W_2\!\left(
W_i^\varepsilon-W_j^\varepsilon,
W_i^0-W_j^0
\right)
\le
\sqrt2\,W_2(\varrho_\varepsilon,\varrho_0)
\le
\sqrt2\,C_W\varepsilon.
\]
After increasing \(C_{\rm stat}\) if necessary, these are the two
same-orbit bounds in part~(iii).

Block projection of the stationary mean and covariance bounds further gives,
for every \(i\in\mathcal O_k\),
\[
\|(m_\varepsilon)_i-\mu_k^0\|
+
\|\Omega_\varepsilon^{(i,i)}-\Xi_k^0\|
\le
C_{\rm stat}\varepsilon.
\]
This proves part~(iii).

\textbf{Part (iv): Persistence of cross-orbit separation.}
Fix any pair \(1\le k<\ell\le q\), and use the reference gap
\(\Delta_{k\ell}^0\) defined in the theorem. For arbitrary
\(i\in\mathcal O_k\) and \(j\in\mathcal O_\ell\), the triangle inequality
gives
\[
\|(m_\varepsilon)_i-(m_\varepsilon)_j\|
+
\|\Omega_\varepsilon^{(i,i)}-\Omega_\varepsilon^{(j,j)}\|
\ge
\Delta_{k\ell}^0-2C_{\rm stat}\varepsilon.
\]
If \(2C_{\rm stat}\varepsilon<\Delta_{k\ell}^0\), the right-hand side is
positive. Hence the two stationary Gaussian marginals differ in their means
or covariances. This proves part~(iv) for every reference orbit pair.

All constants are uniform over the admissible perturbed collections for
\(0\le\varepsilon\le\varepsilon_0\). The stationary constants depend only
on the fixed reference, \(N,m,\eta,H\), and \(M_{\rm par}\), whereas
\(C_{\rm dyn}(\nu)\) may additionally depend on the second moment of
\(\nu\). Taking a single enlarged \(C_{\rm stat}\) to cover the bounds in
parts~(ii)--(iv) completes the proof.
\end{proof}

\subsection{Proof of Theorem~\ref{thm:orbit_synchronization_main}}
\label{app:proof_orbit_synchronization}

We prove the synchronization criterion using the orbit-aligned decomposition
from Theorem~\ref{thm:orbit_decomposition_main}. These exact-reference
results require only transverse stability, unlike the full stability used in
Appendix~\ref{app:proof_approximation_error}. The key linear-algebraic fact
is that same-orbit contrasts span the transverse subspace.

\begin{lemma}[Same-orbit contrasts span the transverse space]
\label{lem:orbit_contrasts_span}
Let \(U_\Gamma\subseteq\mathbb R^N\) be the space of orbit-constant vectors.
Then
\[
U_\Gamma^\perp
=
\Bigl\{x\in\mathbb R^N:
\sum_{i\in\mathcal O_k}x_i=0,\ k=1,\ldots,q\Bigr\}.
\]
Moreover, \(U_\Gamma^\perp\) is spanned by the vectors \(e_i-e_j\) with
distinct \(i,j\) belonging to the same orbit.
\end{lemma}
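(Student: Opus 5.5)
The plan is to establish the description of \(U_\Gamma^\perp\) first and then the spanning claim, using only finite-dimensional linear algebra; no symmetry or dynamics is needed beyond the fact that the orbits partition \(\mathcal V\).

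\textbf{The orthogonal complement.} The space \(U_\Gamma\) is spanned by the orbit indicators \(\mathbf 1_{\mathcal O_k}:=\sum_{i\in\mathcal O_k}e_i\), \(k=1,\ldots,q\). These vectors lie in \(U_\Gamma\), and any orbit-constant vector taking value \(c_k\) on \(\mathcal O_k\) equals \(\sum_k c_k\mathbf 1_{\mathcal O_k}\). Hence \(x\perp U_\Gamma\) if and only if \(\langle x,\mathbf 1_{\mathcal O_k}\rangle=\sum_{i\in\mathcal O_k}x_i=0\) for every \(k\). This gives the stated characterization. Because the indicators have disjoint supports, they are linearly independent, so \(\dim U_\Gamma=q\) and \(\dim U_\Gamma^\perp=N-q\).

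\textbf{Spanning.} Each contrast \(e_i-e_j\) with \(i,j\in\mathcal O_k\) has zero sum on \(\mathcal O_k\) and vanishes on the other orbits, so it lies in \(U_\Gamma^\perp\). For the reverse inclusion, take \(x\in U_\Gamma^\perp\), fix a representative \(r_k\in\mathcal O_k\) for each orbit, and write
\[
x=\sum_{k=1}^q\sum_{i\in\mathcal O_k\setminus\{r_k\}}x_i\,(e_i-e_{r_k}).
\]
Check this coordinatewise. For \(i\neq r_k\) the coefficient of \(e_i\) is \(x_i\). The coefficient of \(e_{r_k}\) is \(-\sum_{i\in\mathcal O_k\setminus\{r_k\}}x_i\), which equals \(x_{r_k}\) because the orbit sum of \(x\) vanishes. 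When \(n_k=1\), the orbit contributes no contrasts and the constraint forces \(x_{r_k}=0\), which is consistent with the formula.

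\textbf{Consistency check.} The vectors \(e_i-e_{r_k}\) give \(\sum_k(n_k-1)=N-q\) independent vectors, which matches the dimension count. There is no real obstacle in this argument. The only points to handle carefully are the singleton orbits and the use of the vanishing-sum constraint to recover the coordinate \(x_{r_k}\).
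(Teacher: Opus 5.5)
Your proposal is correct and follows essentially the same approach as the paper: you characterize \(U_\Gamma^\perp\) by orthogonality to the orbit indicators \(\mathbf 1_{\mathcal O_k}\), then show that each zero-sum block is spanned by same-orbit contrasts. The only difference is that you make the per-orbit spanning step explicit through the decomposition \(x=\sum_k\sum_{i\in\mathcal O_k\setminus\{r_k\}}x_i(e_i-e_{r_k})\), including the singleton-orbit case, whereas the paper simply asserts that step before taking the direct sum over orbits.
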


\begin{proof}
Every \(u\in U_\Gamma\) can be written as
\(u=\sum_{k=1}^q a_k\mathbf 1_{\mathcal O_k}\), where
\(\mathbf 1_{\mathcal O_k}\) denotes the indicator vector of
\(\mathcal O_k\). Hence \(x\perp U_\Gamma\) if and only if
\[
0
=
\langle x,\mathbf 1_{\mathcal O_k}\rangle
=
\sum_{i\in\mathcal O_k}x_i,
\qquad k=1,\ldots,q.
\]
For each orbit \(\mathcal O_k\), the subspace
\[
\Bigl\{x\in\mathbb R^{\mathcal O_k}:
\sum_{i\in\mathcal O_k}x_i=0\Bigr\}
\]
is spanned by the vectors \(e_i-e_j\) with distinct
\(i,j\in\mathcal O_k\). Taking the direct sum over all orbits proves the
claim.
\end{proof}

\begin{proof}[Proof of Theorem~\ref{thm:orbit_synchronization_main}]
By Theorem~\ref{thm:orbit_decomposition_main}, the transverse coordinate
satisfies the autonomous Ornstein--Uhlenbeck equation
\[
d\tilde z_\perp(t)
=
K_\perp\tilde z_\perp(t)\,dt
+
\Sigma_\perp\,d\tilde B_\perp(t).
\]

For distinct \(i,j\in\mathcal O_k\), the conditions of
Theorem~\ref{thm:orbit_decomposition_main} give
\(w_i^\star=w_j^\star\), and hence
\(w_i(t)-w_j(t)=z_i(t)-z_j(t)\). Let
\[
d_{ij}:=e_i-e_j,
\qquad
r_{ij}:=T_\perp d_{ij},
\qquad
R_{ij}:=r_{ij}^\top\otimes I_m.
\]
Because \(d_{ij}\in U_\Gamma^\perp\), it is orthogonal to the parallel
subspace. Using
\(z(t)=(T^\top\otimes I_m)\tilde z(t)\), we obtain
\[
\begin{aligned}
w_i(t)-w_j(t)
&=
(d_{ij}^\top\otimes I_m)z(t)\\
&=
(r_{ij}^\top\otimes I_m)\tilde z_\perp(t)
=
R_{ij}\tilde z_\perp(t).
\end{aligned}
\]
Moreover,
\[
R_{ij}
=
\bigl((e_i-e_j)^\top T_\perp^\top\bigr)\otimes I_m,
\]
which agrees with the definition in
Theorem~\ref{thm:orbit_synchronization_main}.

We first prove sufficiency. Suppose that \(K_\perp\) is Hurwitz. The
variation-of-constants formula gives
\[
\tilde z_\perp(t)
=
e^{K_\perp t}\tilde z_\perp(0)
+
\int_0^t
e^{K_\perp(t-s)}\Sigma_\perp\,d\tilde B_\perp(s).
\]
For every square-integrable initial condition independent of the driving
Brownian motion,
\[
\mathbb E\!\left[
\left\|e^{K_\perp t}\tilde z_\perp(0)\right\|^2
\right]
\le
\left\|e^{K_\perp t}\right\|^2
\mathbb E\!\left[\left\|\tilde z_\perp(0)\right\|^2\right]
\longrightarrow0.
\]
The stochastic convolution is centered Gaussian with covariance
\[
\Omega_\perp(t)
=
\int_0^t
e^{K_\perp s}\Sigma_\perp\Sigma_\perp^\top
e^{K_\perp^\top s}\,ds.
\]
Hurwitz stability implies
\[
\Omega_\perp(t)\longrightarrow\Omega_\perp
:=
\int_0^\infty
e^{K_\perp s}\Sigma_\perp\Sigma_\perp^\top
e^{K_\perp^\top s}\,ds,
\]
where the integral is finite and positive semidefinite. Moreover,
\[
\begin{aligned}
K_\perp\Omega_\perp+\Omega_\perp K_\perp^\top
&=
\int_0^\infty
\frac{d}{ds}
\left(
e^{K_\perp s}\Sigma_\perp\Sigma_\perp^\top
e^{K_\perp^\top s}
\right)\,ds\\
&=
-\Sigma_\perp\Sigma_\perp^\top.
\end{aligned}
\]
Thus \(\Omega_\perp\) solves
\[
K_\perp\Omega_\perp+\Omega_\perp K_\perp^\top
+\Sigma_\perp\Sigma_\perp^\top=0.
\]

To prove uniqueness, suppose that \(\Omega_1\) and \(\Omega_2\) both solve
this equation. Then \(D:=\Omega_1-\Omega_2\) satisfies
\(K_\perp D+DK_\perp^\top=0\), and therefore
\[
e^{K_\perp t}D e^{K_\perp^\top t}=D,
\qquad t\ge0.
\]
The left-hand side converges to zero because \(K_\perp\) is Hurwitz, so
\(D=0\). Hence \(\Omega_\perp\) is the unique solution.

The variation-of-constants representation and Slutsky's theorem now give
\[
\tilde z_\perp(t)
\Rightarrow
\mathcal N(0,\Omega_\perp).
\]
Applying \(R_{ij}\) yields
\[
w_i(t)-w_j(t)
=
R_{ij}\tilde z_\perp(t)
\Rightarrow
\mathcal N(0,\Omega_{ij}),
\qquad
\Omega_{ij}:=R_{ij}\Omega_\perp R_{ij}^\top.
\]
The limiting covariance depends only on the system coefficients and is
therefore independent of the initial condition. This proves sufficiency and
the stated extension to square-integrable initial conditions.

We next prove necessity. Suppose that distributional orbit synchronization
holds for every deterministic initial state. By
Lemma~\ref{lem:orbit_contrasts_span}, the vectors \(d_{ij}=e_i-e_j\), with
distinct \(i,j\) in the same orbit, span \(U_\Gamma^\perp\). Hence the vectors
\(r_{ij}=T_\perp d_{ij}\) span \(\mathbb R^{N-q}\), and the maps
\(R_{ij}=r_{ij}^\top\otimes I_m\) separate points in
\(\mathbb R^{(N-q)m}\). Since there are finitely many same-orbit pairs, norm
equivalence gives a constant \(c_{\mathrm{obs}}<\infty\) such that
\[
\|v\|
\le
c_{\mathrm{obs}}
\max_{\substack{i\neq j\\
i,j\text{ in the same orbit}}}
\|R_{ij}v\|,
\qquad
v\in\mathbb R^{(N-q)m}.
\]

Suppose, toward a contradiction, that \(K_\perp\) is not Hurwitz. In finite
dimensions, \(K_\perp\) is Hurwitz if and only if
\(e^{K_\perp t}\to0\). Hence there exists
\(\xi_0\in\mathbb R^{(N-q)m}\) such that
\(e^{K_\perp t}\xi_0\not\to0\). Choose the deterministic initial state
\[
z(0)
=
(T^\top\otimes I_m)
\begin{bmatrix}
0\\
\xi_0
\end{bmatrix},
\qquad
\tilde z_\perp(0)=\xi_0.
\]
For this initial state,
\[
\mathbb E[\tilde z_\perp(t)]
=
e^{K_\perp t}\xi_0.
\]
The observability inequality implies that there exists a distinct same-orbit
pair \(i,j\) for which
\[
R_{ij}e^{K_\perp t}\xi_0\not\to0.
\]
Consequently,
\[
\mathbb E[w_i(t)-w_j(t)]
=
R_{ij}e^{K_\perp t}\xi_0
\not\to0.
\]

For a deterministic initial state, \(w_i(t)-w_j(t)\) is Gaussian at every
time because the dynamics are linear with additive Gaussian noise. Weak
convergence of finite-dimensional Gaussian  to a centered Gaussian law
implies convergence of their mean vectors to zero, even when the limiting
covariance is singular. This contradicts the preceding display. Therefore
\(K_\perp\) is Hurwitz, completing the proof.
\end{proof}

\subsection{Proof of Theorem~\ref{thm:transverse_moment_rates_main}}
\label{app:proof_transverse_moment_rates}

\begin{proof}
Under the standing nontrivial-orbit setting, \(K_\perp\) is nonempty. By the
conditions of Theorem~\ref{thm:orbit_synchronization_main}, it is Hurwitz,
and hence
\[
\alpha_\perp
=
-\max\{\operatorname{Re}\lambda:
\lambda\in\operatorname{spec}(K_\perp)\}
>0.
\]
For every \(\beta\in(0,\alpha_\perp)\), the standard semigroup estimates for
\(K_\perp\) and \(K_\perp^\top\) give a common constant \(C_\beta\ge1\)
such that
\[
\max\bigl\{
\|e^{K_\perp t}\|,
\|e^{K_\perp^\top t}\|
\bigr\}
\le
C_\beta e^{-\beta t},
\qquad t\ge0.
\]
Taking expectations in the transverse equation gives
\[
\|m_\perp(t)\|
=\|e^{K_\perp t}m_\perp(0)\|
\le C_\beta e^{-\beta t}\|m_\perp(0)\|.
\]
For the transverse covariance matrix
\[
\operatorname{Cov}_{\perp}(t)
=
\mathbb E\left[
(\tilde z_{\perp}(t)-m_{\perp}(t))
(\tilde z_{\perp}(t)-m_{\perp}(t))^\top
\right],
\]
the Ornstein--Uhlenbeck covariance evolution gives
\[
\frac{d}{dt}\operatorname{Cov}_{\perp}(t)
=
K_{\perp}\operatorname{Cov}_{\perp}(t)
+
\operatorname{Cov}_{\perp}(t)K_{\perp}^{\top}
+
\Sigma_{\perp}\Sigma_{\perp}^{\top}.
\]
Subtracting the Lyapunov equation for \(\Omega_\perp\) and solving the
resulting homogeneous matrix equation gives
\[
\operatorname{Cov}_{\perp}(t)-\Omega_\perp
=
e^{K_\perp t}
(\operatorname{Cov}_{\perp}(0)-\Omega_\perp)
e^{K_\perp^\top t}.
\]
Therefore,
\[
\begin{aligned}
\|\operatorname{Cov}_{\perp}(t)-\Omega_\perp\|
&\le
\|e^{K_\perp t}\|
\|\operatorname{Cov}_{\perp}(0)-\Omega_\perp\|
\|e^{K_\perp^\top t}\|\\
&\le
C_\beta^2 e^{-2\beta t}
\|\operatorname{Cov}_{\perp}(0)-\Omega_\perp\|.
\end{aligned}
\]
This proves both moment-rate bounds.
\end{proof}

\subsection{Proof of Theorem~\ref{thm:symmetry_breaking_main}}
\label{app:proof_stationary_law}

We first derive the affine equilibrium from Hurwitz stability and establish
convergence to the unique stationary Gaussian distribution. We then use the
\(\Gamma\)-equivariance of the coefficients to prove that the node marginals
are common within each orbit, after which the separation criterion follows
from the characterization of finite-dimensional Gaussian distributions.

\begin{proof}[Proof of Theorem~\ref{thm:symmetry_breaking_main}]
For \(\pi\in\Gamma\), write
\(\widetilde P_\pi:=P_\pi\otimes I_m\). Under the conditions of
Theorem~\ref{thm:orbit_decomposition_main},
Lemma~\ref{lem:commutation} gives
\[
\widetilde P_\pi K=K\widetilde P_\pi,
\qquad
\widetilde P_\pi\Sigma=\Sigma\widetilde P_\pi,
\qquad
\widetilde P_\pi b^\star=b^\star.
\]
The same conditions also imply
\(\widetilde P_\pi w^\star=w^\star\).

Since \(K\) is Hurwitz, zero is not an eigenvalue of \(K\), so \(K\) is
invertible. Consequently,
\[
z_{\mathrm{eq}}:=-K^{-1}b^\star
\]
is the unique solution of
\(Kz_{\mathrm{eq}}+b^\star=0\). Moreover, for every \(\pi\in\Gamma\),
\[
K\widetilde P_\pi z_{\mathrm{eq}}+b^\star
=
\widetilde P_\pi
\bigl(Kz_{\mathrm{eq}}+b^\star\bigr)
=
0.
\]
Uniqueness of the affine equilibrium therefore gives
\(\widetilde P_\pi z_{\mathrm{eq}}=z_{\mathrm{eq}}\) for every
\(\pi\in\Gamma\). Hence
\(z_{\mathrm{eq}}\in\mathcal M_\Gamma\).

Define the recentered process
\(y(t):=z(t)-z_{\mathrm{eq}}\). Since
\(Kz_{\mathrm{eq}}+b^\star=0\), the affine SDE for \(z(t)\) becomes
\[
dy(t)=Ky(t)\,dt+\Sigma\,dB(t),
\]
where \(B(t)\) is the stacked driving Brownian motion. Its
variation-of-constants representation is
\[
y(t)
=
e^{Kt}y(0)
+
G(t),
\qquad
G(t):=
\int_0^t e^{K(t-s)}\Sigma\,dB(s).
\]

Because \(K\) is Hurwitz, there exist constants \(M_K,c_K>0\) such that
\[
\|e^{Kt}\|
\le
M_Ke^{-c_Kt},
\qquad t\ge0.
\]
For every square-integrable initial condition independent of the driving
Brownian motion,
\[
\mathbb E\!\left[\|e^{Kt}y(0)\|^2\right]
\le
M_K^2e^{-2c_Kt}\mathbb E[\|y(0)\|^2]
\longrightarrow0.
\]
Thus \(e^{Kt}y(0)\to0\) in \(L^2\), and hence in probability. In
particular, the initial condition need not be Gaussian.

The stochastic convolution \(G(t)\) is a centered, possibly degenerate,
Gaussian vector. By It\^o's isometry, its covariance is
\[
\Omega(t)=\int_0^t e^{K(t-s)}\Sigma\Sigma^\top e^{K^\top(t-s)}\,ds
=\int_0^t e^{Ks}\Sigma\Sigma^\top e^{K^\top s}\,ds.
\]
The exponential semigroup estimate gives
\[
\bigl\|
e^{Ks}\Sigma\Sigma^\top e^{K^\top s}
\bigr\|
\le
M_K^2e^{-2c_Ks}\|\Sigma\Sigma^\top\|.
\]
Therefore
\[
\Omega
:=
\int_0^\infty
e^{Ks}\Sigma\Sigma^\top e^{K^\top s}\,ds
\]
is finite and \(\Omega(t)\to\Omega\). Since every integrand is positive
semidefinite, \(\Omega\succeq0\), although it may be singular.

We next verify the Lyapunov equation. Differentiating the integrand and using
the exponential decay at infinity gives
\[
K\Omega+\Omega K^\top
=\int_0^\infty \frac{d}{ds}
\left(e^{Ks}\Sigma\Sigma^\top e^{K^\top s}\right)\,ds
=-\Sigma\Sigma^\top.
\]
Hence
\[
K\Omega+\Omega K^\top+\Sigma\Sigma^\top=0.
\]

To prove uniqueness, suppose that \(\Omega_1\) and \(\Omega_2\) both solve
this Lyapunov equation, and let \(D:=\Omega_1-\Omega_2\). Then
\(KD+DK^\top=0\), and therefore
\[
\frac{d}{dt}
\left(e^{Kt}De^{K^\top t}\right)
=
e^{Kt}(KD+DK^\top)e^{K^\top t}
=
0.
\]
It follows that \(e^{Kt}De^{K^\top t}=D\) for every \(t\ge0\). Since \(K\)
is Hurwitz, the left-hand side converges to zero as \(t\to\infty\), and
hence \(D=0\). Thus \(\Omega\) is the unique solution of the Lyapunov
equation.

For every \(u\in\mathbb R^{Nm}\), the characteristic function of \(G(t)\)
satisfies
\[
\mathbb E\!\left[e^{\mathrm{i}u^\top G(t)}\right]
=
\exp\!\left(-\frac12u^\top\Omega(t)u\right)
\longrightarrow
\exp\!\left(-\frac12u^\top\Omega u\right).
\]
Consequently,
\(G(t)\Rightarrow\mathcal N(0,\Omega)\), including when \(\Omega\) is
singular. Combining this convergence with
\(e^{Kt}y(0)\to0\) in probability and applying Slutsky's theorem yields
\[
y(t)\Rightarrow\mathcal N(0,\Omega).
\]
Since \(w(t)=w^\star+z_{\mathrm{eq}}+y(t)\), the continuous mapping theorem
gives
\[
w(t)
\Rightarrow
\mathcal N(w^\star+z_{\mathrm{eq}},\Omega).
\]

We now show that this limiting Gaussian distribution is stationary. Splitting
the integral defining \(\Omega\) at time \(t\) gives
\[
\Omega
=
e^{Kt}\Omega e^{K^\top t}
+
\Omega(t).
\]
If \(y(0)\sim\mathcal N(0,\Omega)\) independently of the future Brownian
increments, then \(e^{Kt}y(0)\) and \(G(t)\) are independent centered
Gaussian vectors. Their covariance matrices sum to \(\Omega\), so
\(y(t)\sim\mathcal N(0,\Omega)\) for every \(t\ge0\). Hence
\(\mathcal N(0,\Omega)\) is stationary for the \(y\)-process, and its
translation
\[
\mathcal N(w^\star+z_{\mathrm{eq}},\Omega)
\]
is stationary for the \(w\)-process.

This stationary distribution is unique among all probability distributions,
not only among Gaussian distributions. Let \(\nu\) be any stationary
distribution of the \(y\)-process, and initialize
\(y_\nu(0)\sim\nu\) independently of the Brownian motion. Then
\[
y_\nu(t)
=
e^{Kt}y_\nu(0)+G(t)
\]
has distribution \(\nu\) for every \(t\ge0\). Moreover,
\[
\|e^{Kt}y_\nu(0)\|
\le
M_Ke^{-c_Kt}\|y_\nu(0)\|
\longrightarrow0
\qquad\text{almost surely}.
\]
Since \(G(t)\Rightarrow\mathcal N(0,\Omega)\), Slutsky's theorem gives
\(y_\nu(t)\Rightarrow\mathcal N(0,\Omega)\). But
\(y_\nu(t)\) has distribution \(\nu\) for every \(t\), so
\(\nu=\mathcal N(0,\Omega)\). Translation now shows that
\(\mathcal N(w^\star+z_{\mathrm{eq}},\Omega)\) is the unique stationary
distribution of the \(w\)-process.

It remains to establish the orbit structure of this distribution. Since
\(\widetilde P_\pi\) is orthogonal and commutes with \(\Sigma\),
\[
\widetilde P_\pi\Sigma\Sigma^\top\widetilde P_\pi^\top
=(\widetilde P_\pi\Sigma)(\widetilde P_\pi\Sigma)^\top
=(\Sigma\widetilde P_\pi)(\Sigma\widetilde P_\pi)^\top
=\Sigma\Sigma^\top.
\]
Consequently,
\(\widetilde P_\pi\Omega\widetilde P_\pi^\top\) satisfies
\[
K\bigl(\widetilde P_\pi\Omega\widetilde P_\pi^\top\bigr)+\bigl(\widetilde P_\pi\Omega\widetilde P_\pi^\top\bigr)K^\top+\Sigma\Sigma^\top
=\widetilde P_\pi\bigl(K\Omega+\Omega K^\top+\Sigma\Sigma^\top\bigr)\widetilde P_\pi^\top=0.
\]
By uniqueness of the Lyapunov solution,
\[
\widetilde P_\pi\Omega\widetilde P_\pi^\top=\Omega,
\qquad
\pi\in\Gamma.
\]
Together with
\(\widetilde P_\pi w^\star=w^\star\) and
\(\widetilde P_\pi z_{\mathrm{eq}}=z_{\mathrm{eq}}\), this shows that both
the limiting mean and covariance are \(\Gamma\)-invariant.

For a block vector \(x\in\mathbb R^{Nm}\), let \(x_i\) denote its \(i\)-th
\(m\)-dimensional block. For a conformably partitioned matrix
\(A\in\mathbb R^{Nm\times Nm}\), let \(A^{(i,j)}\) denote its
\((i,j)\)-th \(m\times m\) block. The convention
\(P_\pi e_i=e_{\pi(i)}\) implies
\[
(\widetilde P_\pi x)_{\pi(i)}=x_i,
\qquad
(\widetilde P_\pi A\widetilde P_\pi^\top)^{(\pi(i),\pi(j))}
=
A^{(i,j)}.
\]

If \(i,i'\in\mathcal O_k\), then there exists \(\pi\in\Gamma\) such that
\(i'=\pi(i)\). The preceding invariance identities therefore imply
\[
(w^\star+z_{\mathrm{eq}})_{i'}
=
(w^\star+z_{\mathrm{eq}})_i,
\qquad
\Omega^{(i',i')}
=
\Omega^{(i,i)}.
\]
Thus, for any \(i\in\mathcal O_k\), the quantities
\[
\mu_k:=(w^\star+z_{\mathrm{eq}})_i,
\qquad
\Xi_k:=\Omega^{(i,i)}
\]
are independent of the chosen representative \(i\).

Applying the continuous mapping theorem to the \(i\)-th node block of
\(w(t)\) gives
\[
w_i(t)
\Rightarrow
\mathcal N(\mu_k,\Xi_k),
\qquad
i\in\mathcal O_k.
\]
Therefore every orbit has a well-defined common limiting marginal
distribution.

Finally, since \(q\ge2\) by the standing assumption, the collection of
distinct orbit pairs is nonempty. Two finite-dimensional Gaussian
distributions, including singular ones, are identical if and only if their
mean vectors and covariance matrices are identical. Therefore, for any
distinct \(k,\ell\), the common limiting marginal distributions of
\(\mathcal O_k\) and \(\mathcal O_\ell\) are different if and only if
\[
\mu_k\neq\mu_\ell
\qquad\text{or}\qquad
\Xi_k\neq\Xi_\ell.
\]
Consequently, by Definition~\ref{def:orbit_separation}, the limiting regime
exhibits distributional orbit separation if and only if there exist
\(k\neq\ell\) such that
\[
\mu_k\neq\mu_\ell
\qquad\text{or}\qquad
\Xi_k\neq\Xi_\ell.
\]
This completes the proof.
\end{proof}

\subsection{Proof of Corollary~\ref{cor:orbit_separation_margin}}
\label{app:proof_auto-personalization_margin}

The block-parameter bound in
Theorem~\ref{thm:robust_orbit_auto-personalization} gives the margin
corollary by comparing every stationary node marginal with its reference
orbit marginal.

\begin{proof}[Proof of Corollary~\ref{cor:orbit_separation_margin}]
Under the stated assumptions, \eqref{eq:block_parameter_perturbation_bound}
gives, for every \(i\in\mathcal O_k\),
\[
d_{\rm par}\!\left(\nu_i^\varepsilon,\mathcal N(\mu_k^0,\Xi_k^0)\right)
\le C_{\rm stat}\varepsilon.
\]
For \(i,j\in\mathcal O_k\), the triangle inequality through their common
reference marginal yields
\[
d_{\rm par}(\nu_i^\varepsilon,\nu_j^\varepsilon)
\le 2C_{\rm stat}\varepsilon.
\]
Taking the maximum over same-orbit pairs gives
\(D_{\rm in}(\varepsilon)\le2C_{\rm stat}\varepsilon\).
For \(i\in\mathcal O_k\), \(j\in\mathcal O_\ell\), and \(k<\ell\), the
reverse triangle inequality gives
\[
d_{\rm par}(\nu_i^\varepsilon,\nu_j^\varepsilon)
\ge\Delta_{k\ell}^0-2C_{\rm stat}\varepsilon
\ge\underline\Delta_0-2C_{\rm stat}\varepsilon.
\]
Taking the minimum over cross-orbit pairs gives the stated lower bound for
\(D_{\rm out}(\varepsilon)\). Therefore
\[
\mathcal M_{\rm orb}(\varepsilon)
=D_{\rm out}(\varepsilon)-D_{\rm in}(\varepsilon)
\ge\underline\Delta_0-4C_{\rm stat}\varepsilon.
\]
By assumption, \(\underline\Delta_0>0\).
If \(0\le\varepsilon\le\varepsilon_0\) and
\(4C_{\rm stat}\varepsilon<\underline\Delta_0\), then
\[
D_{\rm in}(\varepsilon)
\le 2C_{\rm stat}\varepsilon
<\frac{\underline\Delta_0}{2}
<\underline\Delta_0-2C_{\rm stat}\varepsilon
\le D_{\rm out}(\varepsilon).
\]
Hence the separation margin is positive, and the threshold
\(\underline\Delta_0/2\) separates all within-orbit pairs from
all cross-orbit pairs in \(d_{\rm par}\).
\end{proof}

\subsection{Proof of Theorem~\ref{thm:dt-learning-robustness}}
\label{app:dt-learning-proofs}

We first derive the stationary risk decomposition, the two comparison
benchmarks, and the exact communication gain identity for a fixed
configuration with \(Q_i\succ0\), symmetric nonnegative communication
weights, and \(H+H^\top\succeq0\).
We then establish the reference gain bound and use
Theorem~\ref{thm:robust_orbit_auto-personalization} to transfer both
learning advantages to approximately compatible collections.

All risks use the quadratic evaluation losses from
Section~\ref{preliminaries}, with the local coefficients matched within
each comparison. The expectation is over the stationary parameter law;
for quadratic population losses this is their expected excess risk,
and otherwise it is the risk of the local quadratic surrogate.
The exact gain identity separates the reduction in stationary fluctuation
cost from the bias due to coupling different local optima. The coefficient
bound \(g_0\) is a sufficient certificate for a positive gain, while
\(C_{\rm risk}\varepsilon\) controls the loss of its margin under
perturbation. An explicit coefficient check is given in
Appendix~\ref{app:four_orbit_risk}.

\begin{proof}[Proof of Theorem~\ref{thm:dt-learning-robustness}]
For \(\delta\in\{0,\varepsilon\}\), write
\(Q_i^\delta,w_i^{\star,\delta}\) for the local curvature and minimizer,
and set \(\mathsf Q_\delta:=\operatorname{blkdiag}(Q_i^\delta)\) and
\(w_\delta^\star:=\operatorname{col}(w_i^{\star,\delta})\).
Thus \(Q_i^\varepsilon=Q_i\) and
\(w_i^{\star,\varepsilon}=w_i^\star\), while
\(Q_i^0=\bar Q^{(k)}\) and
\(w_i^{\star,0}=\bar w^{\star(k)}\) for \(i\in\mathcal O_k\).
Let \(\varepsilon_0\) initially be the radius supplied by
Theorem~\ref{thm:robust_orbit_auto-personalization}, and set
\(q_*:=\frac12\min_k\lambda_{\min}(\bar Q^{(k)})>0\).
After decreasing \(\varepsilon_0\) so that \(\varepsilon_0\le q_*\),
the curvature perturbation bound gives
\(Q_i^\delta\succeq q_*I_m\)
for every node and \(\delta\in\{0,\varepsilon\}\).

\paragraph{Risk decomposition and comparison benchmarks.}
Let \(e_\delta:=m_\delta-w_\delta^\star\). Expanding the quadratic
losses around \(m_\delta\), using
\(\mathbb E(W^\delta-m_\delta)=0\) and
\(\operatorname{Cov}(W^\delta)=\Omega_\delta\), gives
\begin{equation}
\mathcal E_\delta=\frac{1}{2N}\left[e_\delta^\top\mathsf Q_\delta e_\delta+\operatorname{tr}(\mathsf Q_\delta\Omega_\delta)\right].
\label{eq:dt-risk-moments}
\end{equation}
This identity applies to both the reference and perturbed coefficients.

The independent-training benchmark is obtained by setting
\(C_\delta=0\) while keeping the local objectives, \(\eta\), and
\(\Sigma_\delta\) fixed. The resulting process has drift
\(-\eta\mathsf Q_\delta(w-w_\delta^\star)\), stationary mean
\(w_\delta^\star\), and covariance \(\Omega_{\rm ind}^\delta\)
satisfying
\(\eta\mathsf Q_\delta\Omega_{\rm ind}^\delta+\eta\Omega_{\rm ind}^\delta\mathsf Q_\delta=\Sigma_\delta\Sigma_\delta^\top\).
Taking traces and applying \eqref{eq:dt-risk-moments} gives
\begin{equation}
\mathcal E_{\rm ind}^\delta=\frac{\operatorname{tr}(\mathsf Q_\delta\Omega_{\rm ind}^\delta)}{2N}=\frac{\|\Sigma_\delta\|_{\mathrm F}^2}{4N\eta}.
\label{eq:dt-independent-risk}
\end{equation}

For a common parameter \(v\in\mathbb R^m\), define the average excess
risk \(F_\delta(v):=(2N)^{-1}\sum_i(v-w_i^{\star,\delta})^\top Q_i^\delta(v-w_i^{\star,\delta})\).
The shared-parameter benchmark is
\[
\mathcal H_{\rm one}^\delta:=\min_{v\in\mathbb R^m}F_\delta(v).
\]
The unique minimizer is
\(v_\delta^\star=(\sum_iQ_i^\delta)^{-1}\sum_iQ_i^\delta w_i^{\star,\delta}\),
with \(\mathcal H_{\rm one}^\delta=F_\delta(v_\delta^\star)\).
Completing the square yields
\begin{equation}
F_\delta(v)=\mathcal H_{\rm one}^\delta+\frac{1}{2N}(v-v_\delta^\star)^\top\left(\sum_iQ_i^\delta\right)(v-v_\delta^\star).
\label{eq:dt-common-oracle}
\end{equation}
Consequently, \(\mathbb E F_\delta(Y)\ge\mathcal H_{\rm one}^\delta\)
for every random shared parameter \(Y\); positive definiteness makes
its risk infinite when \(\mathbb E\|Y\|^2=\infty\).

\paragraph{Exact communication gain.}
For either \(\delta\in\{0,\varepsilon\}\), set
\(L_{C_\delta}:=\operatorname{diag}(C_\delta\mathbf1_N)-C_\delta\)
and \(\mathsf S_\delta:=L_{C_\delta}\otimes(H+H^\top)/2\).
The graph assumptions and \(H+H^\top\succeq0\) imply
\(\mathsf S_\delta\succeq0\). Moreover,
\[
-K_\delta=\eta\mathsf Q_\delta+L_{C_\delta}\otimes H,\qquad -\frac{K_\delta+K_\delta^\top}{2}=\eta\mathsf Q_\delta+\mathsf S_\delta\succ0.
\]
Taking the trace of the stationary Lyapunov equation gives
\[
\|\Sigma_\delta\|_{\mathrm F}^2=2\eta\operatorname{tr}(\mathsf Q_\delta\Omega_\delta)+2\operatorname{tr}(\mathsf S_\delta\Omega_\delta).
\]
Combining this equality with \eqref{eq:dt-risk-moments} and
\eqref{eq:dt-independent-risk} yields the exact gain identity
\begin{equation}
\mathcal E_{\rm ind}^\delta-\mathcal E_\delta=\frac{\operatorname{tr}(\mathsf S_\delta\Omega_\delta)}{2N\eta}-\frac{e_\delta^\top\mathsf Q_\delta e_\delta}{2N}.
\label{eq:dt-communication-gain-identity}
\end{equation}
The first term is the reduction in stationary fluctuation cost, and the
second is the coupling bias. The derivation uses neither
orbit compatibility nor positive definiteness of the diffusion covariance.

\paragraph{Reference gain certificate.}
The reference centered forcing is
\(b_0^\star=-(L_{C_0}\otimes H)w_0^\star\), so the stationary
mean equation gives \(-K_0e_0=b_0^\star\).
Weighted Cauchy--Schwarz therefore gives
\[
\eta e_0^\top\mathsf Q_0e_0\le-e_0^\top K_0e_0=e_0^\top b_0^\star\le\sqrt{e_0^\top\mathsf Q_0e_0}\sqrt{(b_0^\star)^\top\mathsf Q_0^{-1}b_0^\star}.
\]
Thus
\begin{equation}
\frac{e_0^\top\mathsf Q_0e_0}{2N}\le\frac{(b_0^\star)^\top\mathsf Q_0^{-1}b_0^\star}{2N\eta^2},
\label{eq:dt-reference-bias-bound}
\end{equation}
including when \(e_0=0\).

For the covariance lower bound, set
\[
\mathsf R_0:=-\frac{K_0+K_0^\top}{2}=\eta\mathsf Q_0+\mathsf S_0\succ0.
\]
Thus \(\beta_0=\lambda_{\max}(\mathsf R_0)>0\), and
\(\Sigma_0\Sigma_0^\top\succeq d_0I\) with \(d_0>0\).
Since \(\mathsf R_0\preceq\beta_0I\),
\[
\Sigma_0\Sigma_0^\top-\frac{d_0}{\beta_0}\mathsf R_0\succeq d_0\left(I-\frac{\mathsf R_0}{\beta_0}\right)\succeq0.
\]
Since \(K_0\) is Hurwitz, integration of the derivative of
\(e^{K_0t}e^{K_0^\top t}\) gives
\(\int_0^\infty e^{K_0t}(K_0+K_0^\top)e^{K_0^\top t}\,dt=-I\).
Combining this identity with the stationary covariance
representation yields
\[
\Omega_0-\frac{d_0}{2\beta_0}I=\int_0^\infty e^{K_0t}\left[\Sigma_0\Sigma_0^\top-\frac{d_0}{\beta_0}\mathsf R_0\right]e^{K_0^\top t}\,dt\succeq0.
\]
Since \(\mathsf S_0\succeq0\), the covariance lower bound and
\eqref{eq:dt-reference-bias-bound}, applied to
\eqref{eq:dt-communication-gain-identity} at \(\varepsilon=0\), imply
\[
\mathcal E_{\rm ind}^0-\mathcal E_0\ge\frac{d_0\operatorname{tr}(\mathsf S_0)}{4N\eta\beta_0}-\frac{(b_0^\star)^\top\mathsf Q_0^{-1}b_0^\star}{2N\eta^2}=g_0.
\]
The last equality follows from \eqref{eq:dt-coefficient-gain}.
Indeed, the zero diagonal of \(C_0\) and the block-diagonal form of
\(\mathsf Q_0\) give
\[
\operatorname{tr}(\mathsf S_0)=\operatorname{tr}(H)\operatorname{tr}(L_{C_0})=\operatorname{tr}(H)\sum_{i,j=1}^N(C_0)_{ij},\qquad (b_0^\star)^\top\mathsf Q_0^{-1}b_0^\star=\sum_{k=1}^q\sum_{i\in\mathcal O_k}(b_0^\star)_i^\top(\bar Q^{(k)})^{-1}(b_0^\star)_i.
\]

\paragraph{Perturbation of the risks and benchmarks.}
In the remaining estimates, \(C<\infty\) may change from line to line
but depends only on the quantities permitted in
Theorem~\ref{thm:robust_orbit_auto-personalization}.
Definition~\ref{ass:approx_orbit_compatibility} gives
\[
\|\mathsf Q_\varepsilon-\mathsf Q_0\|\le\varepsilon,\qquad \|w_\varepsilon^\star-w_0^\star\|\le\sqrt N\,\varepsilon,\qquad \|\Sigma_\varepsilon-\Sigma_0\|_{\mathrm F}\le\sqrt N\,\varepsilon.
\]
Together with the stationary moment bounds in
Theorem~\ref{thm:robust_orbit_auto-personalization}, these imply
\(\|e_\varepsilon-e_0\|\le C\varepsilon\) and uniform bounds on
\(\|e_\delta\|\), \(\|\mathsf Q_\delta\|\), and
\(\|\Omega_\delta\|\). Expanding
\eqref{eq:dt-risk-moments}, using
\(|\operatorname{tr}(AB)|\le Nm\|A\|\|B\|\) for
\(Nm\times Nm\) matrices, and absorbing the uniform norm bounds
into \(C\), gives
\[
|\mathcal E_\varepsilon-\mathcal E_0|\le C\bigl(\|e_\varepsilon-e_0\|+\|\mathsf Q_\varepsilon-\mathsf Q_0\|+\|\Omega_\varepsilon-\Omega_0\|\bigr)\le C\varepsilon.
\]
Similarly, \eqref{eq:dt-independent-risk} gives
\[
|\mathcal E_{\rm ind}^\varepsilon-\mathcal E_{\rm ind}^0|\le\frac{\|\Sigma_\varepsilon\|_{\mathrm F}+\|\Sigma_0\|_{\mathrm F}}{4N\eta}\|\Sigma_\varepsilon-\Sigma_0\|_{\mathrm F}\le C\varepsilon.
\]

For the shared-parameter benchmark, write \(M:=M_{\rm par}\).
Since \(\sum_iQ_i^\delta\succeq Nq_*I_m\), the formula for
\(v_\delta^\star\) and the parameter bounds give
\[
\|v_\delta^\star\|\le\frac{1}{Nq_*}\sum_i\|Q_i^\delta\|\,\|w_i^{\star,\delta}\|\le\frac{M^2}{q_*}=:R.
\]
For any \(v\) with \(\|v\|\le R\), set
\(r_i^\delta:=v-w_i^{\star,\delta}\). Then
\(\|r_i^\delta\|\le R+M\) and
\(\|r_i^\varepsilon-r_i^0\|\le\varepsilon\).
Set \(C_{\rm one}:=\frac12(R+M)^2+M(R+M)\).
Expanding the quadratic forms and using these norm bounds gives
\[
|F_\varepsilon(v)-F_0(v)|\le\frac{(R+M)^2}{2N}\sum_i\|Q_i^\varepsilon-Q_i^0\|+\frac{M(R+M)}{N}\sum_i\|r_i^\varepsilon-r_i^0\|\le C_{\rm one}\varepsilon.
\]
The constant \(C_{\rm one}\) is independent of the perturbed collection.
Evaluating this bound at \(v_0^\star\) and
\(v_\varepsilon^\star\), respectively, yields
\[
\mathcal H_{\rm one}^\varepsilon\le F_\varepsilon(v_0^\star)\le\mathcal H_{\rm one}^0+C_{\rm one}\varepsilon,\qquad \mathcal H_{\rm one}^0\le F_0(v_\varepsilon^\star)\le\mathcal H_{\rm one}^\varepsilon+C_{\rm one}\varepsilon.
\]
Hence
\(|\mathcal H_{\rm one}^\varepsilon-\mathcal H_{\rm one}^0|\le C_{\rm one}\varepsilon\).

\textbf{Part (i): Benefit of communication.}
Combining the reference gain bound with the preceding perturbation
estimates gives a finite constant \(C_{\rm risk}\) such that
\[
\mathcal E_{\rm ind}^\varepsilon-\mathcal E_\varepsilon\ge\mathcal E_{\rm ind}^0-\mathcal E_0-C_{\rm risk}\varepsilon\ge g_0-C_{\rm risk}\varepsilon.
\]

\textbf{Part (ii): Benefit of task-specific parameters.}
The same estimates give, after a common enlargement of
\(C_{\rm risk}\),
\[
\mathcal H_{\rm one}^\varepsilon-\mathcal E_\varepsilon\ge\mathcal H_{\rm one}^0-\mathcal E_0-C_{\rm risk}\varepsilon\ge g_0+\mathcal H_{\rm one}^0-\mathcal E_{\rm ind}^0-C_{\rm risk}\varepsilon.
\]
The assumption on \(g_0\) makes both reference lower bounds positive.
Decrease \(\varepsilon_0\) further so that
\[
C_{\rm risk}\varepsilon_0\le\frac12\min\left\{g_0,\,g_0+\mathcal H_{\rm one}^0-\mathcal E_{\rm ind}^0\right\}.
\]
Both lower bounds are therefore strictly positive for every
\(0\le\varepsilon\le\varepsilon_0\), which establishes the two
strict risk comparisons.
Equation~\eqref{eq:dt-common-oracle} extends the second comparison to
every random shared parameter. All constants and the final
perturbation radius are uniform over the admissible collections.
\end{proof}

\section{Case study: Quadratic four-orbit system}
\label{app:four_orbit_quadratic}

\begin{figure}[htbp]
    \centering
    \includegraphics[width=\textwidth]{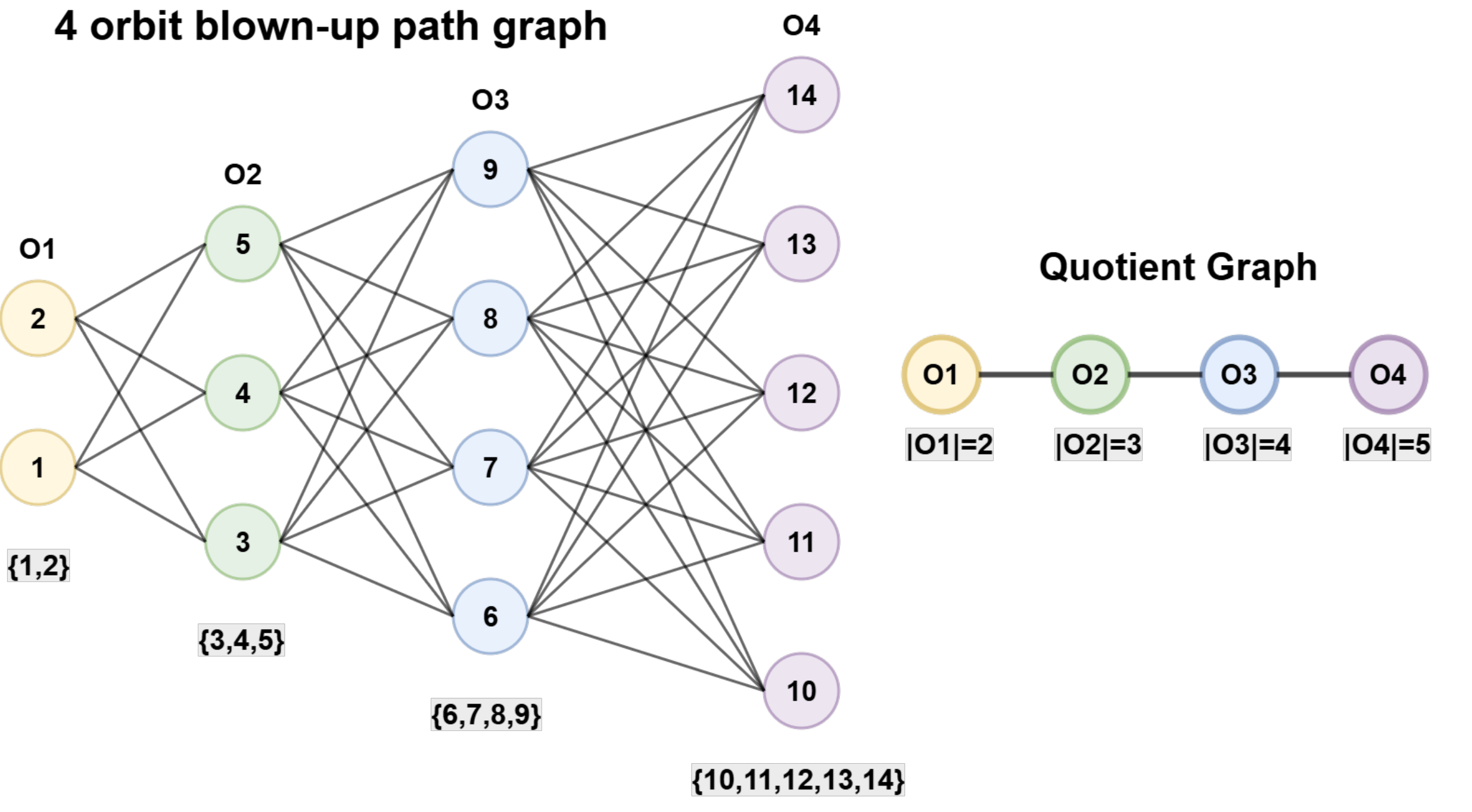}
    \caption{Four-orbit blown-up path graph and its quotient graph.
    Colors indicate automorphism orbits.}
    \label{fig:common_topology}
\end{figure}

This appendix analytically verifies the orbit structure and learning
advantages of the scalar quadratic four-orbit system.
The graph in Figure~\ref{fig:common_topology} is the common communication
topology used throughout the synthetic and real-data experiments
in the main text.
We first establish the reference decomposition and stability, then
derive the stationary orbit marginals and verify distributional
synchronization, cross-orbit separation, and transverse relaxation rates.
We next check the sufficient condition in
Theorem~\ref{thm:dt-learning-robustness}, quantify the gains over
matched independent training and the best single shared parameter,
and derive analytical bounds under joint perturbations.
Appendix~\ref{app:synthetic_experiments} presents the numerical
implementation and results, including a qualitative nonquadratic
extension, while Appendix~\ref{app:real_world_experiments}
is devoted to the real-data experiments.

\subsection{Reference orbit decomposition and stability}

The reference graph is the four-orbit blown-up path with
\[
\mathcal O_1=\{1,2\},\quad
\mathcal O_2=\{3,4,5\},\quad
\mathcal O_3=\{6,7,8,9\},\quad
\mathcal O_4=\{10,11,12,13,14\}.
\]
There are no within-orbit edges, and adjacent orbits form complete
bipartite blocks. Their sizes and unweighted degrees are
\[
(n_1,n_2,n_3,n_4)=(2,3,4,5),\qquad
(d_1,d_2,d_3,d_4)=(3,6,8,4).
\]
Every within-orbit permutation preserves adjacency, while the distinct
degrees prevent automorphisms from mapping nodes across these sets.
Thus these are exactly the automorphism orbits of the reference graph;
we take \(\Gamma=\mathrm{Aut}(\mathcal G)\).

Let \(R\in\mathbb R^{14\times4}\) be the orbit-indicator matrix,
\(D_o=R^\top R=\operatorname{diag}(2,3,4,5)\), and
\(\mathcal M_\Gamma=\operatorname{im}R\), the scalar orbit-consensus
subspace used in Theorem~\ref{thm:orbit_decomposition_main}.
Set \(T_{\parallel}=D_o^{-1/2}R^\top\), and let the rows of
\(T_{\perp}\) form an orthonormal basis of \(\mathcal M_\Gamma^\perp\).
The projections are
\(P_{\parallel}=T_{\parallel}^\top T_{\parallel}\) and
\(P_{\perp}=I-P_{\parallel}\).
Writing \(A\) for the reference adjacency matrix, we obtain
\[
\begin{gathered}
A_{\mathrm q}=D_o^{-1}R^\top A R
=
\begin{pmatrix}
0&3&0&0\\
2&0&4&0\\
0&3&0&5\\
0&0&4&0
\end{pmatrix},
\\
A_{\parallel}
=D_o^{1/2}A_{\mathrm q}D_o^{-1/2}
=
\begin{pmatrix}
0&\sqrt6&0&0\\
\sqrt6&0&2\sqrt3&0\\
0&2\sqrt3&0&2\sqrt5\\
0&0&2\sqrt5&0
\end{pmatrix}.
\end{gathered}
\]
For \(u\in \mathcal M_\Gamma^\perp\), coordinates sum to zero within each orbit,
so \(Au=0\). This eliminates the transverse adjacency contribution,
not the entire communication drift: the degree term in the Laplacian
still damps transverse modes.

The local losses and dynamics are
\begin{equation}
\mathcal L_i(w)=\tfrac12 Q_i(w-r_i)^2,\quad dw_i=\left[-Q_i(w_i-r_i) +\sum_j C_{ij}(w_j-w_i)\right]dt+\sigma_i\,dB_i,
\label{eq:app_affine_sde}
\end{equation}
where the Brownian coordinates are independent and \(\eta=H=1\).
Here \(r_i=w_i^\star\), and \(\mathcal L_i\) is the quadratic
evaluation loss used in the expected excess risk analysis.
The reference parameters are
\[
Q_{0,i}=1,\qquad \sigma_{0,i}=1,\qquad C_0=0.1A,\quad r_0=Rr_\star,\qquad r_\star=(1,0.4,-0.2,-1)^\top.
\]
With \(L_C=\operatorname{diag}(C\mathbf1)-C\), define
\[
K=-\operatorname{diag}(Q)-L_C,\qquad
a=\operatorname{diag}(Q)r,\qquad
\Sigma=\operatorname{diag}(\sigma).
\]
Then \(dw=(Kw+a)dt+\Sigma\,dB\). In particular,
\[
K_0=-I-0.1L_A,\qquad a_0=r_0,\qquad\Sigma_0=I,
\]
where \(L_A=\operatorname{diag}(A\mathbf1)-A\).
The orbit-value drift is
\[
K_{\mathrm q}=
\begin{pmatrix}
-1.3&0.3&0&0\\
0.2&-1.6&0.4&0\\
0&0.3&-1.8&0.5\\
0&0&0.4&-1.4
\end{pmatrix},
\qquad
K_{\parallel}=D_o^{1/2}K_{\mathrm q}D_o^{-1/2}.
\]
In an orbit-adapted transverse basis,
\[
K_{\perp} =\operatorname{diag}(-1.3,-1.6I_2,-1.8I_3,-1.4I_4).
\]
An arbitrary orthonormal transverse basis gives an orthogonally similar
matrix. Numerically,
\[
\operatorname{spec}(K_{\mathrm q}) \approx\{-2.243823,-1.647616,-1.208561,-1\}.
\]
These blocks verify Theorem~\ref{thm:orbit_decomposition_main} and
are Hurwitz. Independently, \(L_A\succeq0\) implies
\(K_0\preceq-I\), establishing full-system stability directly.
The transverse decay rate is
\(\alpha_{\perp}=-\lambda_{\max}(K_{\perp})=1.3\).

\subsection
{Stationary distributions and convergence rates}

The unique invariant law is \(\varrho_0=\mathcal N(m_0,\Omega_0)\), where
\[
K_0m_0+r_0=0,\qquad K_0\Omega_0+\Omega_0K_0^\top+I=0.
\]
Writing \(M_0=I+0.1L_A\), we have
\begin{equation}
m_0=M_0^{-1}r_0,\qquad
\Omega_0=\frac12M_0^{-1}.
\label{eq:app_reference_law}
\end{equation}
The mean is \(m_0=R\bar w\), with
\(\bar w=-K_{\mathrm q}^{-1}r_\star\).
In orbit-value coordinates, the orbit averages satisfy
\[
d\bar w(t)=(K_{\mathrm q}\bar w(t)+r_\star)dt +D_o^{-1/2}\,d\widetilde B(t).
\]
Thus orbit-value and orthonormal coordinates have different noise
normalizations. In shifted coordinates \(z=w-r_0\), the quotient
forcing instead becomes
\[
b_{\mathrm q}=-0.1L_{\mathrm q}r_\star =(-0.18,-0.12,-0.22,0.32)^\top,
\]
where \(L_{\mathrm q}=\operatorname{diag}(d_1,d_2,d_3,d_4)-A_{\mathrm q}\).
The shifted equilibrium is \(\bar w-r_\star\), not \(\bar w\).

The stationary node means and variances are
\[
\begin{array}{c|rr}
\text{Orbit}&\text{Mean}&\text{Variance}\\ \hline
\mathcal O_1& 0.834557&0.39060671\\
\mathcal O_2& 0.283080&0.32083706\\
\mathcal O_3&-0.284960&0.28745151\\
\mathcal O_4&-0.795703&0.36360148
\end{array}
\]
The invariant mean and covariance give identical marginals within each
orbit, as in Theorem~\ref{thm:symmetry_breaking_main}. The distinct
means establish separation of all four reference orbits; compatibility
alone does not require distinct orbit marginals.

To connect this stationary structure with transverse stability, set
\(z_\perp=T_{\perp}w=T_{\perp}(w-r_0)\), since
\(T_{\perp}r_0=0\). The affine forcing vanishes, giving
\[
dz_\perp=K_{\perp}z_\perp\,dt+dB_\perp.
\]
Its stationary covariance is
\[
\Omega_{\perp}
=\operatorname{diag}\left(
\frac{1}{2(1.3)},\frac{1}{2(1.6)}I_2,
\frac{1}{2(1.8)}I_3,\frac{1}{2(1.4)}I_4\right),
\]
with \(\operatorname{tr}\Omega_{\perp}\approx3.27152015\).
For deterministic initialization,
\[
\mathbb E z_\perp(t) =e^{K_{\perp}t}z_\perp(0),\quad \operatorname{Cov}(z_\perp(t))-\Omega_{\perp} =-e^{K_{\perp}t}\Omega_{\perp}e^{K_{\perp}^\top t}.
\]
The plotted errors are
\[
\frac{\|\mathbb E z_\perp(t)\|_2}{\|z_\perp(0)\|_2},
\qquad
\frac{\|\operatorname{Cov}(z_\perp(t))-\Omega_\perp\|_2}
{\|\Omega_\perp\|_2},
\]
where the matrix norm is the spectral norm and the chosen initialization
has nonzero transverse mean. These normalized errors are bounded
by $e^{-1.3t}$ and $e^{-2.6t}$, respectively, verifying the rates in
Theorem~\ref{thm:transverse_moment_rates_main} for this symmetric drift.
Figure~\ref{fig:app_temporal_validation}(a) displays the corresponding moment errors and envelopes.

More directly, for distinct $i,j\in\mathcal O_k$, let
$\delta_{ij}=w_i-w_j$ and $a_k=1+0.1d_k$. Their identical neighbors
and local coefficients give
\[
d\delta_{ij}=-a_k\delta_{ij}\,dt+\sqrt2\,dB_{ij},\quad \delta_{ij}(t)\sim\mathcal N\!\left( e^{-a_kt}\delta_{ij}(0),\frac{1-e^{-2a_kt}}{a_k}\right),
\]
where $B_{ij}=(B_i-B_j)/\sqrt2$ is standard Brownian motion.
The limiting difference law is centered with variance $1/a_k>0$,
verifying Theorem~\ref{thm:orbit_synchronization_main} without
pathwise agreement. In particular,
\[
\lim_{t\to\infty}\mathbb E\|P_{\perp}w(t)\|^2
=\operatorname{tr}\Omega_{\perp}\approx3.27152015>0.
\]

For scalar Gaussian marginals, the distance in
Figure~\ref{fig:orbit-marginal-w2} is
\[
W_2^2\!\left(\mathcal N(\mu_i,v_i),\mathcal N(\mu_j,v_j)\right)
=(\mu_i-\mu_j)^2+(\sqrt{v_i}-\sqrt{v_j})^2.
\]
From a deterministic initial state, the reference covariance is
invariant under within-orbit permutations at every time, so same-orbit
variances agree and
\[
W_2\!\left(\operatorname{Law}(w_i(t)),\operatorname{Law}(w_j(t))\right)
=e^{-a_kt}|w_i(0)-w_j(0)|,\qquad i,j\in\mathcal O_k.
\]
The maximum within-orbit distance therefore vanishes, whereas the minimum
cross-orbit distance tends to $0.515099$. The latter is a marginal
$W_2$ distance, distinct from the parameter gap $0.586893$ used below.
Overlapping marginal spreads in Figure~\ref{fig:exact_stationary_marginals}
are consistent with this positive separation.

\subsection{Reference risk advantages}
\label{app:four_orbit_risk}

For the reference, let $e_0=M_0^{-1}r_0-r_0$.
Equation~\eqref{eq:dt-risk-definition} gives the coupling-bias and
stationary-variance contributions
\begin{equation}
\mathcal E_0
=\frac{\|e_0\|^2}{2N}+\frac{\operatorname{tr}(M_0^{-1})}{4N}
\approx0.0119040482+0.1682692161=0.1801732643.
\label{eq:app_four_orbit_exact_risk}
\end{equation}
Matched independent training sets $C=0$ and keeps the same
$Q_i,r_i,\sigma_i,\eta$, giving $\mathcal E_{\rm ind}^0=1/4$.
The best single shared parameter is
$u_0^\star=N^{-1}\sum_i r_{0,i}\approx-0.185714$, with
\[
\mathcal H_{\rm one}^0
=\frac1{2N}\sum_i(r_{0,i}-u_0^\star)^2
\approx0.2556122449.
\]
This is the optimum over one common parameter, not a benchmark allowing
node-specific predictors.

To check Theorem~\ref{thm:dt-learning-robustness}, note that
$Q_{0,i}=1>0$, $H=1$, and $\Sigma_0\Sigma_0^\top=I$.
With $b_0^\star=-0.1L_A r_0$, $d_0=1$, and
$\beta_0=\lambda_{\max}(M_0)\approx2.243823$, its coefficient bound is
\[
g_0=\frac{\operatorname{tr}(0.1L_A)}{4N\beta_0}
-\frac{\|0.1L_A r_0\|^2}{2N}
\approx0.0604835026-0.0290571429=0.0314263597.
\]
Thus $g_0>\max\{0,\mathcal E_{\rm ind}^0-\mathcal H_{\rm one}^0\}=0$.
Both risk advantages are certified at the reference:
\[
\begin{array}{l|rr}
\text{Comparison}&\text{Exact gain}&\text{Coefficient lower bound}\\ \hline
\mathcal E_{\rm ind}^0-\mathcal E_0&0.0698267357&0.0314263597\\
\mathcal H_{\rm one}^0-\mathcal E_0&0.0754389806&0.0370386046
\end{array}
\]
The theorem therefore guarantees persistence in a sufficiently small
uniform perturbation neighborhood.

The noise level matters. If only the common noise amplitude is changed
to $s$, then
$\mathcal E_0(s)\approx0.0119040482+0.1682692161s^2$ and
$\mathcal E_{\rm ind}^0(s)=s^2/4$, while $\mathcal H_{\rm one}^0$
is unchanged. Writing $A_g=\operatorname{tr}(0.1L_A)/(4N\beta_0)$ and
$B_g=\|0.1L_A r_0\|^2/(2N)$, the sufficient condition holds on
\begin{equation}
\sqrt{B_g/A_g}<s<
\sqrt{(\mathcal H_{\rm one}^0-B_g)/(1/4-A_g)},
\qquad s\in(0.693119\ldots,1.093361\ldots).
\label{eq:app_four_orbit_noise_interval}
\end{equation}
These are sufficient-condition boundaries, not sharp risk thresholds.
For example, at $s=0.12$, communication has risk $0.014327>0.0036$
for independent training: its fixed coupling bias exceeds the variance
reduction. The reference system uses $s=1$.

\subsection{Analytical bounds under 
perturbations}
\label{app:four_orbit_perturbation_bounds}

\paragraph{Perturbation family and uniform stability.}
For $0\leq\varepsilon\leq0.2$, consider
\begin{equation}
(C_\varepsilon,Q_\varepsilon,\sigma_\varepsilon,r_\varepsilon)
=(C_0,Q_0,\sigma_0,r_0)
+\frac{\varepsilon}{2}(E,dQ,d\sigma,dr),
\label{eq:app_joint_perturbation_family}
\end{equation}
where
\[
\|E\|_{\mathrm F}=1,\qquad
\max_i(|dQ_i|+|d\sigma_i|+|dr_i|)=1.
\]
The matrix $E$ is symmetric, supported on reference edges, and has
zero sum on each adjacent-orbit block. Each local direction has zero
sum within every reference orbit. Consequently,
\[
\|C_\varepsilon-C_0\|_{\mathrm F}
+\max_i\bigl(|\Delta Q_i|+|\Delta\sigma_i|+|\Delta r_i|\bigr)
=\varepsilon.
\]
Centering preserves block totals, but need not preserve permutation
symmetry or equal weighted degrees within an orbit. All comparisons
retain the reference orbit partition.

Normalization gives $Q_{\varepsilon,i},\sigma_{\varepsilon,i}\in[0.9,1.1]$
and $|r_{\varepsilon,i}|\leq1.1$, so $M_{\rm par}=1.1$ suffices
in Definition~\ref{ass:approx_orbit_compatibility}.
Since \(E\) is symmetric with zero diagonal,
\(|E_{ij}|\leq1/\sqrt2\), so every reference edge satisfies
\[
C_{\varepsilon,ij}\geq0.1-\frac{0.2}{2\sqrt2}>0.
\]
The graph therefore remains connected, symmetric, and nonnegatively
weighted. For every vector \(v\),
\[
-v^\top K_\varepsilon v =\sum_iQ_{\varepsilon,i}v_i^2 +\frac12\sum_{i,j}C_{\varepsilon,ij}(v_i-v_j)^2 \geq0.9\|v\|^2.
\]
Thus \(K_\varepsilon\preceq-\alpha I\), with \(\alpha=0.9\),
uniformly over this normalized perturbation family. This establishes
stability without assuming exact compatibility after perturbation.

\paragraph{Distributional robustness and orbit separation.}
To control the response of the stationary distribution, write
\(L_E=\operatorname{diag}(E\mathbf1)-E\), and let \(\odot\)
denote componentwise multiplication. The coefficient changes are
\[
\Delta K=-\frac{\varepsilon}{2}\bigl(\operatorname{diag}(dQ)+L_E\bigr),\quad \Delta\Sigma=\frac{\varepsilon}{2}\operatorname{diag}(d\sigma),\quad \Delta a=\frac{\varepsilon}{2}(dr+dQ\odot r_0) +\frac{\varepsilon^2}{4}(dQ\odot dr).
\]
Hence
\(\|\Delta K\|_2+\|\Delta a\|_2+\|\Delta\Sigma\|_{\mathrm F}
=O(\varepsilon)\), uniformly over normalized directions.
The quadratic term in \(\Delta a\) arises because both curvature and
minimizer are perturbed.

Let \(\Delta m=m_\varepsilon-m_0\),
\(\Delta\Omega=\Omega_\varepsilon-\Omega_0\), and
\(S_\varepsilon=\Sigma_\varepsilon\Sigma_\varepsilon^\top\).
Subtracting the stationary equations gives
\[
\begin{gathered}
K_\varepsilon\Delta m=-(\Delta a+\Delta K m_0),
\\
K_\varepsilon\Delta\Omega+\Delta\Omega K_\varepsilon^\top
=-\bigl(\Delta K\Omega_0+\Omega_0\Delta K^\top
+S_\varepsilon-S_0\bigr).
\end{gathered}
\]
Since \(\|e^{K_\varepsilon t}\|_2\leq e^{-\alpha t}\),
\begin{equation}
\begin{aligned}
\|\Delta m\|_2&\leq
\frac{\|\Delta a\|_2+\|\Delta K\|_2\|m_0\|_2}{\alpha}=O(\varepsilon),\\
\|\Delta\Omega\|_2&\leq
\frac{2\|\Delta K\|_2\|\Omega_0\|_2+\|S_\varepsilon-S_0\|_2}{2\alpha}
=O(\varepsilon).
\end{aligned}
\label{eq:app_moment_robustness}
\end{equation}
These estimates verify the moment bounds in
Theorem~\ref{thm:robust_orbit_auto-personalization}(ii) for this
symmetric model, without a perturbed orbit decomposition.

For distributional robustness, couple the perturbed and reference
processes using the same Brownian motion and the same initial value
drawn from a common square-integrable law \(\nu\), independently of
that Brownian motion. Their difference \(Y=w_\varepsilon-w_0\) satisfies
\[
dY=\bigl(K_\varepsilon Y+\Delta K w_0+\Delta a\bigr)dt
+\Delta\Sigma\,dB.
\]
The stable reference has
\(\sup_{t\geq0}\mathbb E\|w_0(t)\|^2<\infty\).
It\^o's formula and Young's inequality yield
\[
\frac{d}{dt}\mathbb E\|Y(t)\|^2
\leq-\alpha\mathbb E\|Y(t)\|^2+C\varepsilon^2,
\]
where \(C\) may depend on the initial second moment but is independent
of \(t\), \(\varepsilon\), and the normalized direction. Since \(Y(0)=0\),
\[
W_2\bigl(\operatorname{Law}(w_\varepsilon(t)), \operatorname{Law}(w_0(t))\bigr) \leq\sqrt{\mathbb E\|Y(t)\|^2} \leq\sqrt{C/\alpha}\,\varepsilon.
\]
Passing to the invariant laws gives
\begin{equation}
W_2(\varrho_\varepsilon,\varrho_0)=O(\varepsilon).
\label{eq:app_stationary_w2}
\end{equation}
This proves the uniform-in-time and stationary $O(\varepsilon)$ bounds
in Theorem~\ref{thm:robust_orbit_auto-personalization}(i)--(ii)
for this family. For the stationary node-difference law
\(\zeta_{ij}^\varepsilon
=\operatorname{Law}_{\varrho_\varepsilon}(w_i-w_j)\),
the \(\sqrt2\)-Lipschitz difference map implies
\[
W_2(\zeta_{ij}^\varepsilon,\zeta_{ij}^0) \leq\sqrt2W_2(\varrho_\varepsilon,\varrho_0).
\]
Node projection and the triangle inequality also give, for
$i,j\in\mathcal O_k$,
\[
W_2(\nu_i^\varepsilon,\nu_j^\varepsilon)
\leq2W_2(\varrho_\varepsilon,\varrho_0)=O(\varepsilon),
\qquad \nu_i^\varepsilon=\operatorname{Law}_{\varrho_\varepsilon}(w_i).
\]
These verify both conclusions for within-orbit distributions in
Theorem~\ref{thm:robust_orbit_auto-personalization}(iii), without
requiring the difference law to concentrate at zero.
Unlike the exact reference, perturbed stationary marginals within a
reference orbit can remain distinct; their distance is controlled at
order $\varepsilon$, rather than required to vanish.

Finally, write
\(\nu_i^\varepsilon=\mathcal N(\mu_i^\varepsilon,\Xi_i^\varepsilon)\),
where \(\mu_i^\varepsilon=(m_\varepsilon)_i\) and
\(\Xi_i^\varepsilon=(\Omega_\varepsilon)_{ii}\) is the variance.
Using \(d_{\mathrm{par}}(\nu,\nu')=|\mu-\mu'|+|\Xi-\Xi'|\), define
\[
D_{\mathrm{in}}(\varepsilon) =\max_k\max_{i,j\in\mathcal O_k} d_{\mathrm{par}}(\nu_i^\varepsilon,\nu_j^\varepsilon),\quad D_{\mathrm{out}}(\varepsilon) =\min_{k<\ell}\min_{i\in\mathcal O_k,\,j\in\mathcal O_\ell} d_{\mathrm{par}}(\nu_i^\varepsilon,\nu_j^\varepsilon).
\]
The maximal displacement of a node marginal from its reference is
\[
\rho_\varepsilon
=\max_i d_{\mathrm{par}}(\nu_i^\varepsilon,\nu_i^0)
\leq\|\Delta m\|_2+\|\Delta\Omega\|_2=O(\varepsilon).
\]
With \(\underline\Delta_0=D_{\mathrm{out}}(0)\approx0.586893\), the triangle inequality
and exact within-orbit equality imply
\begin{equation}
D_{\mathrm{in}}(\varepsilon)\leq2\rho_\varepsilon,\qquad D_{\mathrm{out}}(\varepsilon)\geq\underline\Delta_0-2\rho_\varepsilon,\quad \mathcal M_{\mathrm{orb}}(\varepsilon) =D_{\mathrm{out}}(\varepsilon)-D_{\mathrm{in}}(\varepsilon) \geq\underline\Delta_0-4\rho_\varepsilon.
\label{eq:app_margin_certificate}
\end{equation}
This instance-specific margin bound follows the argument of
Corollary~\ref{cor:orbit_separation_margin}. Thus
$D_{\mathrm{in}}=O(\varepsilon)$, and separation is guaranteed
whenever \(4\rho_\varepsilon<\underline\Delta_0\). This is a sufficient condition,
not a sharp perturbation threshold. Positive margin orders parameter
distances; it does not imply disjoint supports of Gaussian marginals.

\paragraph{Perturbed risks and coefficient certificates.}
For the matched comparisons in Appendix~\ref{app:four_orbit_risk},
the perturbed stationary risks are
\[
\mathcal E_\varepsilon =\frac1{2N}\sum_iQ_{\varepsilon,i} \left[(m_{\varepsilon,i}-r_{\varepsilon,i})^2 +(\Omega_\varepsilon)_{ii}\right],\quad \mathcal E_{\rm ind}^\varepsilon =\frac1{4N}\sum_i\sigma_{\varepsilon,i}^2.
\]
The optimal shared parameter and its risk are
\[
u_\varepsilon^\star =\frac{\sum_iQ_{\varepsilon,i}r_{\varepsilon,i}} {\sum_iQ_{\varepsilon,i}},\quad \mathcal H_{\rm one}^\varepsilon =\frac1{2N}\sum_iQ_{\varepsilon,i} (r_{\varepsilon,i}-u_\varepsilon^\star)^2.
\]

To obtain a sufficient coefficient certificate, set
\(\mathsf Q_\varepsilon=\operatorname{diag}(Q_\varepsilon)\),
\(\mathsf S_\varepsilon=L_{C_\varepsilon}\), and
\(b_\varepsilon^\star=-L_{C_\varepsilon}r_\varepsilon\).
Taking the trace of the stationary covariance equation yields the
exact gain identity
\begin{equation}
\mathcal E_{\rm ind}^\varepsilon-\mathcal E_\varepsilon =\frac{\operatorname{tr}(\mathsf S_\varepsilon\Omega_\varepsilon)}{2N} \quad-\frac{(m_\varepsilon-r_\varepsilon)^\top \mathsf Q_\varepsilon(m_\varepsilon-r_\varepsilon)}{2N}.
\label{eq:app_exact_gain_identity}
\end{equation}
The two terms are the reduction in stationary fluctuation cost and
the coupling bias, respectively.
With \(d_\varepsilon=\min_i\sigma_{\varepsilon,i}^2\) and
\(\beta_\varepsilon=\lambda_{\max}(-K_\varepsilon)\), define
\[
g(\theta_\varepsilon)
=\frac{d_\varepsilon\operatorname{tr}(\mathsf S_\varepsilon)}
{4N\beta_\varepsilon}
-\frac{(b_\varepsilon^\star)^\top\mathsf Q_\varepsilon^{-1}
b_\varepsilon^\star}{2N}.
\]
The proof of the coefficient bound applies because
\(\mathsf Q_\varepsilon\succ0\), \(\mathsf S_\varepsilon\succeq0\),
and the dynamics are stable, without requiring exact compatibility.
Consequently,
\[
\mathcal E_{\rm ind}^\varepsilon-\mathcal E_\varepsilon \geq g(\theta_\varepsilon),\quad \mathcal H_{\rm one}^\varepsilon-\mathcal E_\varepsilon \geq g(\theta_\varepsilon) +\mathcal H_{\rm one}^\varepsilon-\mathcal E_{\rm ind}^\varepsilon.
\]
Define the joint certificate by
\begin{equation}
\gamma(\theta_\varepsilon)
=g(\theta_\varepsilon)
-\max\{0,\mathcal E_{\rm ind}^\varepsilon
-\mathcal H_{\rm one}^\varepsilon\}.
\label{eq:app_joint_risk_certificate}
\end{equation}
A positive value certifies both advantages for the specified
coefficients. This certificate is distinct from the perturbative
bound \(g_0-C_{\rm risk}\varepsilon\).

\section{Additional details of synthetic experiments}
\label{app:synthetic_experiments}

We evaluate the quadratic reference, its joint perturbations, and a
nonquadratic extension on the common four-orbit graph.
All settings and numerical values below refer to the default full run.
The quadratic calculations illustrate the analytical conclusions in
Appendix~\ref{app:four_orbit_quadratic}; the nonquadratic experiment
provides a qualitative comparison beyond the affine setting.

\subsection{Simulation and evaluation}
\label{app:four_orbit_simulation_settings}

The synthetic studies use prescribed population drifts and diffusion,
rather than training on a finite dataset. The graph, quadratic losses,
and reference coefficients are specified in
Appendix~\ref{app:four_orbit_quadratic}.
Only the reference and one representative perturbed affine system are
simulated with $600$ independent full-system realizations each;
the $120$-configuration perturbation sweep uses analytical Gaussian laws.

For the two simulated affine systems, the time horizon is $T=30$ and
the common deterministic initial vector is drawn once as
\[
\texttt{default\_rng(2027).normal(0,1.5,14)}.
\]
Simulation seeds are $30001$ for the reference and $30002$ for the
representative perturbation ($\varepsilon=0.1$, direction seed $101$;
see Appendix~\ref{app:four_orbit_robustness}).
At observation interval $h=0.02$, exact Ornstein--Uhlenbeck transitions are
\[
\begin{gathered}
w_{n+1}=m+F_h(w_n-m)+\xi_n,\qquad F_h=e^{Kh},\qquad
\xi_n\stackrel{\mathrm{iid}}{\sim}\mathcal N(0,V_h),
\\
V_h=\int_0^h e^{Ks}\Sigma\Sigma^\top e^{K^\top s}\,ds
=\Omega-F_h\Omega F_h^\top.
\end{gathered}
\]
Thus $h$ introduces no time-discretization bias in the affine simulations.
For each configuration, stationary moments solve
$Km+a=0$ and $K\Omega+\Omega K^\top+\Sigma\Sigma^\top=0$.
The exact finite-time parameters are
\[
m(t)=m+e^{Kt}(w(0)-m),\qquad
V(t)=\Omega-e^{Kt}\Omega e^{K^\top t}.
\]

Stationary orbit separation is evaluated using $d_{\rm par}$,
$D_{\rm in}$, $D_{\rm out}$, and $\mathcal M_{\rm orb}$ from
Appendix~\ref{app:four_orbit_perturbation_bounds}; scalar marginal
$W_2$ uses the standard-deviation difference as specified in
Appendix~\ref{app:four_orbit_quadratic}.
For the time-dependent marginals $\nu_i(t)=\operatorname{Law}(w_i(t))$,
define
\[
D_{\mathrm{in},W_2}(t) =\max_k\max_{i,j\in\mathcal O_k}W_2(\nu_i(t),\nu_j(t)),\quad D_{\mathrm{out},W_2}(t) =\min_{k<\ell}\min_{i\in\mathcal O_k,\,j\in\mathcal O_\ell} W_2(\nu_i(t),\nu_j(t)).
\]
These temporal marginal distances are distinct from the stationary
parameter distances $D_{\mathrm{in}}$ and $D_{\mathrm{out}}$.
For the representative perturbation, we evaluate all $14$ node means
and both temporal distances from the exact finite-time laws on
$t=0,0.05,\ldots,30$.
The joint stationary-law distance is
\[
W_2^2(\varrho_\varepsilon,\varrho_0)
=\|m_\varepsilon-m_0\|^2+
\operatorname{tr}\!\left(
\Omega_\varepsilon+\Omega_0
-2(\Omega_0^{1/2}\Omega_\varepsilon\Omega_0^{1/2})^{1/2}
\right).
\]
For numerical evaluation of Gaussian $W_2$, the code uses the equivalent
orthogonal-Procrustes formula, computing the covariance contribution
from an SVD-aligned difference of matrix square roots. This avoids
subtracting nearly equal covariance traces.
All stationary sweep distances, risks, and certificates are evaluated
from the analytical moments and coefficients, up to numerical
linear-algebra error.

For each simulated affine case, terminal Monte Carlo checks compare
$14$ means and $14$ variances with their exact finite-time values,
using Bonferroni-adjusted $95\%$ intervals over all $28$ quantities.
For risk, each independent sampling unit is an entire $14$-node
realization: losses are first averaged over nodes and then over the
$600$ realizations. Approximate $95\%$ Student-$t$ intervals are
computed across realizations, preserving dependence among nodes.

\subsection{Quadratic validation and robustness}
\label{app:four_orbit_quadratic_results}

\paragraph{Reference validation.}
\label{app:four_orbit_reference_validation}

Figure~\ref{fig:exact_stationary_marginals} uses the $600$ reference
endpoints, with analytical stationary means and $\pm2$ standard
deviations; these bars are not confidence intervals.
Figure~\ref{fig:orbit-marginal-w2} instead uses exact finite-time
Gaussian marginals on $t=0,0.05,\ldots,12$; the code additionally
evaluates the terminal law at $t=30$.
The within-orbit difference formulas in
Appendix~\ref{app:four_orbit_quadratic} agree with the full moment
calculation to within $8\times10^{-15}$ on this grid.
The largest within-orbit marginal $W_2$ distance decreases towards zero, while
the smallest cross-orbit distance approaches $0.515099$, consistently
with the reference stationary law.
Figure~\ref{fig:app_temporal_validation}(a) additionally shows the
normalized transverse mean norm and covariance error on
$t=0,0.05,\ldots,8$, together with the bounds $e^{-1.3t}$ and
$e^{-2.6t}$ derived in Appendix~\ref{app:four_orbit_quadratic}.

The estimated terminal reference risk is $0.182051$, with approximate
$95\%$ Monte Carlo interval $[0.176366,0.187735]$, which contains
the exact stationary risk $0.180173$.
The computed joint $W_2$ distance from the terminal law to stationarity
is below $2\times10^{-13}$, and the absolute finite-time risk bias is
below $10^{-15}$. These checks therefore primarily reflect sampling
variability. The analytical comparisons with independent training and
the best shared parameter are given in
Appendix~\ref{app:four_orbit_risk}.

\paragraph{Perturbation results.}
\label{app:four_orbit_robustness}

We generate directions for the family in
\eqref{eq:app_joint_perturbation_family} as follows.
For each direction, entries of each forward adjacent-orbit block are
drawn independently and uniformly from $[-1,1]$, centered by the block
mean, and reflected to obtain a symmetric matrix $E$.
We then divide $E$ by its Frobenius norm.
The three local direction columns $(dQ,d\sigma,dr)$ are drawn uniformly
from $[-1,1]$,
centered within each reference orbit, weighted by $(0.30,0.10,0.60)$,
and divided by the largest node-wise sum of absolute values.

The $20$ direction seeds are $101,\ldots,120$, each reused at
$\varepsilon\in\{0.001,0.003,0.01,0.03,0.1,0.2\}$, giving
$120$ evaluated configurations.

The default-run results are
\[
\begin{array}{l|rrr}
\text{Setting}&D_{\mathrm{in}}&D_{\mathrm{out}}&\mathcal M_{\mathrm{orb}}\\ \hline
\text{Exact reference}&0&0.586893&0.586893\\
\varepsilon=0.1,\ \text{seed }101&0.040085&0.551993&0.511908
\end{array}
\]
The reference within-orbit discrepancy is exactly zero; numerical
residuals are below $10^{-14}$.
All \(120\) sampled systems have positive margins, with minimum
\(0.341617\). The lower certificate in
\eqref{eq:app_margin_certificate} is also positive in all $120$
evaluated configurations,
with minimum
\(0.253167\). These certificates concern the evaluated coefficients,
not the worst case over all admissible perturbations.

\begin{figure}[t]
    \centering
    \captionsetup{font=small,skip=3pt}
    \captionsetup[subfigure]{font=small,labelfont=bf,skip=1pt}
    \begin{subfigure}[b]{0.44\linewidth}
        \centering
        \includegraphics[width=\linewidth]{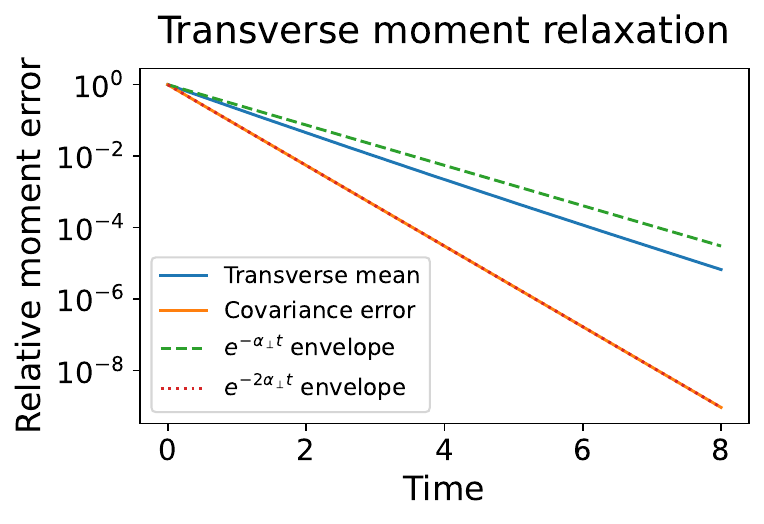}
        \caption{Reference transverse relaxation}
    \end{subfigure}\hfill
    \begin{subfigure}[b]{0.44\linewidth}
        \centering
        \includegraphics[width=\linewidth]{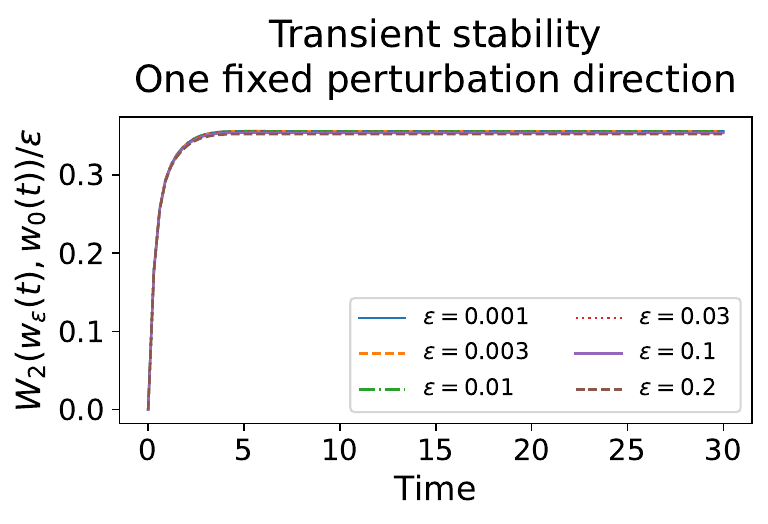}
        \caption{Perturbed joint-law stability}
    \end{subfigure}

    \smallskip
    \begin{subfigure}[b]{0.44\linewidth}
        \centering
        \includegraphics[width=\linewidth]{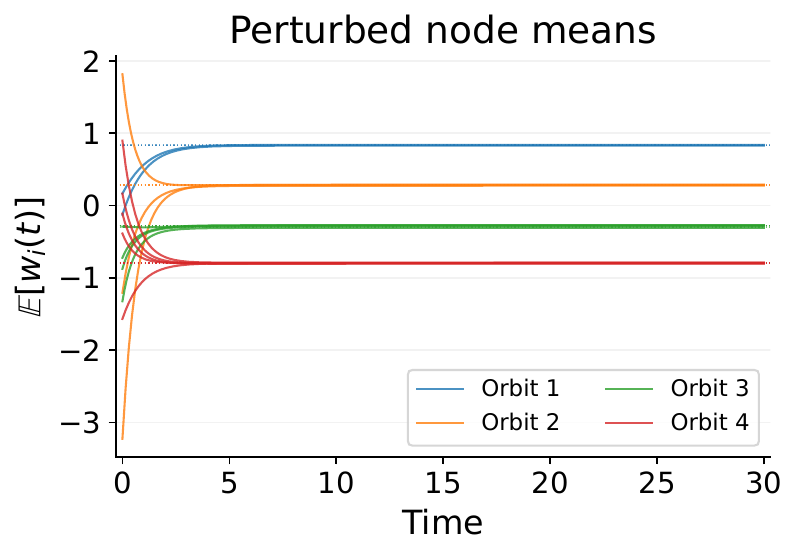}
        \caption{Perturbed node means}
    \end{subfigure}\hfill
    \begin{subfigure}[b]{0.44\linewidth}
        \centering
        \includegraphics[width=\linewidth]{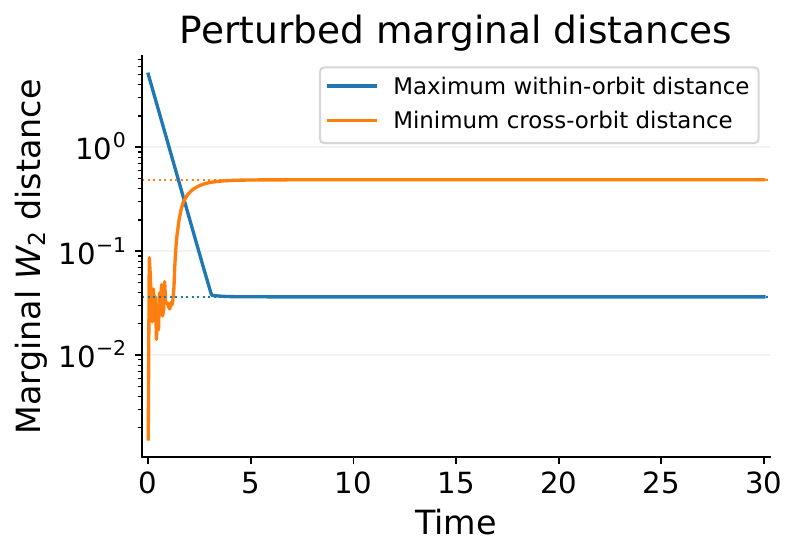}
        \caption{Perturbed marginal distances}
    \end{subfigure}
    \caption{Temporal validation from exact affine moments.
    (a) Reference transverse errors and exponential bounds.
    (b) Joint-law $W_2/\varepsilon$ for seed $101$ at six defect levels.
    (c)--(d) Perturbed means and marginal distances
    ($\varepsilon=0.1$, seed $101$); dotted lines are stationary limits.
    Same-orbit means share a color.}
    \label{fig:app_temporal_validation}
\end{figure}

\begin{figure}[t]
    \centering
    \captionsetup{font=small,skip=3pt}
    \captionsetup[subfigure]{font=small,labelfont=bf,skip=1pt}
    \begin{subfigure}[b]{0.44\linewidth}
        \centering
        \includegraphics[width=\linewidth]{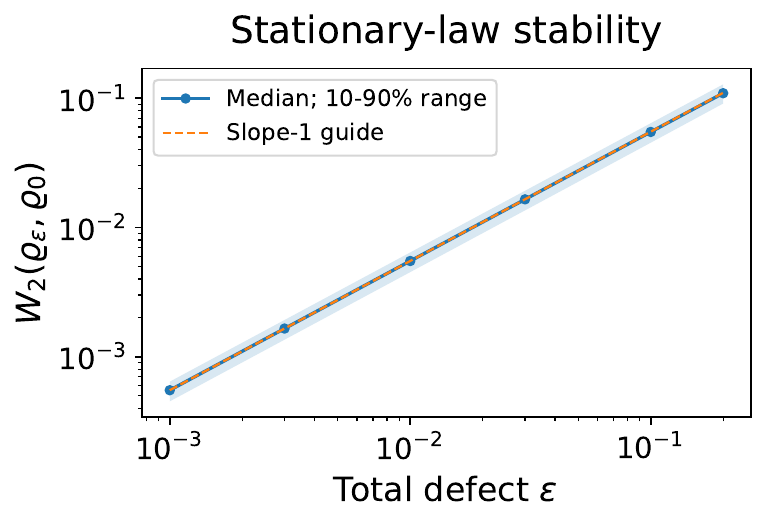}
        \caption{Joint stationary-law distance}
    \end{subfigure}\hfill
    \begin{subfigure}[b]{0.44\linewidth}
        \centering
        \includegraphics[width=\linewidth]{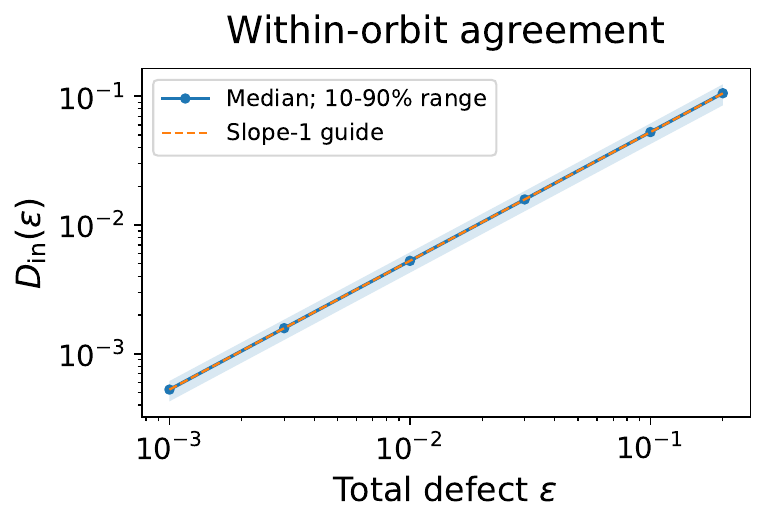}
        \caption{Within-orbit parameter discrepancy}
    \end{subfigure}
    \caption{Stationary perturbation responses.
    Solid curves and shading show medians and 10th--90th percentiles
    over $20$ fixed directions. Dashed lines are slope-one guides,
    not theoretical bounds.}
    \label{fig:app_scaling}
\end{figure}

Figure~\ref{fig:app_temporal_validation}(b) evaluates the exact
finite-time joint-law distance from each perturbed system to the
reference, divided by $\varepsilon$, for direction seed $101$ and
all six perturbation magnitudes. It uses $101$ equally spaced times in
$[0,30]$ and the common initial state. This finite-grid diagnostic
illustrates the uniform-in-time bound proved in
Appendix~\ref{app:four_orbit_perturbation_bounds}.
Panels (c)--(d) show the $14$ exact mean curves and the marginal
$W_2$ distances for the representative perturbation.
The within-orbit and cross-orbit marginal distances approach
$0.036367$ and $0.486425$, respectively. Thus the within-orbit limit
is small but nonzero, while the cross-orbit limit remains separated.

Figure~\ref{fig:app_scaling} compares the stationary joint $W_2$
displacement and the largest within-orbit parameter discrepancy with
slope-one guides. For each fixed direction, ordinary least squares of
$\log(\text{response})$ on $\log\varepsilon$ at
$\varepsilon\in\{0.001,0.003,0.01,0.03\}$ gives descriptive slopes;
their medians are $0.999867$ and $0.999877$, respectively.
These descriptive slopes are consistent with
\eqref{eq:app_stationary_w2} and \eqref{eq:app_moment_robustness};
they do not estimate uniform constants or a sharp threshold.
Figure~\ref{fig:orbit_separation_margin} reports the corresponding
positive separation margins. Shading in both figures describes variation
across the $20$ fixed directions, not confidence intervals.

\paragraph{Risk robustness.}
\label{app:four_orbit_risk_robustness}

For each of the $120$ configurations, we evaluate the risks and matched
benchmarks in Appendix~\ref{app:four_orbit_perturbation_bounds} and
the joint certificate \eqref{eq:app_joint_risk_certificate}.
Figure~\ref{fig:app_risk_robustness} summarizes the two gains and the
certificate across perturbation magnitudes.

\begin{figure}[t]
    \centering
    \captionsetup{font=small,skip=3pt}
    \includegraphics[width=\linewidth]{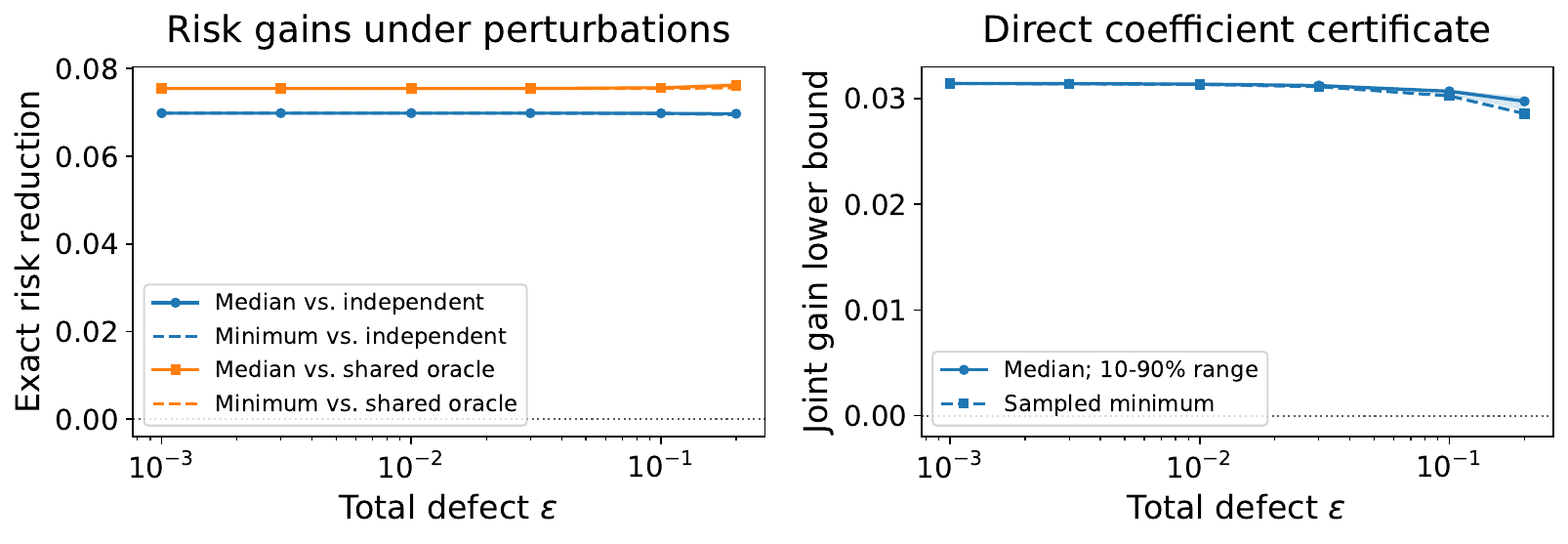}
    \caption{Stationary risk gains (left) and direct joint certificate
    $\gamma(\theta_\varepsilon)$ (right). Solid curves, shading, and dashed
    curves show medians, 10th--90th percentiles, and sampled minima over
    $20$ fixed directions. These summaries are not confidence bounds
    or worst-case guarantees.}
    \label{fig:app_risk_robustness}
\end{figure}

All \(120\) sampled configurations retain both positive exact gains
and positive direct joint certificates. Across these configurations, the
smallest gains relative to independent training and the shared
oracle are \(0.06948214\) and \(0.07543898\), respectively, and
\(\min\gamma(\theta_\varepsilon)\approx0.02856793\). For the representative
\(\varepsilon=0.1\), direction seed \(101\), the values are
\[
\begin{array}{l|r}
\text{Quantity}&\text{Value}\\ \hline
\mathcal E_\varepsilon&0.18020345\\
\mathcal E_{\rm ind}^\varepsilon&0.25000179\\
\mathcal H_{\rm one}^\varepsilon&0.25571117\\
\mathcal E_{\rm ind}^\varepsilon-\mathcal E_\varepsilon&0.06979834\\
\mathcal H_{\rm one}^\varepsilon-\mathcal E_\varepsilon&0.07550772\\
\gamma(\theta_\varepsilon)&0.03091489
\end{array}
\]
The largest absolute residual in the gain identity
\eqref{eq:app_exact_gain_identity} over the sampled configurations is
below \(4\times10^{-16}\). These risk
calculations and the positive orbit margins above verify the two
properties separately for the same evaluated family.

For the representative perturbation, the terminal Monte Carlo risk is
$0.174827$, with approximate $95\%$ interval $[0.169285,0.180370]$.
This contains its exact stationary risk $0.180203$.
The computed joint $W_2$ distance from the terminal law to stationarity
is below $2\times10^{-13}$, and the absolute finite-time risk bias is
below $10^{-15}$, so the discrepancy is dominated by sampling variability.

\subsection{Nonquadratic experiment}
\label{app:four_orbit_nonquadratic}

We retain \(C_0=0.1A\), \(\eta=H=1\), \(\sigma_i=1\), and the
reference minimizers, but replace the quadratic objectives by
\[
\mathcal L_i(w)=\log(1+e^{w-r_i})
+\log(1+e^{-(w-r_i)})+\frac14(w-r_i)^2.
\]
Their derivatives satisfy
\[
\mathcal L_i'(w)=\tanh((w-r_i)/2)+\tfrac12(w-r_i),\quad \mathcal L_i''(w)=\tfrac12\operatorname{sech}^2((w-r_i)/2)+\tfrac12.
\]
Hence \(\mathcal L_i'(r_i)=0\), \(\mathcal L_i''(r_i)=1\), and
\(\tfrac12\leq\mathcal L_i''(w)\leq1\).
These globally strongly convex losses match the quadratic reference
minimizers and their local curvatures, but the full drift is no longer
affine.

For this symmetric, constant-noise construction, define
\[
V(w)=\sum_i\mathcal L_i(w_i)+\tfrac12w^\top L_{C_0}w.
\]
Then \(dw=-\nabla V(w)dt+dB\), with
\(\nabla^2V\succeq\tfrac12I\).
The density \(Z^{-1}\exp[-2V(w)]\) is normalizable and has zero
stationary probability current. Its within-orbit permutation invariance
implies equal stationary node marginals within each orbit.
This check is specific to the present gradient system and does not
extend the affine Gaussian perturbation theorem to nonlinear dynamics.
Cross-orbit separation is assessed numerically rather than inferred
from strong convexity alone.

We use the initialization in
Appendix~\ref{app:four_orbit_simulation_settings} and \(600\)
independent realizations
to \(T=30\), with simulation seed \(30006\).
Euler--Maruyama at step \(h\) uses
\[
w_i^{n+1}=w_i^n+h\left[-\mathcal L_i'(w_i^n) +\sum_j(C_0)_{ij}(w_j^n-w_i^n)\right] +\sqrt h\,\xi_i^n,
\]
where \(\xi_i^n\stackrel{\mathrm{iid}}{\sim}\mathcal N(0,1)\).
The steps \(h=0.01\) and \(h/2=0.005\) are paired: each coarse
Brownian increment is the sum of its two fine increments.
We record all $600$ fine-step realizations at
$t=0,0.1,\ldots,30$; terminal samples also come from this simulation.
At each recorded time, node means average over the $600$ realizations.
The temporal plots use no smoothing or uncertainty bands.

Let $\widehat\nu_i(t)$ be the empirical marginal of the $600$
realizations at time $t$. Using sorted samples $x_{i,(s)}(t)$, we compute
\[
W_2^2(\widehat\nu_i(t),\widehat\nu_j(t))
=\frac1{600}\sum_{s=1}^{600}
\bigl(x_{i,(s)}(t)-x_{j,(s)}(t)\bigr)^2.
\]
Taking the same within-orbit maxima and cross-orbit minima as in
Appendix~\ref{app:four_orbit_simulation_settings} gives
$\widehat D_{\mathrm{in},W_2}(t)$ and
$\widehat D_{\mathrm{out},W_2}(t)$.

The paired terminal RMS difference per node is
\[
\left[\frac{1}{600\cdot14} \sum_{s=1}^{600}\sum_{i=1}^{14} \bigl(w_i^{(s,h)}(T)-w_i^{(s,h/2)}(T)\bigr)^2\right]^{1/2} \approx2.66058\times10^{-3}.
\]
This diagnoses discretization sensitivity; it is not a bound on the
fine-step error or a certificate of stationarity at \(T=30\).

Figure~\ref{fig:app_nonquadratic} displays the node-wise sample means,
temporal marginal distances, and terminal marginals.
The node means form four groups after the initial transient, with
persistent Monte Carlo fluctuations. The comparison concerns
distributions and their means; individual noisy trajectories need not
agree or converge to fixed locations.

\begin{figure}[b]
    \centering
    \captionsetup{font=small,skip=3pt}
    \captionsetup[subfigure]{font=small,labelfont=bf,skip=1pt}
    \begin{subfigure}[b]{0.32\linewidth}
        \centering
        \includegraphics[width=\linewidth]{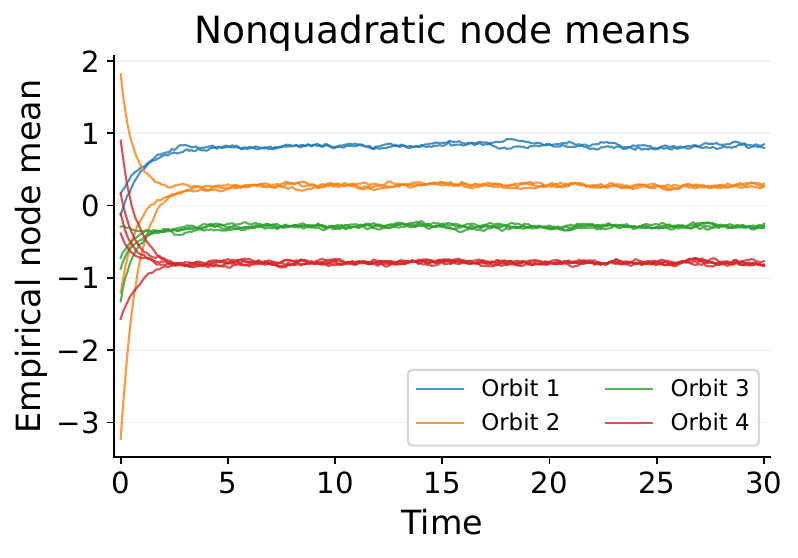}
        \caption{Node-wise sample means}
    \end{subfigure}\hfill
    \begin{subfigure}[b]{0.32\linewidth}
        \centering
        \includegraphics[width=\linewidth]{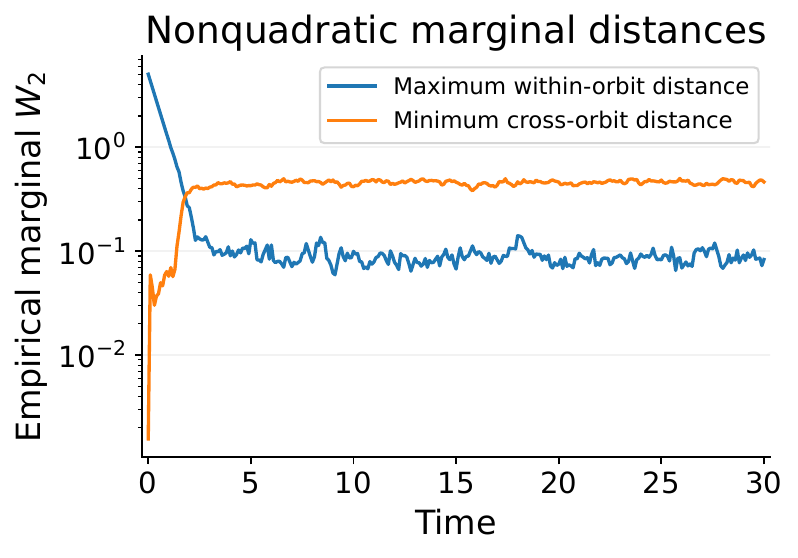}
        \caption{Empirical marginal distances}
    \end{subfigure}\hfill
    \begin{subfigure}[b]{0.32\linewidth}
        \centering
        \includegraphics[width=\linewidth]{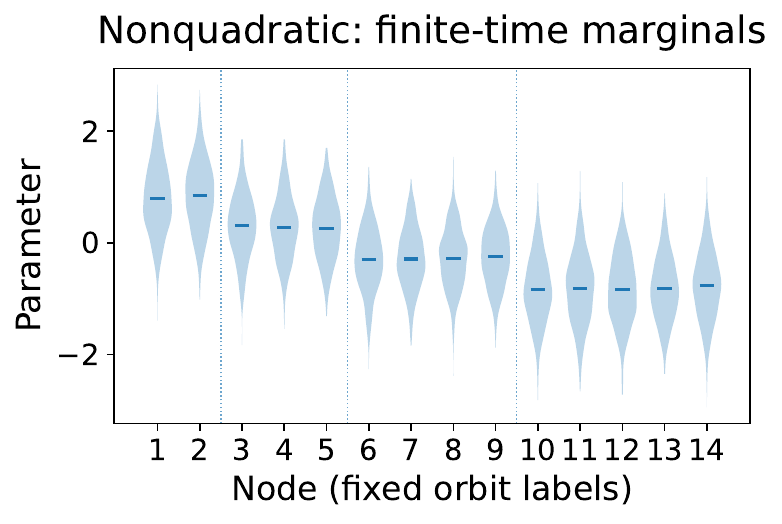}
        \caption{Terminal node marginals}
    \end{subfigure}
    \caption{Nonquadratic finite-time grouping.
    (a) Node-wise sample means; colors denote orbits.
    (b) Maximum within-orbit and minimum cross-orbit empirical $W_2$.
    (c) Terminal violin plots at $T=30$, with sample means and orbit separators.
    Empirical fluctuations include finite-sample error.}
    \label{fig:app_nonquadratic}
\end{figure}

At $T=30$, the distances and their separation margin are
\[
\widehat D_{\mathrm{in},W_2}\approx0.083102,\qquad \widehat D_{\mathrm{out},W_2}\approx0.464167,\qquad \widehat{\mathcal M}_{W_2}\approx0.381064.
\]
These empirical Wasserstein quantities differ from the Gaussian
parameter metric used in the quadratic study. They include finite-time,
sampling, and discretization effects, so nonzero within-orbit empirical
discrepancies do not contradict equality of stationary marginals.
These finite-horizon diagnostics do not establish stationarity or
monotone convergence. No Gaussian stationary approximation or nonlinear
joint-perturbation bound is tested. The experiment provides a qualitative extension of
orbit grouping beyond the affine setting.

\section{Real-world experiments}
\label{app:real_world_experiments}

\subsection{Experimental setup}

These experiments examine whether local training with graph communication produces relatively stable prediction clusters aligned with graph orbits. The experiments cross two datasets, two architectures, four conditions, and three training seeds. All \(48\) runs completed \(200\) epochs and contain evaluation records at epochs \(180,185,190,195,200\). The four dataset--architecture configurations are MNIST/MLP-LN, MNIST/ResNet-20-GN, CIFAR-10/MLP-LN, and CIFAR-10/ResNet-20-GN.

For each of MNIST and CIFAR-10, \(5{,}000\) images from the official training set, with \(500\) per class, form a validation set. The local training allocation comprises \(14\) disjoint bundles of \(2{,}000\) images, disjoint from validation. The common evaluation probe is the official test set of \(10{,}000\) images. Validation is used for communication-strength selection; test predictions are used for final evaluation. The split seed is \(2027\).

The heterogeneous allocation assigns the class counts in Table~\ref{tab:realworld_profiles} to the reference orbits of sizes \((2,3,4,5)\). Nodes in the same orbit share class counts but use distinct images. Thus the construction aligns label frequencies with graph orbits without making finite-sample objectives or gradient noise identical. In the fully symmetric condition each node instead receives \(200\) images per class. Balanced and heterogeneous allocations share the same validation indices; their training allocations are constructed separately.

\begin{table}[t]
\centering\small
\setlength{\tabcolsep}{4pt}
\caption{Training images per class at each node in the aligned allocation. Class indices follow the dataset's native order. Each row sums to \(2{,}000\).}
\label{tab:realworld_profiles}
\begin{tabular}{crrrrrrrrrr}
\toprule
Orbit & 0 & 1 & 2 & 3 & 4 & 5 & 6 & 7 & 8 & 9 \\
\midrule
\(\mathcal O_1\) & 600 & 500 & 200 & 160 & 140 & 120 & 100 & 80 & 60 & 40 \\
\(\mathcal O_2\) & 60 & 40 & 600 & 500 & 200 & 160 & 140 & 120 & 100 & 80 \\
\(\mathcal O_3\) & 100 & 80 & 60 & 40 & 600 & 500 & 200 & 160 & 140 & 120 \\
\(\mathcal O_4\) & 200 & 160 & 140 & 120 & 100 & 80 & 60 & 40 & 600 & 500 \\
\bottomrule
\end{tabular}
\end{table}

MNIST inputs have shape \(1\times28\times28\) and are normalized by mean \(0.1307\) and standard deviation \(0.3081\). CIFAR-10 inputs have shape \(3\times32\times32\) and use channel means \((0.4914,0.4822,0.4465)\) and standard deviations \((0.2470,0.2435,0.2616)\). CIFAR-10 training uses random crops with four-pixel padding and random horizontal flips. Validation and test transforms contain no random augmentation. Sampling and augmentation streams are attached to the data bundle and run seed, with epoch and image indexing as appropriate. They move with the bundle in the misaligned condition.

The MLP uses hidden widths \(256\) and \(128\), each followed by LayerNorm and ReLU, and a ten-class output layer. ResNet-20 starts with a \(16\)-channel convolution, followed by three stages of widths \(16\), \(32\), and \(64\), each containing three two-convolution residual blocks. The first block of each of the last two stages downsamples its input. Global average pooling and a ten-class linear layer complete the model. GroupNorm uses the largest divisor of the channel count not exceeding eight. Neither architecture uses BatchNorm or dropout.

Each run starts from a common base model copied to all nodes. Gaussian parameter perturbations are centered within each reference orbit and normalized with a common RMS scale. Their scale is \(0.02\) times the larger of the base tensor's RMS and \(0.001\). Orbit-average initial parameters therefore agree up to numerical precision. In the fully symmetric condition centering is over all \(14\) nodes. Orbit labels enter this initialization and the data construction; the gradient updates below use only local data and neighbor parameters.

The reference orbits are \(\mathcal O_1=\{1,2\}\), \(\mathcal O_2=\{3,4,5\}\), \(\mathcal O_3=\{6,7,8,9\}\), and \(\mathcal O_4=\{10,11,12,13,14\}\). Adjacent orbits are connected by complete bipartite graphs, with no other edges. In particular, there are no within-orbit edges. For this blown-up path, edge weights are \(C_{ij}=1/(1+\max\{\deg(i),\deg(j)\})\), with zero weights off edges and on the diagonal. Let \(L_C=\operatorname{diag}(C\mathbf 1)-C\). These weights differ from the \(0.1A\) weights of the synthetic reference. The four conditions are:

\begin{itemize}
\item
  \emph{Main}: the four-orbit graph and orbit-aligned heterogeneous bundles, with communication strength \(\lambda\) selected on validation data.
\item
  \emph{No communication}: the same aligned bundles, with \(\lambda=0\).
\item
  \emph{Misaligned data}: the same graph, bundles, and \(\lambda\) as Main, with the one-based node-to-bundle assignment \((1,10,3,6,11,2,4,7,12,5,8,9,13,14)\).
\item
  \emph{Fully symmetric}: a complete graph and balanced class counts at every node, with communication strength \(\lambda_{\rm sym}=\lambda\|L_{\rm Main}\|_2/\|L_{\rm complete}\|_2\).
\end{itemize}

The complete graph has one automorphism orbit containing all nodes. Its primary agreement diagnostic is global dispersion. Any retained four-group diagnostic uses only the original reference partition. This condition provides a joint comparison of topology and data symmetry. Spectral matching equalizes the largest Laplacian eigenvalue after scaling.

With \(\widehat L_{i,b}\) the mean cross-entropy on node \(i\)'s minibatch, each step uses

\begin{equation}
w_i^{t+1}
=w_i^t-\alpha_t\left[
\nabla\widehat L_{i,b}(w_i^t)
+\lambda\sum_j C_{ij}(w_i^t-w_j^t)
\right].
\label{eq:realworld_update}
\end{equation}

All communication terms use the same pre-update parameter snapshot. Communication is applied at every step to all trainable parameters, including normalization parameters. The protocol uses SGD without momentum or weight decay, batch size \(128\) per node, and \(200\) epochs. The learning rate uses five warmup epochs, a nominal value of \(0.05\), and subsequent cosine decay with minimum learning-rate parameter \(0.0005\). Mixed precision, label smoothing, and gradient clipping are disabled. Evaluation and checkpointing occur every five epochs. Each epoch has \(16\) synchronous steps, including a final minibatch of \(80\) images per node, for \(3{,}200\) steps per run. The training seeds are \(1,2,3\).

Each dataset--architecture combination has its own selection task. The configured pilot uses seed \(0\), \(80\) epochs, and \(\lambda\in\{0,0.03,0.1,0.3\}\), summarized over the final three valid validation checkpoints. Relative to the \(\lambda=0\) pilot, the main rule requires numerical stability, within-orbit dispersion at most \(75\%\) of baseline, between-orbit dispersion at least \(50\%\) of baseline, and an orbit-ensemble accuracy decrease of at most \(0.03\) in absolute units. The smallest positive qualifying coefficient is selected. Each orbit ensemble averages its nodes\textquotesingle{} raw logits before classification; the selection accuracy averages the four orbit-ensemble accuracies equally.

All four configurations satisfied the primary selection rule using only the balanced validation set; no fallback was used. Table~\ref{tab:realworld_lambda} reports the selected communication coefficients. Main and misaligned data use the same coefficient; no communication uses zero. Because the selection rule uses prediction geometry, the test-set results characterize this validation-selected training protocol.

\begin{table}[h]
\centering\small
\setlength{\tabcolsep}{4pt}
\caption{Communication coefficients. Fully symmetric values are rounded for display.}
\label{tab:realworld_lambda}
\begin{tabular}{llcc}
\toprule
Dataset & Model & \(\lambda_{\rm Main}\) & \(\lambda_{\rm sym}\) \\
\midrule
MNIST & MLP-LN & \(0.03\) & \(0.0423251\) \\
MNIST & ResNet-20-GN & \(0.03\) & \(0.0423251\) \\
CIFAR-10 & MLP-LN & \(0.10\) & \(0.141084\) \\
CIFAR-10 & ResNet-20-GN & \(0.10\) & \(0.141084\) \\
\bottomrule
\end{tabular}
\end{table}


\subsection{Additional prediction-space diagnostics}

Let \(Z_i(t)\in\mathbb R^{P\times c}\) contain node \(i\)\textquotesingle s logits at epoch \(t\) on the same ordered test probe, with \(N=14\), \(P=10{,}000\), and \(c=10\). Define the class-centered logits by

\[
(\Phi_i(t))_{pa}
=(Z_i(t))_{pa}
-\frac{1}{c}\sum_{b=1}^{c}(Z_i(t))_{pb},
\qquad a=1,\ldots,c.
\]

Centering removes the common logit offset for each image while retaining logit-scale differences. Suppressing the epoch when all quantities refer to the same checkpoint, the pairwise squared prediction distance is

\begin{equation}
\delta_{ij}
=\frac{\|\Phi_i-\Phi_j\|_{\mathrm F}^2}{Pc}.
\label{eq:realworld_pairwise_distance}
\end{equation}

The heatmaps in the main text use epoch \(200\) and average these distances across the three training seeds. Distances are computed before averaging; the probe and centering convention are the same as for the diagnostics below. Raw distance magnitudes are interpreted separately within each dataset--architecture configuration.

Let \(n_k=|\mathcal O_k|\), \(\bar\Phi_k=n_k^{-1}\sum_{i\in\mathcal O_k}\Phi_i\), and \(\bar\Phi=N^{-1}\sum_i\Phi_i\). Within-orbit dispersion measures deviations of nodes from their orbit centroid, while between-orbit dispersion measures deviations of orbit centroids from the global centroid:

\begin{equation}
\begin{gathered}
D_{\rm within}=\frac{\sum_k\sum_{i\in\mathcal O_k}\|\Phi_i-\bar\Phi_k\|_{\rm F}^2}{NPc},
\qquad
D_{\rm between}=\frac{\sum_k n_k\|\bar\Phi_k-\bar\Phi\|_{\rm F}^2}{NPc},\\[3pt]
D_{\rm total}=D_{\rm within}+D_{\rm between},
\qquad
R_{\rm orbit}=\frac{D_{\rm between}}{D_{\rm total}}.
\end{gathered}
\label{eq:realworld_dispersion}
\end{equation}

Here \(D_{\rm total}=(NPc)^{-1}\sum_i\|\Phi_i-\bar\Phi\|_{\rm F}^2\), and \(R_{\rm orbit}\) is evaluated for \(D_{\rm total}\geq10^{-14}\). The \(n_k\) weights account for unequal orbit sizes. In the fully symmetric condition, \(D_{\rm global}=D_{\rm total}\). The ensemble accuracy used for selection weights the four orbits equally, whereas the dispersion decomposition weights them by node count.

Let \(\mathcal P_{\rm w}\) and \(\mathcal P_{\rm c}\) contain the unordered pairs of distinct nodes within and across reference orbits, respectively. Their sizes are \(|\mathcal P_{\rm w}|=\sum_k\binom{n_k}{2}=20\) and \(|\mathcal P_{\rm c}|=\sum_{k<\ell}n_kn_\ell=71\). The mean pairwise distances and their ratio are

\begin{equation}
P_{\rm within}=\frac{1}{20}\sum_{(i,j)\in\mathcal P_{\rm w}}\delta_{ij},
\quad
P_{\rm cross}=\frac{1}{71}\sum_{(i,j)\in\mathcal P_{\rm c}}\delta_{ij},
\quad
Q=\frac{P_{\rm cross}}{P_{\rm within}}.
\label{eq:realworld_mean_pairwise}
\end{equation}

Each distinct pair has equal weight within its corresponding average. For \(P_{\rm within}>0\), \(Q\) compares mean cross-orbit and within-orbit squared prediction distances, while \(D_{\rm between}/D_{\rm within}\) compares the node-weighted centroid dispersions.

For terminal variation, let \(\mathcal T=\{180,185,190,195,200\}\). For one seed and metric \(x(t)\), set

\[
\bar x=\frac{1}{5}\sum_{t\in\mathcal T}x(t),
\qquad
s_x=\left\{\frac{1}{4}\sum_{t\in\mathcal T}
(x(t)-\bar x)^2\right\}^{1/2}.
\]

Let \(\widehat b_x\) be the ordinary least-squares slope from regressing the five values \(x(t)\) on epoch \(t\), with an intercept. The percentage diagnostics are

\begin{equation}
\operatorname{CV}(x)=100\frac{s_x}{|\bar x|},
\qquad
\operatorname{Drift}(x)=100\frac{20|\widehat b_x|}{|\bar x|}.
\label{eq:realworld_terminal_stability}
\end{equation}

These diagnostics are evaluated for \(D_{\rm within}\), \(D_{\rm between}\), and \(R_{\rm orbit}\) when all five values are finite and \(|\bar x|\geq10^{-14}\). Here \(s_x\) measures within-run variation across checkpoints, while across-seed SD summarizes differences among training runs. Drift measures the fitted linear change over \(20\) epochs. Values of both diagnostics below \(5\%\) define the descriptive criterion for finite-time terminal stability.

\begin{table}[t]
\centering\small
\setlength{\tabcolsep}{4pt}
\caption{Main prediction geometry (mean \(\pm\) sample SD over three seeds). Panel A averages the five terminal checkpoints per seed, with \(\overline R_{\rm orbit}\) averaging checkpoint ratios. Panel B uses epoch \(200\), with \(Q\) computed within each seed before aggregation.}
\label{tab:realworld_terminal}
\parbox{\linewidth}{\centering \textbf{A. Terminal dispersion.}}\par\smallskip
\begin{tabular}{llccc}
\toprule
Dataset & Model & \(\overline D_{\rm within}\) & \(\overline D_{\rm between}\) & \(\overline R_{\rm orbit}\) \\
\midrule
MNIST & MLP-LN & \(0.2460\pm0.0096\) & \(0.3092\pm0.0073\) & \(0.5569\pm0.0062\) \\
MNIST & ResNet-20-GN & \(0.2189\pm0.0229\) & \(0.3236\pm0.0180\) & \(0.5969\pm0.0319\) \\
CIFAR-10 & MLP-LN & \(0.2181\pm0.0045\) & \(0.5094\pm0.0098\) & \(0.7002\pm0.0026\) \\
CIFAR-10 & ResNet-20-GN & \(0.2689\pm0.0054\) & \(0.5641\pm0.0020\) & \(0.6772\pm0.0037\) \\
\bottomrule
\end{tabular}
\par\medskip
\parbox{\linewidth}{\centering \textbf{B. Average node-pair squared distances at epoch \(200\).}}\par\smallskip
\begin{tabular}{llccc}
\toprule
Dataset & Model & \(P_{\rm within}\) & \(P_{\rm cross}\) & \(Q\) \\
\midrule
MNIST & MLP-LN & \(0.6928\pm0.0288\) & \(1.3377\pm0.0364\) & \(1.932\pm0.036\) \\
MNIST & ResNet-20-GN & \(0.6256\pm0.0675\) & \(1.3210\pm0.0512\) & \(2.124\pm0.189\) \\
CIFAR-10 & MLP-LN & \(0.6153\pm0.0132\) & \(1.8347\pm0.0351\) & \(2.982\pm0.026\) \\
CIFAR-10 & ResNet-20-GN & \(0.7627\pm0.0204\) & \(2.0999\pm0.0300\) & \(2.754\pm0.052\) \\
\bottomrule
\end{tabular}
\end{table}

For recorded epochs \(a<b\leq200\), define centroid distances and
endpoint displacements by
\begin{equation}
\begin{gathered}
B_{k\ell}(t)
=\frac{\|\bar\Phi_k(t)-\bar\Phi_\ell(t)\|_{\rm F}^2}{Pc},
\quad
T_k(a,b)
=\frac{\|\bar\Phi_k(b)-\bar\Phi_k(a)\|_{\rm F}^2}{Pc},\\[3pt]
\rho(a,b)
=\sqrt{\frac{\max_k T_k(a,b)}
{\min_{k<\ell}B_{k\ell}(200)}}.
\end{gathered}
\label{eq:realworld_centroid_motion}
\end{equation}

The centroid geometry satisfies
\(D_{\rm between}(t)
=N^{-2}\sum_{k<\ell}n_kn_\ell B_{k\ell}(t)\).
The ratio \(\rho(a,b)\) is evaluated when
\(\min_{k<\ell}B_{k\ell}(200)>0\).
The square root puts displacement and separation on a distance scale:
\(100\rho(a,b)\) is the largest orbit-centroid net displacement
from epoch \(a\) to epoch \(b\), expressed as a percentage of the
smallest centroid separation at epoch \(200\).
Each \(T_k(a,b)\) measures net displacement between its stated endpoints.
The intervals \((a,b)=(175,200)\) and \((195,200)\) describe changes
over \(25\) and \(5\) epochs, respectively; they are distinct from
the \(180\)--\(200\) window used for
Eq.~\eqref{eq:realworld_terminal_stability}.
All twelve Main runs have recorded geometry and displacement data
for these comparisons.
For matched-seed control comparisons, let an overbar denote the mean over \(\mathcal T\), and define

\begin{equation}
\begin{gathered}
\Delta_{B,s}=\overline D_{{\rm within},s}^{\rm NoComm}-\overline D_{{\rm within},s}^{\rm Main},
\quad
\Delta_{C,s}=\overline D_{{\rm within},s}^{\rm Misaligned}-\overline D_{{\rm within},s}^{\rm Main},\\[3pt]
\Delta_{D,s}=\overline D_{{\rm total},s}^{\rm Main}-\overline D_{{\rm global},s}^{\rm FullySym}.
\end{gathered}
\label{eq:realworld_control_contrasts}
\end{equation}

The corresponding relative reductions, reported in percent, are

\begin{equation}
\eta_{B,s}=\frac{100\Delta_{B,s}}{\overline D_{{\rm within},s}^{\rm NoComm}},
\qquad
\eta_{C,s}=\frac{100\Delta_{C,s}}{\overline D_{{\rm within},s}^{\rm Misaligned}},
\qquad
\eta_{D,s}=\frac{100\Delta_{D,s}}{\overline D_{{\rm total},s}^{\rm Main}}.
\label{eq:realworld_control_reductions}
\end{equation}

Thus \(\eta_B\) and \(\eta_C\) measure the reduction in Main within-orbit dispersion relative to each control, while \(\eta_D\) measures the reduction in fully symmetric global dispersion relative to Main.

\begin{figure}[t]
  \centering
  \captionsetup{
    font=small,
    labelfont=bf,
    skip=4pt
  }
\includegraphics[width=0.98\linewidth]
    {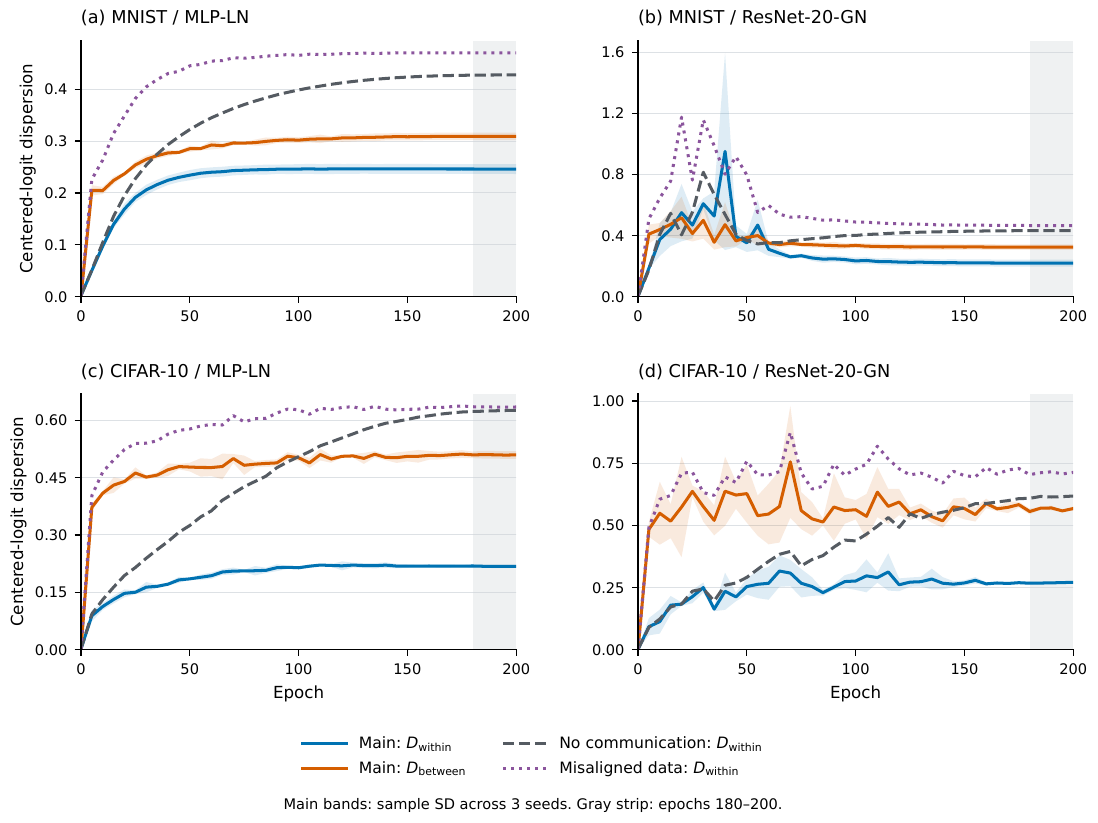}

  \caption{Prediction-space dispersion during training.
  Curves show three-seed means; bands indicate
  \(\pm\) one sample SD for Main.
  The gray strip marks epochs \(180\)--\(200\).
  Vertical scales vary across panels.}
  \label{fig:realworld_training_dynamics}
\end{figure}

\begin{table}[t]
  \centering
  \small
  \setlength{\tabcolsep}{5pt}
  \renewcommand{\arraystretch}{1.15}
  \captionsetup{
    font=small,
    labelfont=bf,
    position=top,
    skip=5pt
  }

  \caption{Matched-seed relative reductions in terminal dispersion
  (mean \(\pm\) sample SD over three seeds, in percent),
  defined in Eq.~\eqref{eq:realworld_control_reductions}.}
  \label{tab:realworld_controls}

  \begin{tabular*}{\linewidth}
    {@{\extracolsep{\fill}}llccc@{}}
    \toprule
    Dataset & Model
    & \shortstack{\(\eta_B\)\\Main vs.\ NoComm}
    & \shortstack{\(\eta_C\)\\Main vs.\ Misaligned}
    & \shortstack{\(\eta_D\)\\FullySym vs.\ Main} \\
    \midrule
    MNIST & MLP-LN
    & \(42.45\pm1.01\)
    & \(47.69\pm0.63\)
    & \(53.25\pm0.82\) \\
    MNIST & ResNet-20-GN
    & \(49.34\pm3.17\)
    & \(53.00\pm3.64\)
    & \(69.42\pm2.46\) \\
    CIFAR-10 & MLP-LN
    & \(65.07\pm0.57\)
    & \(65.64\pm0.44\)
    & \(80.24\pm0.37\) \\
    CIFAR-10 & ResNet-20-GN
    & \(56.24\pm0.33\)
    & \(62.13\pm0.84\)
    & \(77.29\pm1.43\) \\
    \bottomrule
  \end{tabular*}
\end{table}

\subsection{Additional results}
\label{app:realworld_additional_results}

The repetition unit is a full \(14\)-node training run with a distinct training seed, conditional on the fixed data split and validation-selected communication coefficient. Nodes and checkpoints are components of a run. For terminal dispersions and \(R_{\rm orbit}\), we first average the five evaluations in \(\mathcal T\) within each seed and then report the mean and sample SD across seeds \(1,2,3\). In particular, \(\overline R_{{\rm orbit},s}=\frac15\sum_{t\in\mathcal T}D_{{\rm between},s}(t)/D_{{\rm total},s}(t)\) is the mean of checkpoint ratios. Pairwise averages and \(Q\) use epoch \(200\); \(Q\), the control contrasts, and the relative reductions are computed within each seed before aggregation. CV, fitted drift, and \(\rho\) are reported as ranges across the three seeds. Across-seed SD describes training-seed variation on the fixed split.

\paragraph{Orbit-aligned prediction geometry.}
Table~\ref{tab:realworld_terminal} summarizes the Main prediction geometry.
Across configurations, terminal mean within-orbit dispersion ranges from
\(0.2181\) to \(0.2689\), while between-orbit dispersion is larger in all
twelve Main runs. The configuration means of \(\overline R_{\rm orbit}\)
range from \(0.5569\) to \(0.7002\).
At epoch \(200\), the mean cross/within squared-distance ratios are
\(1.932\), \(2.124\), \(2.982\), and \(2.754\) for MNIST/MLP-LN,
MNIST/ResNet-20-GN, CIFAR-10/MLP-LN, and CIFAR-10/ResNet-20-GN,
respectively. Each seed has a larger average cross-orbit distance than
average within-orbit distance. These results quantify the relatively
similar predictions within reference orbits and the cross-orbit
differences displayed by the heatmaps, despite the absence of
within-orbit communication edges.



\paragraph{Communication and data-topology alignment.}
Figure~\ref{fig:realworld_training_dynamics} tracks Main
within- and between-orbit dispersion together with within-orbit
dispersion under the no-communication and misaligned-data controls.
During late training, Main retains lower within-orbit dispersion
than either control while preserving cross-orbit differences.
Table~\ref{tab:realworld_controls} quantifies the terminal reductions,
which are positive in every matched-seed comparison.
Under fully symmetric topology and balanced data, global dispersion
is also lower than Main's total dispersion, indicating stronger
global predictive agreement under this joint change.





\paragraph{Finite-time terminal stability.}
Figure~\ref{fig:realworld_centroid_motion} shows an overall
reduction in normalized five-epoch centroid displacements
toward the final checkpoint, with local fluctuations.
Across all twelve Main runs, the largest orbit-centroid net
displacement over epochs \(195\)--\(200\) is at most
\(3.1536\%\) of the smallest final centroid separation.
Table~\ref{tab:realworld_terminal_variation} complements
these interval-wise measurements with terminal dispersion
variation and net displacement over epochs \(175\)--\(200\).
All \(36\) seed--metric combinations have terminal CV
and fitted drift below \(5\%\).
Together, these observations support finite-time terminal
stability under the stated learning-rate schedule.


\begin{figure}[t]
  \centering
  \captionsetup{
    font=small,
    labelfont=bf,
    skip=4pt
  }

  \includegraphics[width=0.98\linewidth]
    {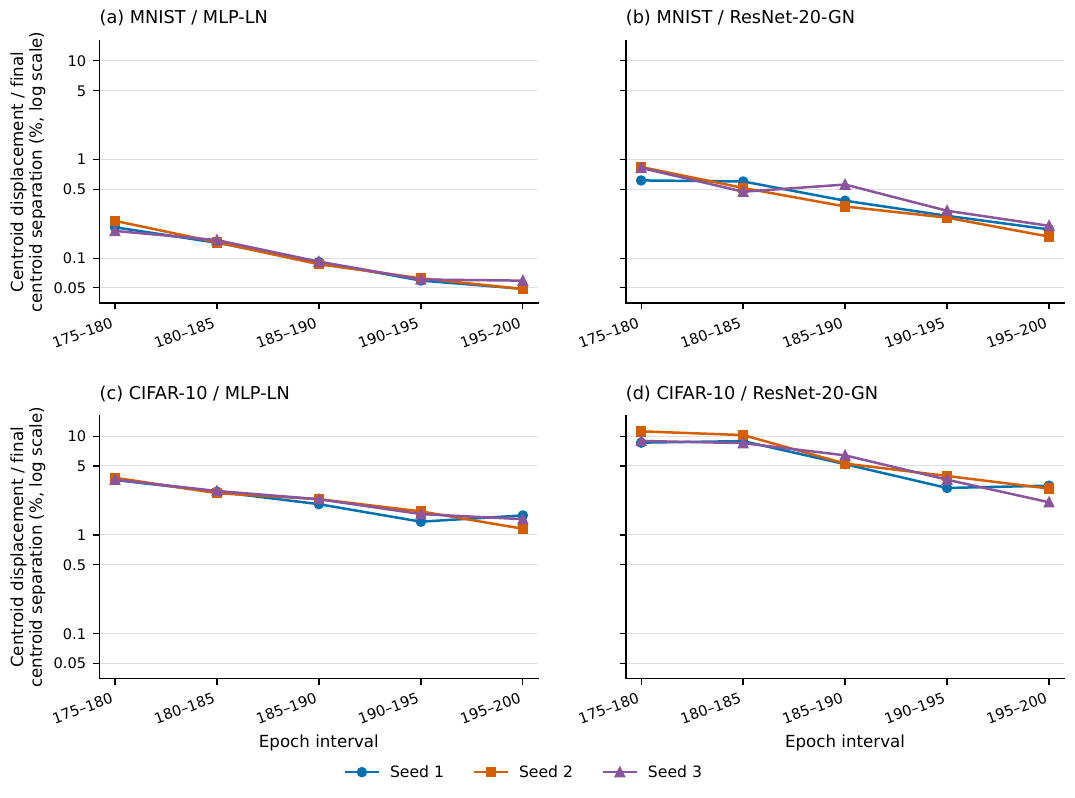}

  \caption{Main orbit-centroid motion over consecutive
  five-epoch intervals.
  Points show \(100\rho(a,a+5)\) for each seed, normalized
  by the smallest centroid separation at epoch \(200\).
  The vertical axis is logarithmic.}
  \label{fig:realworld_centroid_motion}
\end{figure}



\begin{table}[t]
  \centering
  \small
  \setlength{\tabcolsep}{5pt}
  \renewcommand{\arraystretch}{1.15}
  \captionsetup{
    font=small,
    labelfont=bf,
    position=top,
    skip=5pt
  }

  \caption{Main terminal stability, in percent.
  CV and fitted drift are maximized over three seeds and
  the metrics \(D_{\rm within}\), \(D_{\rm between}\),
  and \(R_{\rm orbit}\), using the five checkpoints
  at epochs \(180\)--\(200\).
  The last column gives the three-seed range of normalized
  net centroid displacement over epochs \(175\)--\(200\).}
  \label{tab:realworld_terminal_variation}

  \begin{tabular*}{\linewidth}
    {@{\extracolsep{\fill}}llccc@{}}
    \toprule
    Dataset & Model
    & Max.\ CV
    & Max.\ drift
    & \(100\rho(175,200)\) \\
    \midrule
    MNIST & MLP-LN
    & \(0.0312\)
    & \(0.0465\)
    & \(0.4494\)--\(0.4652\) \\
    MNIST & ResNet-20-GN
    & \(0.0572\)
    & \(0.1385\)
    & \(0.8536\)--\(1.1952\) \\
    CIFAR-10 & MLP-LN
    & \(0.3090\)
    & \(0.6350\)
    & \(4.0586\)--\(4.5155\) \\
    CIFAR-10 & ResNet-20-GN
    & \(2.6920\)
    & \(4.0136\)
    & \(6.6050\)--\(9.3083\) \\
    \bottomrule
  \end{tabular*}
\end{table}



\end{document}